\newcommand{\NN}{{\sf I\kern-0.14emN}}   
\newcommand{\ZZ}{{\sf Z\kern-0.45emZ}}   
\newcommand{\QQQ}{{\sf C\kern-0.48emQ}}   
\newcommand{\RR}{{\sf I\kern-0.14emR}}   
\newcommand{\syncc}{~\stackrel{\textstyle \rhd\kern-0.57em\lhd}{\scriptstyle L}~}
\newtheorem{definition}{Definition}[section]
\newtheorem{theorem}{Theorem}[section]
\newtheorem{assumption}{Assumption}[section]
\newtheorem{lemma}{Lemma}[section]
\newtheorem{example}{Example}[section]
\newtheorem{remark}{Remark}[section]
\newtheorem{corollary}{Corollary}[section]
\newtheorem{claim}{Claim}[section]
\DeclareMathOperator*{\argmax}{argmax}
\DeclareMathOperator*{\diam}{diam}
\DeclareMathOperator{\regret}{Regret}
\DeclareMathOperator*{\full}{full}
\title{Representative Action Selection for Large Action Space: From Bandits to MDPs}
\author{
Quan Zhou\thanks{Work done while at Technion. Corresponding author: \texttt{quan.zhou@campus.technion.ac.il}} \\
Technion \\
\and
Shie Mannor \\
Technion
}
\date{}
\begin{document}

\maketitle

\begin{abstract}
We study the problem of selecting a small, representative action subset from an extremely large action space shared across a family of reinforcement learning (RL) environments---a fundamental challenge in applications like inventory management and recommendation systems, where direct learning over the entire space is intractable.
Our goal is to identify a fixed subset of actions that, for every environment in the family, contains a near-optimal action, thereby enabling efficient learning without exhaustively evaluating all actions.

This work extends our prior results for meta-bandits to the more general setting of Markov Decision Processes (MDPs). We prove that our existing algorithm achieves performance comparable to using the full action space. This theoretical guarantee is established under a relaxed, non-centered sub-Gaussian process model, which accommodates greater environmental heterogeneity. Consequently, our approach provides a computationally and sample-efficient solution for large-scale combinatorial decision-making under uncertainty.
\end{abstract}

\section{Introduction}

Reinforcement Learning (RL)—spanning problems from multi-armed bandits (MAB) \citep{lai1985asymptotically} to Markov decision processes (MDP)—is fundamentally concerned with learning optimal actions through interaction with an environment to maximize long-term return. We study a meta-framework for RL defined by a family of environments that share a common, but extremely large action space. The high cardinality of this shared action space renders direct learning computationally intractable or sample-inefficient.
Our objective is to determine whether a small set of representative actions can be algorithmically identified such that its performance is comparable to the full action space across the entire family.
Prior work in RL typically seeks a single optimal action or a policy for a fixed environment. In contrast, our goal is to select a subset of actions that is likely to contain a near-optimal action across an entire family of environments. This constitutes a large-scale stochastic combinatorial problem, which arises naturally in applications like inventory management, robotics, and online recommendation systems, where decisions must be made from a vast space of possibilities that cannot be exhaustively evaluated.

To build intuition, consider a pharmacy that must decide which medicines to stock. The space of possible drugs (actions) is vast, and each patient (environment) has a unique response profile. In the bandit setting, a patient arrives needing a single, effective prescription. The pharmacist's goal is to identify the drug in stock that will have the best effect on the patient (i.e., yield the highest reward), without prior knowledge of each drug's efficacy for that specific individual. In the MDP setting, the pharmacist must manage treatments for patients whose health states evolve over time. 
Crucially, the problem exhibits structure: drugs with similar compositions have similar effects, and patients with similar health profiles respond to treatments similarly. We capture these correlations via a Gaussian process.
This structure makes subset selection both natural and necessary; instead of stocking every drug, the pharmacy can maintain a small, representative set. The optimal subset must cover diverse therapeutic needs—stocking both a painkiller and an antibiotic—while also adapting to demand, prioritizing common treatments like flu medicine during winter and deprioritizing drugs for rare conditions.



In our prior work \cite{zhou2025representative}, we introduced a representative action subset selection framework for meta-bandits, designed for a family of bandit environments. 
In this formulation, if a decision-maker plays an action \(a \in \mathcal{A}_{\full}\), they observe a stochastic reward. The expected reward is defined as the inner product \(Q^*(a):=\langle \phi(a), \theta \rangle\), where \(\phi: \mathcal{A}_{\full} \to \mathbb{R}^n\) is a fixed but unknown feature mapping and the environment parameter \(\theta \in \mathbb{R}^n\) is drawn from an unknown multivariate Gaussian distribution.
Consequently, the collection of random variables \(\{Q^*(a)\}_{a \in \mathcal{A}_{\full}}\) forms a Gaussian process (GP) \citep{vershynin2018high}.
We then proposed a simple algorithm, inspired by \cite{haussler1986epsilon}, to construct a measure-theoretic \(\epsilon\)-net over the full action space. This algorithm outputs a small set of representative actions whose performance is comparable to that of universal coverage, while avoiding both the combinatorial explosion of evaluating all possible subsets and the curse of dimensionality associated with forming a universal coverage directly. The resulting computational efficiency enables application to extremely large action spaces.

This paper extends our framework from bandit environments to the more general setting of MDPs.
This extension introduces an additional state space $\mathcal{X}$ and allows the fixed feature mapping $\phi$ to be state-dependent.
In a given MDP, we work with the optimal action-value function, $Q^*(x,a)$. This is a standard concept in RL that quantifies the maximum long-term return achievable by taking action $a$ in state $x$ and acting optimally thereafter. We model this function as:
\[Q^*(x,a):=f\Big(\phi(x,a),\theta\Big),\;\textnormal{where  }\phi:\mathcal{X}\times\mathcal{A}_{\full}\to\mathbb{R}^n.\]
Here, $f$ is a fixed but unknown function, and the MDP environment parameter \(\theta \in \mathbb{R}^n\) is drawn from a distribution. 
A notable special case of this framework is the linear MDP \citep{jin2020provably}, where $f(\theta,\phi):=\langle\theta,\phi\rangle$.  
We assume that the collection \(\{Q^*(a)\}_{a \in\mathcal{X}\times\mathcal{A}*{\full}}\) forms a (possibly non-centered) sub-Gaussian process. This assumption accommodates significant heterogeneity across the family of MDPs, allowing for variations in state transition dynamics, reward functions, and initial state distributions.
Our results demonstrate that the algorithm retains its coverage and efficiency guarantees despite the added complexity of the state space and the more general sub-Gaussian structure.

\paragraph{Contributions}
We extend the work of \cite{zhou2025representative} in the following key aspects:

\begin{itemize}
\item \textbf{Extended Framework to MDPs:} We generalize the previous meta-bandit framework (a family of linear bandits) to a meta-MDP framework (a family of linear MDPs). The meta-bandit setting is recovered as a special case. Furthermore, we show that the linear structure is not strictly necessary; our results hold as long as each state-action pair has a fixed feature representation across the family of environments.

\item \textbf{Relaxed Theoretical Assumptions:}
Our theoretical analysis operates under significantly weaker assumptions.

\begin{itemize}
\item \textbf{Non-centered Sub-Gaussian Process:} Prior work modeled the expected rewards as a centered Gaussian process. We relax this to a more general non-centered sub-Gaussian process, accommodating a wider range of environment distributions.

\item \textbf{Near-Optimal Policies:} We relax the algorithmic requirement of the meta-bandit framework, which necessitated finding the exact optimal action for every environment. In this work, we provide a regret bound that accommodates the use of near-optimal policies.

\end{itemize}

\item \textbf{Analysis of Large State Spaces:} A challenge in moving from bandits to MDPs is the computational complexity of large state spaces. We analyze how approximation errors from state abstraction propagate into the final regret bound. (We note that this consideration does not apply to the meta-bandit special case.)
\end{itemize}

\paragraph{Organization.}
Section~\ref{sec:meta-bandit} reviews the meta-bandit framework and presents a general algorithm that is simple to state and applies to both bandit and MDP environments.
Section~\ref{sec:preliminaries} introduces the preliminaries for general parametric MDPs and establishes the necessary notation.
Section~\ref{sec:meta-mdp} extends our framework to MDPs, outlining key assumptions and formally defining the learning objective.
Section~\ref{sec:reference} provides theoretical results for a reference action set, which serve as preliminary tools for analyzing our main algorithm.
Section~\ref{sec:theory} presents the main theoretical guarantees for the algorithm, including an analysis of performance in large state spaces.
Section~\ref{sec:examples} illustrates the framework with several concrete examples.

\paragraph{Notation}
Let \(n\) be a natural number, possibly infinite. We write \(\langle \cdot, \cdot \rangle\) for the Euclidean inner product, \(\|\cdot\|_2\) for the Euclidean norm, and \(\|A\|\) for the operator norm of a matrix \(A\). 
For a compact set \(\mathcal{S}\subset \mathbb{R}^n\), \(\diam(\mathcal{S}) := \max_{a,b\in\mathcal{S}} \|a-b\|_2\), \(|\mathcal{S}|\) denotes its cardinality, and \(\Delta(\mathcal{S})\) the set of probability distributions over \(\mathcal{S}\). 
For sets \(\mathcal{S}_1,\mathcal{S}_2\subseteq \mathbb{R}^n\), the Minkowski sum and difference are
\[
\mathcal{S}_1 \oplus \mathcal{S}_2 := \{ x_1 + x_2 : x_1 \in \mathcal{S}_1, x_2 \in \mathcal{S}_2 \}, \quad
\mathcal{S}_1 \ominus \mathcal{S}_2 := \{ x_1-x_2 : x_1 \in \mathcal{S}_1, x_2 \in \mathcal{S}_2 \}.
\]

\subsection{Related Work}

\paragraph{Meta-Learning.}
Meta-learning, or multi-task learning \citep{caruana1997multitask}, studies how agents can quickly adapt to new tasks drawn from a distribution by leveraging experience from previous tasks. 
The theoretical study of meta-learning was initiated by \citet{baxter2000model}, who provided the first formal statistical guarantees for learning across multiple tasks. They modeled the relationship between tasks by assuming the existence of an environment of tasks and an (unknown) environment distribution from which both observed training tasks and future tasks are i.i.d. samples.
Subsequent works developed theoretical guarantees using various techniques, including Rademacher averages \citep{ando2005framework}, analysis of special cases where input data lie in a linear (potentially infinite-dimensional) space with a shared linear preprocessor \citep{maurer2006bounds}, excess risk bounds \citep{maurer2016benefit}, PAC-Bayesian analysis \citep{zakerinia2024more,pentina2014pac}, and information-theoretic methods \citep{chen2021generalization}.

Prominent algorithmic approaches include gradient-based adaptation methods such as MAML \citep{finn2017model}. While most works have demonstrated empirical success (see the survey by \citet{vettoruzzo2024advances}), the majority focus on algorithmic performance and adaptation heuristics, for example, gradient-based methods for updating model parameters for each task. A major obstacle in these methods is the large-scale bi-level optimization problem inherent in popular meta-learning algorithms such as MAML. The associated computational and memory costs can be significant, motivating the development of more practical approaches, particularly for very large-scale problems.

\paragraph{Transfer and Multi-Task RL.}
Meta-RL is a special case of meta-learning where each task corresponds to a reinforcement learning (RL) environment \citep{sutton1998reinforcement,puterman2014markov}. This line of work focuses on exploiting shared structure among tasks to improve learning efficiency.

Several approaches have modeled the environment context as a latent variable (first proposed in \cite{hallak2015contextual}) to capture task-specific variations. Later work modeled the latent context probabilistically \citep{rakelly2019efficient}, enabling adaptation to new tasks based on the inferred context. In terms of shared structure, some works use Latent MDPs \citep{kwon2021rl} and Lipschitz Block Contextual MDPs \citep{sodhani2022block,sodhani2021multi} to analyze performance through task similarity metrics.

Algorithms in this line of work typically focus on inferring the context of a sampled task. In contrast, our paper does not focus on context inference: we provide a fixed subset of actions to use across the family of RL environments. Fast adaptation is enabled by exploring a smaller set of actions.

\paragraph{Policy Compression.}

We identify several distinct approaches to policy compression. The first, commonly known as \emph{policy distillation} \citep{rusu2015policy}, transfers knowledge from a complex teacher policy (or ensemble) into a simpler student policy. This technique aims to create a compact model that mimics the original policy's behavior while being more efficient to deploy—particularly valuable when the teacher is computationally expensive (e.g., a large neural network) and the student must operate under constraints like limited memory or real-time decision-making. Following \cite{ave2025temporal}, this method can be viewed as compressing policy storage requirements.

A second approach, \emph{temporal distillation} \citep{ave2025temporal}, addresses compression through time rather than model complexity. Instead of simplifying the policy network, it employs frame-skipping where actions are repeated across multiple timesteps—a standard practice in high-frame-rate environments.

The third direction, \emph{policy space compression} \citep{mutti2022reward}, aims to identify a minimal set of representative policies whose state-action distributions approximate any achievable distribution under the full action space. This method focuses on a single MDP environment through bi-level optimization. Related work by \cite{tenedini2025parameters} uses autoencoders to compress the parameter space forming the policy space across different tasks. We note that this line of research seeks universal coverage of the policy space, as measured by a divergence between the state-action distributions of any two policies.

\section{Preliminaries: Meta-Bandits}
\label{sec:meta-bandit}

We briefly recall the meta-bandits framework from \cite{zhou2025representative}, which generalizes the standard MAB setting \cite{lai1985asymptotically}. In one MAB, a decision-maker sequentially selects actions and observes stochastic rewards generated from initially unknown distributions. The central challenge is to maximize cumulative reward while balancing the fundamental trade-off between exploration---experimenting with actions to obtain information about their reward distributions---and exploitation---leveraging existing information to select actions with the highest estimated rewards.

Formally, a bandit environment involves an action space $\mathcal{A}_{\full}$, where playing an action $a$ yields a reward with unknown expectation $Q^*(a)$. In our setting, however, the decision-maker only has access to a restricted subset $\mathcal{A} \subset \mathcal{A}_{\full}$. For a given bandit environment parameterized by $\theta$, if the optimal action in $\mathcal{A}_{\full}$ lies within $\mathcal{A}$, the decision-maker benefits from a smaller set of actions to explore while still being able to identify the global optimum. Conversely, if the optimal action lies outside $\mathcal{A}$, the decision-maker inevitably incurs regret due to the unavailability of the global optimal action.
Thus, for a bandit environment $\theta$, we define the regret as the difference in expected reward between having access to the full action space versus being restricted to the subset:
\begin{equation}
\regret := \max_{a\in\mathcal{A}_{\full}} Q^*(a) - \max_{a'\in\mathcal{A}} \;Q^*(a'),
\label{equ:regret-define}
\end{equation}
which depends on the sampled bandit environment and is therefore a random variable. Our objective is to identify a small subset $\mathcal{A}$ that minimizes the expected regret $\mathbb{E}_{\theta}\left[\regret\right]$ over all possible bandit environments, making the underlying optimization both stochastic and combinatorial.

We first restrict our attention to the canonical Gaussian process \citep[Chapter~7]{vershynin2018high}:
\begin{equation}
Q^*(a) := \langle \phi(a), \theta \rangle, \quad \forall a\in\mathcal{A}_{\full},\;\textnormal{ where }\theta \sim \mathcal{N}(0, I).
\label{equ:mu-define}
\end{equation}
\(\phi: \mathcal{A}_{\full} \to \mathbb{R}^n\) is a fixed but unknown feature mapping, and $I$ is the $n$-dimensional identity matrix.
Then, to gain geometric intuition, consider the notion of \textbf{extreme points}. We define the extreme points as those $\phi(a)\in \mathcal{A}_{\full}$ for which there do not exist distinct $a', a^{\dag} \in \mathcal{A}_{\full}$ and $\lambda \in (0,1)$ such that $\phi(a) = \lambda \phi(a') + (1 - \lambda)\phi(a^{\dag})$. 
\begin{example}
By the extreme point theorem, if we select all extreme points---denoted $\mathcal{A} = \{a_1, \dots, a_K\}$---as representatives of the full action space, the regret defined in \eqref{equ:regret-define} is zero. This is because any $a \in \mathcal{A}_{\full}$ can be expressed as a convex combination of the extreme points: $\phi(a) = \lambda_1 \phi(a_1) + \dots + \lambda_K \phi(a_K)$, where $\lambda_i \geq 0$ and $\sum_{i=1}^K \lambda_i = 1$.

Thus, for any $\theta\in\mathbb{R}^n$:
\begin{equation*}
\begin{split}
\langle \phi(a),\theta \rangle
= \sum_{i=1}^K \lambda_i\langle \phi(a_i),\theta\rangle
\leq \sum_{i=1}^K \lambda_i \max_{a'\in\mathcal{A}}\langle \phi(a'),\theta\rangle = \max_{a'\in\mathcal{A}}Q^*(a').
\end{split}
\end{equation*}
By \eqref{equ:mu-define}, it yields
\[\max_{a'\in\mathcal{A}} Q^*(a')\leq\max_{a\in\mathcal{A}_{\full}}Q^*(a)\leq \max_{a'\in\mathcal{A}} Q^*(a'),\]
where the left inequality uses $\mathcal{A}\subseteq\mathcal{A}_{\full}$. Thus, the two quantities $\max_{a\in\mathcal{A}} Q^*(a)$ and $\max_{a\in\mathcal{A}_{\full}}Q^*(a)$ are equal.
\end{example}


This example highlights how a geometric approach can be used to solve the stochastic combination problem.
However, even if one only needs the extreme points, the set of extreme points may still be large, e.g, the extreme points of a Euclidean ball are infinite. To address this, we next introduce the notion of \textbf{$\epsilon$-nets}.
Without loss of generality, we assume \(\mathcal{A}_{\full}\) consists only of the extreme points of \(\mathcal{A}_{\full}\), as they are the only points of interest.

\subsection{Connection to Bayesian Bandit Regret}
This objective in Equation~\eqref{equ:mu-define} is motivated by the classic Bayesian regret in the bandit literature \citep{agrawal2012analysis} that typically scales with the number of available actions.
However, if a subset $\mathcal{A}$ is carefully chosen, the resulting Bayesian bandit regret can be significantly lower.
This is especially beneficial when the action space is large, as even the initialization phase can be computationally expensive.
To see this, we can decompose the Bayesian bandit regret as follows:
\begin{align*}
&\textnormal{Bayesian (Bandit) Regret}:=\mathbb{E}\sum_{t=1}^K \left[\max_{a \in \mathcal{A}_{\full}} Q^*(a)-  Q^*({a_t})\right]\\
&=\mathbb{E}\sum_{t=1}^K \left[\max_{a' \in \mathcal{A}} Q^*(a')\!-\! Q^*(a_t) +\max_{a \in \mathcal{A}_{\full}} Q^*(a)\!-\!\max_{a' \in \mathcal{A}} Q^*(a')\right]\\
&\leq  C\sqrt{|\mathcal{A}|\cdot N\log N} + N\cdot\mathbb{E}_{\theta}[\regret].
\end{align*}
The first equality follows from the definition of Bayesian bandit regret, which is the expected difference in cumulative outcomes over $N$ rounds between always selecting the optimal action and following a decision rule. Let $a_t \in \mathcal{A}$ denote the action chosen by the rule (e.g., Thompson Sampling \citep{agrawal2012analysis}) in round $t$. The expectation is taken over both the randomness in the bandit environments and the actions selected by the decision rule.
The inequality follows from the well-known regret bounds for Thompson Sampling \citep{lattimore2020bandit}\footnote{The dependence on action space cardinality could be weaker when feature vectors are known or incorporated into kernel functions. However, we present this general bound for the most general case where the linear structure in Equation~\ref{equ:mu-define} is not assumed.}, where $C > 0$ is a constant.
Note that if the decision rule has access to the full action space, the Bayesian bandit regret is instead bounded by $C\sqrt{|\mathcal{A}_{\full}| \cdot N \log N}$.


\subsection{Epsilon Nets}
\label{sec:epsilon-nets}

To identify a subset $\mathcal{A}$ that minimizes the expected regret $\mathbb{E}_{\theta}\left[\regret\right]$,  a natural approach is to construct a grid over the action space, where the grid points serve as representative actions. This ensures that for every action in the full space, there exists a representative that is close to it. This idea is formally captured by the notion of a (geometric) $\epsilon$-net.

To proceed, we clarify what we mean by an $\epsilon$-net, as there are at least two definitions: one from a geometric perspective \citep[Chapter~4]{vershynin2018high} and another from a measure-theoretic perspective \citep[Chapter~10]{matousek2013lectures}. 

\begin{itemize}
\item 
A subset \( \mathcal{A} \subseteq \mathcal{A}_{\full} \) is called a \textbf{Geometric \(\epsilon\)-net} if, for all \( a \in \mathcal{A}_{\full} \), there exists \( a' \in \mathcal{A} \) such that  
\[\|a-a'\|_2 < \epsilon. \]  

\item 
Let $\{r_{\ell}\}_{\ell\leq m}$ be a finite partition of the extreme points into disjoint clusters such that $\cup_{\ell\leq m}\; r_{\ell} = \mathcal{A}_{\full}$. 
Given a measure \( q \) assigning a value to each cluster.  
A subset \( \mathcal{A} \subseteq \mathcal{A}_{\full} \) is called a \textbf{Measure-Theoretic \(\epsilon\)-net} with respect to measure \( q \) if, for any cluster \( r_{\ell} \), we have:  
 \[ r_{\ell} \cap \mathcal{A} \neq \emptyset \quad \text{whenever} \quad q(r_{\ell}) > \epsilon. \]  

\end{itemize}


A geometric $\epsilon$-net ensures small regret because if two actions $a, a' \in \mathcal{A}_{\full}$ are close in the Euclidean sense, then the deviation between $Q^*(a)$ and $Q^*(a')$ is small in the $L^2$-sense (i.e., their expected squared difference is small):
\begin{equation}
\|Q^*(a) \!-\! Q^*(a')\|_{L^2} =\!\left( \mathbb{E}(a \!-\! a')^{\top}\theta \theta^{\top}(a \!-\! a')\right)^{1/2}\!=\|a \!-\! a'\|_2, 
\label{equ:L2norm}
\end{equation}
where $\theta^{\top}$ denotes the transpose of $\theta$. The equalities use \eqref{equ:mu-define} and $\mathbb{E}\theta\theta^{\top}=I$. 
Therefore, by definition, a geometric $\epsilon$-net guarantees the existence of an action $a \in \mathcal{A}$ whose expected outcome $Q^*(a)$ is close to that of the optimal action for any given bandit environment.
However, this net suffers from the curse of dimension: e.g., for \([0,1]^n\), the number of points needed to form a geometric \(\epsilon\)-net grows as \((1/\epsilon)^n\).

The measure-theoretic $\epsilon$-net addresses this issue. 
Put simply, the measure-theoretic $\epsilon$-net restricts the grid construction to only the most important clusters, as determined by the $q$-measure.
Selecting one point from each cluster forms a geometric $\epsilon$-net (which will later be formally defined as a reference subset), where $\epsilon$ is the maximum diameter of all the clusters. This construction yields bounded expected regret, as each selected point serves as a representative for its cluster, and all other points within the same cluster have bounded deviation from the representative.
Furthermore, clusters in $\mathcal{A}_{\full}$ that are deemed unimportant for expected regret can be ignored: even if no representative points are selected from these clusters, the resulting expected regret will not increase significantly compared to that of the reference subset.

\subsection{Epsilon Net Algorithm}
To construct a measure-theoretic $\epsilon$-net, we employ the classic $\epsilon$-net algorithm of \citet{haussler1986epsilon}, which remains the simplest and most broadly applicable approach. See Algorithm~\ref{alg:valuefunction} for details. While the algorithm is presented in full generality and refers to a state space, in the meta-bandits setting we consider here, we focus on the simplest case where the state space $\mathcal{X}$ is a singleton.

With high probability, this algorithm produces a measure-theoretic $\epsilon$-net under the measure $q$, defined by  
\[
q(r_\ell) := \Pr\!\left[a^*(\theta) \in r_{\ell} \right],
\]  
where $a^*(\theta)$ is the optimal action for the environment $\theta$, and we assume the optimal action is unique with probability one.  
To see this, note that Algorithm~\ref{alg:valuefunction} samples $K$ i.i.d.\ points from each cluster according to the measure $q$.  
Fix a cluster with $q(r_\ell) > \epsilon$. The probability of not sampling any point in $r_{\ell}$ after $K$ draws is $(1-q(r_\ell))^K \leq \exp(-K q(r_\ell))$, where the inequality follows from \cite{topsoe12007some}.  
Finally, observe that there can be at most $\lfloor 1/\epsilon \rfloor$ clusters satisfying $q(r_\ell) > \epsilon$.

\begin{algorithm}[htp!]
\caption{Epsilon Net Algorithm}
\begin{algorithmic}[1]
\State \textbf{Input: } State space $\mathcal{X}$, Sample size $K$, Environment distribution $\rho$.
\State $\mathcal{A}(x)\gets\emptyset$, for $x\in\mathcal{X}$.
\For{$1,\dots, K$} 
    \State Sample an RL environment $\theta\sim p$.
    \State Find the optimal value function $Q^*(x,a)$, for $x\in\mathcal{X}$, $a\in\mathcal{A}_{\full}(x)$.
    \State $\mathcal{A}(x)\gets \mathcal{A}(x)\cup\left\{\argmax_{a\in \mathcal{A}_{\full}(x)} Q^*(x,a)\right\}$, for $x\in\mathcal{X}$.
    \Comment{Repetitions are allowed}
\EndFor
\end{algorithmic}
\label{alg:valuefunction}
\end{algorithm}


\section{Preliminaries: MDPs}
\label{sec:preliminaries}

We now move from bandits to MDPs. The key difference between the two settings lies in the presence of state dynamics. A bandit problem can be viewed as an MDP with a single state, where each action yields a reward drawn from a fixed distribution and the next state is trivially the same. In contrast, an MDP involves a finite state space and a transition kernel that governs the evolution of the state. As a result, the decision-maker must account not only for the immediate rewards of actions but also for their long-term consequences through the induced state transitions.

A general MDP is characterized by the tuple $(\mathcal{X}, \mathcal{A}_{\full}, P, R, \gamma, x_0)$. The state space $\mathcal{X}$ is assumed to be a Borel space. The action space $\mathcal{A}_{\full}$ is state-dependent: for each $x \in \mathcal{X}$, the set of admissible actions is denoted by $\mathcal{A}_{\full}(x)$. The set of admissible state–action pairs is thus given by
\[
\mathcal{X}\times\mathcal{A}_{\full}:=\{(x,a)\mid x\in\mathcal{X}, a\in \mathcal{A}_{\full}(x)\}.
\]
$P(y \mid x,a)$ is a transition kernel on $\mathcal{X}$ given the state-action pair $(x,a)$.
$R(x,a)$ is the one-step reward function.
The parameter $\gamma\in[0,1)$ is the discount factor, which controls the relative importance of future rewards. A value of $\gamma=0$ corresponds to the case where only immediate rewards are considered.
The initial state $x_0$ is fixed at the start of the process and specifies the starting point for interactions with the MDP.
A decision rule is a mapping $\pi : \mathcal{X} \to \mathcal{A}$ such that $\pi(x) \in \mathcal{A}_{\full}(x)$ for all $x \in \mathcal{X}$. It specifies which action to take at each state.
We view $\pi$ as a stationary policy, that is, a Markov policy that uses the same decision rule $\pi$ at all time steps.

\subsection{Key Quantities}

It is often useful to quantify how good a given state or state–action pair is under a policy. The standard tools for this purpose are the \textbf{value functions}, defined as
\begin{align*}
V^{\pi}(x):=&\mathbb{E}\left[\sum_{t=0}^{\infty}\gamma^t R(x_t,a_t) \bigg| x_0=x\right],\\
Q^{\pi}(x,a):=& \mathbb{E}\left[\sum_{t=0}^{\infty}\gamma^t R(x_t,a_t) \bigg| x_0=x,a_0=a\right].
\end{align*}
In words, $V^{\pi}(x)$ denotes the expected discounted return when starting from state $x$ and following policy $\pi$, while $Q^{\pi}(x,a)$ denotes the expected discounted return when starting from state $x$, taking action $a$, and subsequently following $\pi$.
The \textbf{optimal value functions} are defined as 
\begin{align*}
V^*(x) :=& \max_{\pi} V^{\pi}(x),\\
Q^*(x,a) :=& \max_{\pi} Q^{\pi}(x,a),
\end{align*}
and they are related by
\begin{equation}
\begin{split}
V^*(x) =& \max_a Q^*(x,a),\\
Q^*(x,a) =& R(x,a) + \gamma \sum_{y\in\mathcal{X}} P(y \mid x,a) \, V^*(y).
\end{split}
\label{equ:Q-V-optimal}   
\end{equation}
The policy that attains the maximum is the optimal policy $\pi^*(x):=\arg \max_{\pi} Q^{\pi}(x,a)$.

Define the \textbf{(dis)advantage function} of $\pi$ with respect to the optimal policy $\pi^*$ as
\begin{equation}
 \begin{split}
 A^{*}(x,a) := Q^{*}(x,a) - V^{*}(x). 
\end{split}   
\label{equ:advantage-define}
\end{equation}

At each time step, there is some probability of being in each state.
Given an initial state $x_0$, define the discounted \textbf{state visitation distribution} induced by policy $\pi$:
\begin{align*}
d^{\pi,x_0}(x) := (1-\gamma) \sum_{t=0}^\infty \gamma^{t} \Pr(x_t = x \mid \pi, x_0=x_0),
\end{align*}
which aggregates these probabilities over all time steps, but with a discount factor applied so that states visited sooner count more.
The factor $(1-\gamma)$ is a normalization so that $d^{\pi,x_0}$ is a probability distribution over states.

\subsection{Structured RL Environments}

Although we allow for heterogeneity across the family of MDPs in their transition dynamics, reward functions, and initial state distributions, additional structure is required to make the problem tractable. To this end, we focus on structured environments, beginning with linear models and then extending to more general parametric forms.

\paragraph{Linear MDPs}

A linear MDP \citep{jin2020provably} with dimension \(n\) is a MDP for which there exists a fixed feature mapping
\[
\phi : \mathcal{X}\times \mathcal{A}_{\full} \to \mathbb{R}^n
\]
and unknown parameters \(\{\theta_P(y),y\in\mathcal{X};\theta_R\}\), such that

\begin{equation*}
P(y|x,a) = \langle \theta_P(y),\phi(x,a)\rangle\quad 
R(x,a) = \langle \theta_R,\phi(x,a)\rangle
\quad \forall y\in\mathcal{X},\; (x,a)\in \mathcal{X}\times\mathcal{A}_{\full}.
\end{equation*}
For this special class of MDPs, the state–action value functions admit a linear structure (see Proposition 2.3 of \cite{jin2020provably}):
\begin{equation}
\begin{split}
Q^*(x,a)=& R(x,a)+\gamma \sum_{y\in\mathcal{X}} P(y|x,a) V^*(y)\\
=& \langle \theta_R,\phi(x,a)\rangle+\gamma \sum_{y\in\mathcal{X}} \langle \theta_P(y),\phi(x,a)\rangle V^*(y)\\
=& \langle \theta,\phi(x,a)\rangle,
\end{split}
\label{equ:linear-Q-define}
\end{equation}
where $\theta:=\theta_R+\gamma\sum_{y\in\mathcal{X}} \theta_P(y) V^*(y)$. The first equality uses Equation~\eqref{equ:Q-V-optimal}. 
Note that when the characteristics $\{\theta_P(y), y \in X; \theta_R\}$ and the action space $\mathcal{A}_{\full}(x),x\in\mathcal{X}$ are fixed, the weight $\theta$ is also fixed.


\paragraph{Generalized Parametric MDPs}

To capture a broader family of problems, we consider generalized parametric MDPs, in which the optimal state-action value function takes the form
\[
Q^*(x,a) = f(\theta, \phi(x,a)),
\]
where $f : \Theta \times \mathbb{R}^n \to \mathbb{R}$ is a function and $\theta \in \Theta$ is the parameter of this MDP.
Thus, the linear MDP model corresponds to the special case $f(\theta,\phi)=\langle \theta,\phi\rangle$.
\textcolor{black}{This idea of using a general function class to approximate the Q functions has appeared in \cite{huang2021going}.}
This formulation enables us to study reinforcement learning in more expressive but still structured environments.

\section{Extended Framework to MDPs}
\label{sec:meta-mdp}

We consider the meta-MDPs framework, which generalizes the meta-bandits setting by introducing a family of MDPs. 
All MDPs share the same state-dependent action space \(\mathcal{A}_{\full}\).
Each MDP in the family is characterized by a fixed feature map \(\phi(x,a)\) and a parameter \(\theta\), with $\theta$ drawn from a distribution $p$. That is, \(\theta\) defines the rewards and transition dynamics of a particular MDP environment, while the feature map \(\phi\) is shared across all environments. 
Specifically, there exits a function $f : \Theta \times \mathbb{R}^n \to \mathbb{R}$ that
\begin{equation}
Q^*(x,a)=f(\theta, \phi(x,a)),\quad\forall (x,a)\in\mathcal{X}\times\mathcal{A}_{\full},\;\textnormal{ where }\theta\sim p.
\label{equ:mu-x-define}
\end{equation}
We make no assumptions on the initial state of each MDP; that is, the initial state $x_0$ can be an arbitrary state and may vary across the family of MDPs. 
Since \(\theta\) uniquely parameterizes each environment, we will refer to \(\theta\) simply as an \emph{environment} in the sequel, which may represent either a bandit environment or an MDP environment, as the meta-bandits framework is merely a special case. 
We also simply write \(\mathcal{A}\) to refer to the collection \(\{\mathcal{A}(x)\}_{x \in \mathcal{X}}\), analogous to the action set $\mathcal{A}$ in the meta-bandits framework.

In our setting, however, the decision-maker only has access to a restricted subset of actions \(\mathcal{A}(x) \subset \mathcal{A}_{\full}(x)\) for each state \(x \in \mathcal{X}\). For a given environment $\theta$, if the optimal action(s) lies within the available action set $\mathcal{A}$, then the decision-maker faces a smaller set of  suboptimal actions to explore and can quickly identify the optimal action at this state. In contrast, if the optimal action lies outside $\mathcal{A}$, the decision-maker incurs regret due to the unavailability of the globally optimal action.
Therefore, for \(\theta\), we define the state-dependent regret as
\begin{equation}
\regret(x):=\max_{a\in \mathcal{A}_{\full}(x)} Q^*(x,a)-\max_{b\in \mathcal{A}(x)} Q^*(x,b)\quad x\in\mathcal{X},\label{equ:regret-x-define}
\end{equation}
which is a random variable since \(\theta\) is drawn from \(p\), and it depends on the restricted action subset $\mathcal{A}$.
In this section, we aim to upper bound the term
\begin{equation}
\mathbb{E}_{\theta \sim p}\Big[ \max_{x\in\mathcal{X}} \regret(x) \Big].
\label{equ:obj-uniform}
\end{equation}
In particular, when the state space \(\mathcal{X}\) is a singleton, as in the meta-bandits framework, this term reduces to the objective \(\mathbb{E}_{\theta \sim p}[\regret]\) in that setting.
Also, since the term involves an expectation over the environment $\theta$, it now depends on the choice of action subset $\mathcal{A}$. 

We consider two types of action subsets $\mathcal{A}$.
The first type is a uniform coverage of the full action space. In the meta-bandits framework, this corresponds to a geometric epsilon-net, as discussed in Section~\ref{sec:epsilon-nets}.
While this subset would be the ideal choice in principle, finding such a coverage suffers from the curse of dimensionality. This curse appears in two ways: (i) the number of actions required to form a universal coverage can grow exponentially with the dimension $n$, and (ii) explicitly constructing such a net is a combinatorial problem.
The second type of subset is the output of Algorithm~\ref{alg:valuefunction}. We claim that the performance of this subset is, in expectation, close to that of a uniform coverage of the full action space. More importantly, the algorithm is simple to run in practice.

\subsection{Connection to Expected Performance Difference}

The objective defined in Equation~\eqref{equ:obj-uniform} is motivated by its direct connection to performance differences induced by restricting the available actions. 
This performance difference is typically quantified by the gap between the value functions of two policies at a given initial state \citep{kakade2002approximately}.
To make this connection precise, we recall the well-known performance difference lemma:

\begin{lemma}[Performance Difference Lemma \citep{kakade2002approximately}]\label{lem:performance-difference}
Given an initial state $x_0$ and a policy $\pi$, the difference in value functions from the optimal policy $\pi^*$ satisfies
\[
V^{*}(x_0) - V^{\pi}(x_0) = \frac{1}{1-\gamma} \, \mathbb{E}_{x \sim d^{\pi,x_0}} \big[ -A^{*}(x,\pi(x)) \big].
\]

\end{lemma}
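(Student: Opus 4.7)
The plan is to recast $V^{*}(x_0) - V^{\pi}(x_0)$ as a discounted sum, over trajectories generated by $\pi$ starting at $x_0$, of the per-step quantities $-A^{*}(x_t, \pi(x_t))$, and then to collapse this discounted sum into an expectation under $d^{\pi, x_0}$ using its definition. This is the standard add-and-subtract-$V^{*}$ telescoping argument.

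First I would write $V^{\pi}(x_0) = \mathbb{E}_{\pi,x_0}\bigl[\sum_{t=0}^{\infty} \gamma^{t} R(x_t, a_t)\bigr]$ from its definition, and introduce the trivial telescoping identity
\[
V^{*}(x_0) \;=\; \mathbb{E}_{\pi, x_0}\!\Bigl[\sum_{t=0}^{\infty} \gamma^{t}\bigl(V^{*}(x_t) - \gamma V^{*}(x_{t+1})\bigr)\Bigr],
\]
which is valid because the inner sum telescopes to $V^{*}(x_0) - \lim_{t\to\infty}\gamma^{t} V^{*}(x_t)$, and boundedness of rewards with $\gamma<1$ makes the tail vanish. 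Subtracting gives
\[
V^{\pi}(x_0) - V^{*}(x_0) \;=\; \mathbb{E}_{\pi,x_0}\!\Bigl[\sum_{t=0}^{\infty} \gamma^{t}\bigl(R(x_t,a_t) + \gamma V^{*}(x_{t+1}) - V^{*}(x_t)\bigr)\Bigr].
\]
Conditioning on $(x_t,a_t)$ and applying Equation~\eqref{equ:Q-V-optimal} identifies $R(x_t,a_t) + \gamma\, \mathbb{E}[V^{*}(x_{t+1}) \mid x_t,a_t] = Q^{*}(x_t,a_t)$, so the bracket becomes $Q^{*}(x_t,\pi(x_t)) - V^{*}(x_t) = A^{*}(x_t,\pi(x_t))$ by Equation~\eqref{equ:advantage-define}.

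Finally I would convert the discounted time-sum into a state expectation using $d^{\pi,x_0}$. By its definition,
\[
\sum_{t=0}^{\infty} \gamma^{t}\, \mathbb{E}_{\pi,x_0}\bigl[g(x_t)\bigr] \;=\; \sum_{x} g(x) \sum_{t=0}^{\infty} \gamma^{t}\Pr(x_t = x \mid \pi, x_0) \;=\; \frac{1}{1-\gamma}\,\mathbb{E}_{x \sim d^{\pi,x_0}}\bigl[g(x)\bigr]
\]
for any bounded $g$. Applying this with $g(x) = A^{*}(x,\pi(x))$ and negating both sides yields the stated identity.

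The only delicate point, rather than a genuine obstacle, is justifying the swaps of infinite sums with expectations in the telescoping step and in the visitation-measure step; both follow from Fubini/Tonelli together with the uniform bound on $V^{*}$ implied by bounded rewards and $\gamma<1$, so I would dispatch them with a brief remark rather than a detailed estimate.
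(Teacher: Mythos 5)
Your proof is correct: the add-and-subtract telescoping of $V^{*}$ along trajectories of $\pi$, identification of the bracket with $A^{*}(x_t,\pi(x_t))$ via the Bellman optimality equation, and collapse of the discounted time-sum into $\frac{1}{1-\gamma}\mathbb{E}_{x\sim d^{\pi,x_0}}[\cdot]$ is exactly the standard argument of Kakade and Langford specialized to $\pi'=\pi^{*}$ and a deterministic $\pi$. The paper itself states this lemma with a citation and gives no proof, so there is nothing to diverge from; your handling of the tail term and the Fubini/Tonelli interchanges under bounded rewards is adequate as stated.
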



We measure performance loss by comparing the optimal policy under the full action space with the optimal policy under the restricted action space. This quantifies the intrinsic cost of limiting the available actions, isolating the effect of the restriction from other sources of suboptimality.

\begin{lemma}\label{lem:performance-difference-linear}
For any initial state $x_0$, let $\pi$ denote the optimal policy within the restricted action subset $\mathcal{A}(x)$, $x\in\mathcal{X}$. The difference in value functions from the optimal policy $\pi^*$ satisfies is then bounded by
\begin{equation*}
V^{*}(x_0)-V^{\pi}(x_0)\leq \frac{1}{1-\gamma} \max_{x\in\mathcal{X}}\left\{\max_{a\in \mathcal{A}_{\full}(x)} Q^*(x,a)-\max_{b\in \mathcal{A}(x)} Q^*(x,b) \right\} ,\quad\forall x_0\in\mathcal{X}.
\end{equation*}

\begin{proof}
If for each state $x\in\mathcal{X}$, the decision-maker is restricted to a subset $\mathcal{A}(x)\subseteq \mathcal{A}_{\full}(x)$, we define a (possibly) suboptimal policy $\pi$:
\begin{equation}
\pi(x):=\argmax_{a\in \mathcal{A}(x)} Q^*(x,a) 
\quad \forall x\in\mathcal{X}.
\label{equ:localoptimal-policy}
\end{equation}

Therefore,
\begin{align}
A^*(x,\pi(x))= Q^*(x,\pi(x)) - V^*(x)= \max_{a\in \mathcal{A}(x)} Q^*(x,a) - \max_{b\in \mathcal{A}_{\full}(x)} Q^*(x,b),
\label{equ:A-decompose}
\end{align}
where the first equality follows from the definition of the (dis)advantage function in Equation~\eqref{equ:advantage-define}. The second equality follows from the definition of the (possibly) suboptimal policy $\pi$ in Equation~\eqref{equ:localoptimal-policy} and the relationship between the optimal value and Q-functions in Equation~\eqref{equ:Q-V-optimal}.

Then, by Lemma~\ref{lem:performance-difference}, we have
\begin{align*}
V^{*}(x_0)-V^{\pi}(x_0) 
=& \frac{1}{1-\gamma} \, \mathbb{E}_{\substack{x \sim d^{\pi,x_0}}} \big[ -A^{*}(x,\pi(x)) \big]\\
=& \frac{1}{1-\gamma} \,  \mathbb{E}_{\substack{x \sim d^{\pi,x_0}}} \big[\max_{a\in \mathcal{A}_{\full}(x)} Q^*(x,a)-\max_{b\in \mathcal{A}(x)} Q^*(x,b) \big]\\
\leq& \frac{1}{1-\gamma} \max_{x\in\mathcal{X}}\left\{\max_{a\in \mathcal{A}_{\full}(x)} Q^*(x,a)-\max_{b\in \mathcal{A}(x)} Q^*(x,b) \right\} \quad\forall x_0\in\mathcal{X},
\end{align*}
where the first equality uses Lemma~\ref{lem:performance-difference}, the second uses Equation~\eqref{equ:A-decompose}.
The inequality uses that not matter what is the policy $\pi$ and initial state $x_0$, the state visitation distribution $d^{\pi,x_0}$ is a probability distribution, such that $d^{\pi,x_0}\in\Delta(\mathcal{X})$.
Since the policy $\pi$ is within the restricted action subset, we complete the proof.
\end{proof}
\end{lemma}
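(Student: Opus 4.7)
The plan is to apply the Performance Difference Lemma (Lemma~\ref{lem:performance-difference}) directly with $\pi$ chosen as the greedy policy on the restricted action sets, and then upper bound the resulting expectation over the state visitation distribution by a pointwise maximum.

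First, I would define $\pi$ precisely: at each state $x\in\mathcal{X}$, let $\pi(x):=\argmax_{a\in\mathcal{A}(x)}Q^*(x,a)$. This is well-defined because $\mathcal{A}(x)\subseteq\mathcal{A}_{\full}(x)$ is nonempty. Next, I would compute the advantage term $A^*(x,\pi(x))$ appearing in Lemma~\ref{lem:performance-difference}. Using the definition in Equation~\eqref{equ:advantage-define} together with the optimality relation $V^*(x)=\max_{a\in\mathcal{A}_{\full}(x)}Q^*(x,a)$ from Equation~\eqref{equ:Q-V-optimal}, and the fact that $Q^*(x,\pi(x))=\max_{b\in\mathcal{A}(x)}Q^*(x,b)$ by construction of $\pi$, I obtain
\[
-A^*(x,\pi(x)) \;=\; \max_{a\in\mathcal{A}_{\full}(x)}Q^*(x,a)\;-\;\max_{b\in\mathcal{A}(x)}Q^*(x,b),
\]
which is exactly the state-dependent regret in Equation~\eqref{equ:regret-x-define}, and is nonnegative since $\mathcal{A}(x)\subseteq\mathcal{A}_{\full}(x)$.

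Then I would plug this back into Lemma~\ref{lem:performance-difference} to express $V^*(x_0)-V^\pi(x_0)$ as $\tfrac{1}{1-\gamma}\,\mathbb{E}_{x\sim d^{\pi,x_0}}[-A^*(x,\pi(x))]$. Since $d^{\pi,x_0}$ is a probability distribution over $\mathcal{X}$ regardless of $\pi$ and $x_0$, and the integrand is nonnegative, the expectation is bounded above by $\max_{x\in\mathcal{X}}\{-A^*(x,\pi(x))\}$, giving the claimed bound uniformly in $x_0$. I do not anticipate a real obstacle here: the argument is a direct two-line computation once the greedy policy on $\mathcal{A}$ is identified, and the only subtlety is noting that the pointwise-max upper bound on the expectation holds uniformly in $x_0$ precisely because the state visitation distribution never leaves $\Delta(\mathcal{X})$.
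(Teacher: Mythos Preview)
Your proposal is correct and follows essentially the same route as the paper's proof: define $\pi$ as the $Q^*$-greedy policy on $\mathcal{A}(x)$, compute $-A^*(x,\pi(x))$ via Equations~\eqref{equ:advantage-define} and~\eqref{equ:Q-V-optimal}, apply the Performance Difference Lemma, and bound the expectation over $d^{\pi,x_0}$ by a maximum over $\mathcal{X}$. The paper's only extra remark is the final sentence noting that since this $\pi$ lies within the restricted action subset, the bound also covers the (possibly different) optimal restricted policy stated in the lemma.
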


Lemma~\ref{lem:performance-difference-linear} upper bounds the performance difference between the optimal policy under the full action space (which represents the best achievable performance if no restrictions are imposed) and the optimal policy under the restricted action space (which represents the best achievable performance given the constraint) for a single MDP environment.

In the meta-bandits setting, we consider a family of MDPs, where each environment $\theta$ is sampled from the distribution $p$. The expected performance difference over this family of MDPs, incurred by restricting the available actions to state-dependent subsets $\mathcal{A}(x)\subseteq \mathcal{A}_{\full}(x)$ for $x\in\mathcal{X}$, is then bounded by the maximal single-state regret.
Specifically, for any initial state \(x_0 \in \mathcal{X}\), we have
\begin{equation}
\textnormal{Expected Performance Difference}:=\mathbb{E}_{\theta\sim p}\Big[V^{*}(x_0) - V^{\pi}(x_0)\Big]
\le \frac{1}{1-\gamma} \mathbb{E}_{\theta \sim p}\Big[ \max_{x\in\mathcal{X}} \regret(x) \Big].\label{equ:performance-bound-connect}
\end{equation}
Therefore, by analyzing the expected maximal single-state regret across the family of MDPs, we can quantify the impact of action restrictions on overall decision-making performance.

\subsection{Assumptions \& Definitions}

We introduce the notions of \textbf{partition}, \textbf{reference subsets}, and \textbf{cluster regret}, which serve as the key technical tools for comparing the performance of the subset generated by Algorithm~\ref{alg:valuefunction} with that of a uniform coverage of the action space. Partitioning the action space into clusters enables us to precisely characterize the regions potentially “missed” by Algorithm~\ref{alg:valuefunction} and to quantify the corresponding performance loss at the cluster level. These notions form the basis for establishing a rigorous connection between the two types of action subsets and for deriving regret bounds in terms of cluster-level quantities.

\subsubsection{Finite Partition \& Reference Set}
We start by defining a finite partition of the environment space and then the action space, which will serve as the basic structure for the notions introduced later.
\begin{assumption}\label{ass:unique}
For each state $x\in\mathcal{X}$, the optimal action $\argmax_{a \in \mathcal{A}_{\full}(x)} Q^*(x,a)$ is unique with probability 1 with respect to the distribution of environments $p$.
\end{assumption}

The environment $\theta$ is sampled from a distribution $p$ whose support is denoted $\Theta\subseteq\mathbb{R}^n$. Let $\{\Theta_{\ell}\}_{\ell\leq m}$ be a finite partition of $\Theta$ into disjoint clusters. We assume that this partition of the environment space remains fixed throughout the paper.


Define the state-dependent partitions of action spaces based on this partition:
\begin{equation}
r_{x,\ell}:=\left\{a\in \mathcal{A}_{\full}(x)\mid \exists \theta\in \Theta_{\ell}: a\in\argmax_{a \in \mathcal{A}_{\full}(x)} Q^*(x,a)\right\}.\label{equ:cluster-define}
\end{equation}

\begin{assumption}\label{ass:compact}
For each $x\in\mathcal{X}$, $\ell\leq m$, the cluster $r_{x,\ell}$ is compact.
\end{assumption}

Let $\epsilon:=\max_{x\in\mathcal{X},\ell\leq m}\diam(r_{x,\ell})$.

We also define
\begin{equation}
p_{\ell}:=\Pr\left[\theta\in \Theta_{\ell} \right]=\Pr\left[\argmax_{a \in \mathcal{A}_{\full}(x)} Q^*(x,a)\in r_{x,\ell} \right], \quad\forall x\in\mathcal{X}.
\label{equ:q_l}
\end{equation}
Note that $\{p_{\ell}\}_{\ell\leq m}$ forms a probability distribution.

\begin{definition}[Reference Action Set]\label{def:ref-subset}
From each set \(r_{x,\ell}\), we pick a single representative \(a_{x,\ell}\). The resulting set $\mathcal{A}(x):=\{a_{x,\ell}\}_{\ell\leq m}$ forms a reference action set.
\end{definition}

Intuitively, the reference action set compactly represents the action space.
For each state $x$, $\mathcal{A}(x)$ captures,  the ``most representative'' actions. This reduction dramatically decreases the number of actions that need to be considered.



\subsubsection{Cluster Regret}
Finally, we define the central variable in our analysis.
Fix a state \(x\) and consider the reference action set \(\mathcal{A}(x) := \{a_{x,\ell}\}_{\ell \leq m}\). For each \(\ell \leq m\), define a random process \((Z_{x,a})_{a \in r_{x,\ell}}\) where
\[
Z_{x,a} := Q^*(x,a) - Q^*(x,a_{x,\ell}).
\]
We then define the nonnegative random variable, which we call the cluster regret:
\begin{equation}
Y_{x,\ell} := \max_{a \in r_{x,\ell}} Z_{x,a} = \max_{a \in r_{x,\ell}} Q^*(x,a) - Q^*(x,a_{x,\ell}).
\label{equ:Y-define}
\end{equation}

The cluster regret \(Y_{x,\ell}\) is central because it directly connects to the regret analysis of the reference action set as well as to the output of Algorithm~\ref{alg:valuefunction}.
To see the former, let \(\mathcal{A}\) be an arbitrary reference action set. By the definition of the partitions \(\{\Theta_{\ell}\}_{\ell \leq m}\), whose union forms the support of the environment distribution, for any sampled environment \(\theta\) there exists an \(\ell \leq m\) such that \(\theta \in \Theta_{\ell}\).
Then, by the definition of regret in Equation~\eqref{equ:regret-x-define}, we have
\begin{equation}
\begin{split}
\regret(x) &= \max_{a \in \mathcal{A}_{\full}(x)} Q^*(x,a) - \max_{b \in \mathcal{A}(x)} Q^*(x,b) \\[1ex]
&= \max_{a \in r_{x,\ell}} Q^*(x,a) - \max_{b \in \mathcal{A}(x)} Q^*(x,b) \\[0.5ex]
&\le \max_{a \in r_{x,\ell}} Q^*(x,a) - Q^*(x,a_{x,\ell}) = Y_{x,\ell}.
\end{split}
\label{equ:regret2Y}
\end{equation}
Here, the second equality uses the definition of the action space partition in Equation~\eqref{equ:cluster-define}, the inequality holds because \(a_{x,\ell} \in \mathcal{A}(x)\), and the final equality follows from the definition of the cluster regret in Equation~\eqref{equ:Y-define}.
Therefore, if restricting to the reference action set $\mathcal{A}$, the cluster regret \(Y_{x,\ell}\) upper bounds the regret incurred at state $x$ for environments in cluster \(\Theta_\ell\). 
By bounding the expected maximal cluster regret
\begin{equation} \mathbb{E}_{\theta\sim p}\left[\max_{x\in\mathcal{X},\ell\leq m} Y_{x,\ell}\right], \label{equ:max-Y-expect} \end{equation}
we effectively control the objective in Equation~\eqref{equ:obj-uniform}.
This can be summarized as the following lemma:
\begin{lemma}
For any reference action set $\mathcal{A}$, the expected maximum state regret is upper bounded by the expected maximum cluster regret:
\(
\mathbb{E}_{\theta \sim p}\Big[ \max_{x\in\mathcal{X}} \regret(x) \Big]\leq \mathbb{E}_{\theta\sim p}\left[\max_{x\in\mathcal{X},\ell\leq m} Y_{x,\ell}\right].
\)
\end{lemma}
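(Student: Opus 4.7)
The plan is to prove the inequality pointwise in the sample $\theta$ and then pass to expectation by monotonicity. Essentially all of the substantive work is already contained in Equation~\eqref{equ:regret2Y}; what remains is to carefully carry the per-cluster bound there to a uniform bound that does not refer to any particular $\ell$, so that both sides of the final inequality are random variables on the same underlying probability space.

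The steps, in order, would be the following. First, fix a realization $\theta \sim p$. Since $\{\Theta_\ell\}_{\ell \leq m}$ is a finite partition of $\Theta$, there exists a unique index $\ell^\star = \ell^\star(\theta)$ with $\theta \in \Theta_{\ell^\star}$. For this index, Equation~\eqref{equ:regret2Y} gives, for every $x \in \mathcal{X}$, the deterministic (per-$\theta$) bound
\[
\regret(x) \;\le\; Y_{x,\ell^\star}.
\]
Second, taking the maximum over $x \in \mathcal{X}$ on both sides and then enlarging the right-hand side by maximizing over all clusters yields
\[
\max_{x \in \mathcal{X}} \regret(x) \;\le\; \max_{x \in \mathcal{X}} Y_{x,\ell^\star} \;\le\; \max_{x \in \mathcal{X},\, \ell \leq m} Y_{x,\ell}.
\]
This chain holds for \emph{every} realization of $\theta$, so it is an almost-sure inequality between random variables. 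Third, applying monotonicity of expectation and linearity gives the claimed bound
\[
\mathbb{E}_{\theta \sim p}\!\left[\max_{x \in \mathcal{X}} \regret(x)\right] \;\le\; \mathbb{E}_{\theta \sim p}\!\left[\max_{x \in \mathcal{X},\, \ell \leq m} Y_{x,\ell}\right].
\]

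I do not expect a genuine obstacle here, because Equation~\eqref{equ:regret2Y} is the load-bearing step and has already been established. The only item worth a sentence of care is measurability of $\max_{x \in \mathcal{X},\,\ell \leq m} Y_{x,\ell}$ as a function of $\theta$: the maximum over $\ell$ is over the finite set $\{1,\ldots,m\}$ and is thus harmless, while the maximum over $x$ is handled by Assumption~\ref{ass:compact} together with the implicit regularity of $Q^*$ in $\theta$ (and in the meta-bandit special case where $\mathcal{X}$ is a singleton this is vacuous). If desired, one could also state the conclusion in the slightly stronger pointwise form $\max_{x} \regret(x) \le \max_{x,\ell} Y_{x,\ell}$ almost surely, from which the displayed lemma is an immediate corollary.
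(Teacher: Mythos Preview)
Your proposal is correct and follows exactly the approach the paper intends: the paper presents Equation~\eqref{equ:regret2Y} as the substantive step and then states the lemma as an immediate summary without a separate proof. Your write-up simply makes explicit the two trivial passages (max over $x$ and $\ell$, then expectation) that the paper leaves implicit.
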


\section{Bounds of Reference Action Set}
\label{sec:reference}

As motivated in the previous section, the expected maximal cluster regret defined in Equation~\eqref{equ:max-Y-expect} is an important quantity: it provides an upper bound on the expected maximum regret of the reference action set and is also connected to the expected maximum regret of the algorithm's output (as in the upcoming section).
We now analyze this quantity for an arbitrary reference action set.


To remind the reader, in our framework (Equation~\eqref{equ:mu-x-define}), each environment is parameterized by $\theta \sim p$, with its optimal state–action value function defined via the function $f$.
We first derive a bound in the simple case where $p$ is a Gaussian distribution and $f$ is linear.
Under this assumption, the random process $\{Q^*(x,a)\}_{(x,a)\in\mathcal{X}\times\mathcal{A}_{\full}}$ forms a centered Gaussian process. We then extend the analysis to the more general setting of non-centered sub-Gaussian processes.
Our bound will depend on the \textbf{Gaussian width} of a set $\mathcal{S}\subset\mathbb{R}^n$, a key geometric parameter defined as
\begin{equation}
G(\mathcal{S}):=\mathbb{E}\max_{s\in\mathcal{S}}\langle u, s\rangle,\; u\sim \mathcal{N}(0,I).
\label{equ:Gaussianwidth-define}
\end{equation}
Intuitively, it measures the “size” or complexity of the set $\mathcal{S}$ through the lens of Gaussian fluctuations: the larger the set, the larger the expected supremum of inner products with a standard Gaussian vector. This notion plays a central role in high-dimensional probability \citep{vershynin2018high} and asymptotic convex geometry \citep{rothvoss2021asymptotic}.
Some useful properties of Gaussian width, which will be employed later in our analysis, are provided in the Appendix~\ref{app:gaussian-width}.

\subsection{Centered Gaussian Process}

Let the environment \(\theta\) be sampled from a zero-mean Gaussian distribution, and the function \(f\) be linear, i.e.,
\(
Q^*(x,a) = \langle \theta, \phi(x,a) \rangle + c.
\)
Since the intercept $c\in\mathbb{R}$ does not affect the expected maximal cluster regret, we may simply assume \(c=0\).
Under this assumption, the random process \(\{Q^*(x,a)\}_{(x,a)\in\mathcal{X}\times\mathcal{A}_{\full}}\)is a centered Gaussian process: for any finite subset \(\mathcal{S} \subseteq \mathcal{X}\times\mathcal{A}_{\full}\), the random vector \(\{Q^*(x,a)\}_{(x,a)\in\mathcal{S}}\) is jointly Gaussian, with \(\mathbb{E}_{\theta}[Q^*(x,a)] = 0\) for all \((x,a)\).

Moreover, any such Gaussian process admits the following canonical representation \citep{vershynin2018high}, so without loss of generality we may focus on
\begin{equation}
Q^*(x,a)=\langle\theta, \phi(x,a)\rangle,\quad\forall (x,a)\in\mathcal{X}\times\mathcal{A}_{\full},\;\textnormal{ where }\theta\sim\mathcal{N}(0,I).
\label{equ:canonical-meta-MDP}
\end{equation}
This canonical form allows us to derive explicit bounds on the expected maximal cluster regret, as stated in the following theorem.

\begin{theorem}\label{thm:geometric-bounds} 
If the random process $\{Q^*(x,a)\}_{(x,a)\in\mathcal{X}\times\mathcal{A}_{\full}}$ satisfies Equation~\eqref{equ:canonical-meta-MDP}, then, for some constant \( C > 0 \), it holds that
\begin{equation*}
\mathbb{E}_{\theta}\left[\max_{x\in\mathcal{X},\ell\leq m}Y_{x,\ell} \right] \leq \max_{x\in\mathcal{X}, \ell\leq m} G(r_{x,\ell}) + C\epsilon\sqrt{\log m|\mathcal{X}|}.
\end{equation*}
\begin{equation*}
\mathbb{E}_{\theta}\left[\max_{x\in\mathcal{X}}\min_{\ell\leq m}Y_{x,\ell} \right] \geq \max_{x\in\mathcal{X}}\min_{\ell\leq m}G(r_{x,\ell}) - C\epsilon\sqrt{\log m|\mathcal{X}|}.
\end{equation*}
\end{theorem}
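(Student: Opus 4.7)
The plan is to recognize that, under the canonical representation in Equation~\eqref{equ:canonical-meta-MDP}, each cluster regret $Y_{x,\ell} = \max_{a \in r_{x,\ell}} \langle \theta, \phi(x,a) - \phi(x,a_{x,\ell})\rangle$ is the supremum of a centered Gaussian process, and to combine two standard facts about such suprema: first, the mean equals the Gaussian width, and second, the supremum concentrates sub-Gaussianly around its mean with a variance proxy controlled by $\epsilon$. Since $\theta \sim \mathcal{N}(0,I)$, the identity $\mathbb{E}\langle \theta, \phi(x,a_{x,\ell})\rangle = 0$ yields $\mathbb{E}[Y_{x,\ell}] = \mathbb{E} \max_{a \in r_{x,\ell}} \langle \theta, \phi(x,a)\rangle = G(r_{x,\ell})$, where we tacitly identify $r_{x,\ell}$ with its feature image $\phi(r_{x,\ell}) \subset \mathbb{R}^n$. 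Applying the Borell--TIS inequality to the centered Gaussian process $\{\langle \theta, \phi(x,a) - \phi(x,a_{x,\ell})\rangle\}_{a \in r_{x,\ell}}$ then tells us that $Y_{x,\ell} - G(r_{x,\ell})$ is sub-Gaussian with parameter $\sigma^2_{x,\ell} \leq \sup_{a \in r_{x,\ell}} \|\phi(x,a) - \phi(x,a_{x,\ell})\|_2^2 \leq \diam(r_{x,\ell})^2 \leq \epsilon^2$.

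For the upper bound, I would decompose $\max_{x,\ell} Y_{x,\ell} \leq \max_{x,\ell} G(r_{x,\ell}) + \max_{x,\ell}\bigl(Y_{x,\ell} - G(r_{x,\ell})\bigr)$, take expectations, and apply the standard maximal inequality for $m|\mathcal{X}|$ centered $\epsilon$-sub-Gaussian random variables to obtain $\mathbb{E}\max_{x,\ell}\bigl(Y_{x,\ell} - G(r_{x,\ell})\bigr) \leq C\epsilon\sqrt{\log(m|\mathcal{X}|)}$. For the lower bound, I would use the elementary inequality $\min_\ell Y_{x,\ell} \geq \min_\ell G(r_{x,\ell}) - \max_\ell |Y_{x,\ell} - G(r_{x,\ell})|$, then take the maximum over $x$ to reduce the error to $\max_{x,\ell}|Y_{x,\ell} - G(r_{x,\ell})|$. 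Since the absolute value of a centered sub-Gaussian is itself sub-Gaussian up to constants, the same maximal inequality controls its expectation by $C\epsilon\sqrt{\log(m|\mathcal{X}|)}$, giving the second bound.

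The main obstacle I anticipate is purely in the bookkeeping: correctly extracting the variance proxy of $Y_{x,\ell}$ from Borell--TIS and verifying that the diameter bound $\diam(r_{x,\ell}) \leq \epsilon$ is the right quantity to carry through the union bound, without losing extra factors in $m$ or $|\mathcal{X}|$ beyond the logarithmic term. Everything else is a routine application of a union bound combined with sub-Gaussian tail estimates, and I do not expect to need any chaining or generic-chaining refinement, since the proof exploits only the mean of each cluster supremum rather than its fine metric structure.
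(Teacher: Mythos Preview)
Your proposal is correct and follows essentially the same route as the paper: identify $\mathbb{E}Y_{x,\ell}=G(r_{x,\ell})$ via centeredness, apply Borell--TIS to get $|Y_{x,\ell}-\mathbb{E}Y_{x,\ell}|$ sub-Gaussian with proxy $\epsilon^2$, then union-bound over the $m|\mathcal{X}|$ indices and integrate the tail to obtain $\mathbb{E}\max_{x,\ell}|Y_{x,\ell}-G(r_{x,\ell})|\le C\epsilon\sqrt{\log(m|\mathcal{X}|)}$. The paper's decompositions for the upper and lower bounds are exactly the pointwise inequalities you wrote, so there is no meaningful difference in strategy or in the bookkeeping you flagged.
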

Here, $\epsilon$ is the largest diameter among all clusters.
Since the process \(\{Q^*(x,a)\}_{(x,a)\in\mathcal{X}\times\mathcal{A}_{\full}}\)is centered, the term $G(r_{x,\ell})$ is equivalent to the expected cluster regret $\mathbb{E}_{\theta}\left[Y_{x,\ell}\right]$.
The proof, which we will explain in detail below, is based on Proposition~2.10.8 of \cite{talagrand2014upper}.

\begin{lemma}[Borell-TIS inequality]\label{lem:Borell-TIS}
Given a set $\mathcal{S}$, and a zero-mean Gaussian process $\left(X_a\right)_{a\in\mathcal{S}}$. Let $\epsilon:=\sup_{a\in\mathcal{S}}\left(\mathbb{E} X_a^2\right)^{\frac{1}{2}}$. Then for $u>0$, we have 
\begin{equation*}
\Pr\left[\left|\sup_{a\in\mathcal{S}} X_a - \mathbb{E}\sup_{a\in\mathcal{S}} X_a \right|\geq u\right]\leq 2\exp{\left(-\frac{u^2}{2\epsilon^2}\right)}.
\end{equation*}
\end{lemma}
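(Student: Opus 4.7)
The plan is to reduce the Borell-TIS inequality to the standard Gaussian concentration inequality for Lipschitz functions. First, I would assume without loss of generality that the process is separable, so that the supremum can be realized as a supremum over a countable dense subset and is a measurable random variable. For a centered Gaussian process, one has the canonical representation $X_a = \langle g, v_a\rangle$, where $g \sim \mathcal{N}(0,I_N)$ in some (possibly infinite-dimensional) Hilbert space and $v_a$ are deterministic vectors satisfying $\|v_a\|_2 = (\mathbb{E} X_a^2)^{1/2} \le \epsilon$. For a finite subset $\mathcal{S}_0 \subseteq \mathcal{S}$, this amounts to diagonalizing the covariance matrix; the general case is obtained by approximation using separability.

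Next I would define $F(g) := \sup_{a \in \mathcal{S}} \langle g, v_a \rangle$ and show that $F$ is $\epsilon$-Lipschitz with respect to the Euclidean norm. Indeed, for any $g_1, g_2$,
\begin{equation*}
F(g_1) - F(g_2) \le \sup_{a \in \mathcal{S}} \langle g_1 - g_2, v_a\rangle \le \sup_{a\in\mathcal{S}}\|v_a\|_2 \cdot \|g_1 - g_2\|_2 \le \epsilon\,\|g_1 - g_2\|_2,
\end{equation*}
and by symmetry the same bound holds for $|F(g_1) - F(g_2)|$. With the Lipschitz property in hand, I would then invoke the Gaussian concentration inequality for Lipschitz functions (due to Borell and, independently, Sudakov-Tsirelson), which states that if $F$ is $L$-Lipschitz and $g$ is standard Gaussian, then $\Pr[|F(g) - \mathbb{E} F(g)| \ge u] \le 2 \exp(-u^2/(2L^2))$. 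Plugging $L = \epsilon$ gives exactly the claim.

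The main obstacle, as I see it, is not the two-line Lipschitz computation but rather the invocation of Gaussian Lipschitz concentration itself; proving it from scratch requires either the Gaussian isoperimetric inequality (Borell's original approach) or a semigroup/log-Sobolev argument (Ledoux, Maurey), both of which are substantial. In a paper of this style I would simply cite it as a known fact (e.g.\ Theorem 5.6 of Boucheron-Lugosi-Massart or Theorem 5.2.2 of Vershynin). A secondary technical point is justifying the reduction to a finite-dimensional representation when $\mathcal{S}$ is infinite: I would handle this by taking a countable dense subset $\{a_k\}$, applying the inequality to the finite supremum $F_n(g) = \max_{k \le n} \langle g, v_{a_k}\rangle$ (which is clearly $\epsilon$-Lipschitz), and passing to the limit using monotone convergence for both the supremum and its expectation.
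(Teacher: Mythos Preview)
Your proof sketch is correct and follows the standard modern route to Borell--TIS: represent the centered Gaussian process canonically as $X_a = \langle g, v_a\rangle$, observe that $g \mapsto \sup_a \langle g, v_a\rangle$ is $\epsilon$-Lipschitz, and apply Gaussian Lipschitz concentration. The handling of the infinite index set via separability and monotone approximation is also the right way to make the argument rigorous.

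However, there is nothing to compare against: the paper does not supply a proof of this lemma. It is stated as a named classical result (Borell--TIS) and used as a black box in the proof of Theorem~\ref{thm:geometric-bounds}; the appendix contains proofs of Lemmas~\ref{lem:integral-expectation}, \ref{lem:integral}, and \ref{lem:random-process-twosides}, but not of Lemma~\ref{lem:Borell-TIS}. So your proposal goes strictly beyond what the paper does. If anything, your own remark that ``in a paper of this style I would simply cite it as a known fact'' is exactly what the authors chose to do.
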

It means that the size of the fluctuations of $\mathbb{E}\sup_{a\in\mathcal{S}} X_a $ is governed by the size of the individual random variables $X_a$. 
Recalling Equation~\eqref{equ:L2norm}, for a canonical Gaussian process, the fluctuations within a cluster are related to the cluster diameter.
As a result, the expected maximal cluster regret can be bounded in terms of the largest cluster diameter.


\begin{lemma}[Expectation integral identity]\label{lem:integral-expectation}
Given a nonnegative random variables $X$. If $\Pr[X\geq u]\leq C\exp{\left(-\frac{u^2}{\epsilon^2}\right)}$ for any $u>0$, then $\mathbb{E}X\leq C^{\dag}\epsilon\sqrt{\log C}$,  where $C,C^{\dag},\epsilon$ are positive constants.
\end{lemma}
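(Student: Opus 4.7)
The plan is to use the layer-cake (tail-integral) representation
$\mathbb{E}X = \int_0^\infty \Pr[X \geq u]\,du$, valid for any nonnegative random variable, and then split the integration region at a carefully chosen threshold so as to balance the two available bounds on the tail probability. The two bounds in play are the trivial $\Pr[X \geq u] \leq 1$ and the hypothesized Gaussian-type bound $\Pr[X \geq u] \leq C\exp(-u^2/\epsilon^2)$.

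First, I would restrict attention to the nontrivial regime $C > 1$; when $C \leq 1$ the tail bound with constant $1$ is valid for all $u > 0$, so the conclusion follows immediately from $\int_0^\infty \exp(-u^2/\epsilon^2)\,du = \epsilon\sqrt{\pi}/2$. In the regime $C > 1$, I would choose the threshold $u_0 := \epsilon\sqrt{\log C}$, which is precisely the point where the two tail bounds meet: $C\exp(-u_0^2/\epsilon^2) = 1$.

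Next, I would estimate the two pieces separately. On $[0,u_0]$ I would use only $\Pr[X \geq u] \leq 1$, contributing at most $u_0 = \epsilon\sqrt{\log C}$. On $[u_0,\infty)$ I would use the assumed tail bound together with the standard Mills-ratio inequality
\[
\int_{u_0}^\infty \exp\!\left(-\tfrac{u^2}{\epsilon^2}\right) du \;\leq\; \frac{\epsilon^2}{2u_0}\exp\!\left(-\tfrac{u_0^2}{\epsilon^2}\right),
\]
so after multiplying by $C$ and invoking $C\exp(-u_0^2/\epsilon^2)=1$, the tail piece is at most $\epsilon^2/(2u_0) = \epsilon/(2\sqrt{\log C})$. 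Adding the two contributions gives
\[
\mathbb{E}X \;\leq\; \epsilon\sqrt{\log C} \;+\; \frac{\epsilon}{2\sqrt{\log C}},
\]
and whenever $\log C \geq 1$ (which is the interesting case, and can be enforced by assumption or by enlarging $C^{\dag}$) the first term dominates, so one may take, e.g., $C^{\dag} = 3/2$.

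There is no substantive obstacle here: the argument is essentially a one-variable optimization of the threshold $u_0$, and all interchanges of limits reduce to Fubini on $[0,\infty)$ since the tail bound is assumed pointwise. The only mild annoyances are cosmetic, namely handling the small-$C$ regime and absorbing the lower-order $\epsilon/\sqrt{\log C}$ term into the universal constant $C^{\dag}$; neither requires more than a few lines.
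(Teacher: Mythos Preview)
Your proposal is correct and essentially identical to the paper's own proof: both use the tail-integral formula $\mathbb{E}X=\int_0^\infty \Pr[X\ge u]\,du$, split at $u_0=\epsilon\sqrt{\log C}$, bound the near piece by $u_0$ and the far piece by $C\epsilon^2/(2u_0)\exp(-u_0^2/\epsilon^2)=\epsilon/(2\sqrt{\log C})$, and then absorb the lower-order term into $C^{\dag}$. The only cosmetic difference is that the paper writes the tail estimate via the inequality $\int_{u_0}^\infty f(u)\,du\le \frac{1}{u_0}\int_{u_0}^\infty u f(u)\,du$ rather than quoting the Mills-ratio bound by name, and your treatment of the small-$C$ case is actually more careful than the paper's.
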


\begin{proof}[Proof of Theorem~\ref{thm:geometric-bounds}]
Following the reasoning in Equation~\eqref{equ:L2norm}, we have
\[\mathbb{E} Z_{x,a}^2 = \mathbb{E}(Q^*(x,a)-Q^*(x,a_{x,\ell}))^2=\|\phi(x,a)-\phi(x,a_{x,\ell})\|^2_2\leq \epsilon^2.\]
Using Lemma~\ref{lem:Borell-TIS} on the process $\{Z_{x,a}\}_{a\in r_{x,\ell}}$, we have
\begin{equation*}
\Pr\left[\left|Y_{x,\ell} - \mathbb{E}Y_{x,\ell} \right|\geq u\right]\leq 2\exp{\left(-\frac{u^2}{2\epsilon^2}\right)}\quad\forall x\in\mathcal{X},\ell\leq m.
\end{equation*}
By union bound, we have
\begin{equation*}
\Pr\left[\max_{x\in\mathcal{X},\ell\leq m}\left|Y_{x,\ell} - \mathbb{E}Y_{x,\ell} \right|\geq u\right]\leq 2m|\mathcal{X}|\exp{\left(-\frac{u^2}{2\epsilon^2}\right)}.
\end{equation*}
Using Lemma~\ref{lem:integral-expectation}, there exists an absolute constant $C>0$
\begin{equation}
\mathbb{E}\max_{x\in\mathcal{X},\ell\leq m}\left|Y_{x,\ell} - \mathbb{E}Y_{x,\ell} \right| \leq C\epsilon\sqrt{\log m|\mathcal{X}|}.\label{equ:max-deviation}
\end{equation}
Further, we have 
\begin{equation}
\begin{split}
\mathbb{E}Y_{x,\ell}=&\mathbb{E}\sup_{a\in r_{x,\ell}} Q^*(x,a) - \mathbb{E}Q^*(x,a_{x,\ell}) \\
=& \mathbb{E}\sup_{a\in r_{x,\ell}} Q^*(x,a) = G(r_{x,\ell}),
\end{split}
\label{equ:EY}
\end{equation}
where the first equality follows from the definition of $Y_{x,\ell}$.
The second uses the fact that the random process is centered, i.e., $\mathbb{E}[Q^*(x,a)] = 0$.
The last equality follows from the definition of Gaussian width (Equation~\eqref{equ:Gaussianwidth-define}) and the fact that the random process is canonical.

Finally, combining the inequality $Y_{x,\ell}\leq \mathbb{E}Y_{x,\ell}+\left|Y_{x,\ell} - \mathbb{E}Y_{x,\ell} \right|$
with Equations~\eqref{equ:max-deviation} and \eqref{equ:EY}, we obtain the following bounds:

\noindent\textbf{Upper bound:} 

\begin{equation*}
\mathbb{E}\max_{x\in\mathcal{X},\ell\leq m}Y_{x,\ell}
\leq  \max_{x\in\mathcal{X},\ell \leq m}\mathbb{E}Y_{x,\ell}+\mathbb{E}\max_{x\in\mathcal{X},\ell\leq m}\left|Y_{x,\ell} - \mathbb{E}Y_{x,\ell} \right|
\leq  \max_{x\in\mathcal{X},\ell\leq m} G(r_{x,\ell}) + C\epsilon\sqrt{\log m|\mathcal{X}|}. 
\end{equation*}

\noindent\textbf{Lower bound:} 

\begin{equation*}
\mathbb{E}\max_{x\in\mathcal{X}}\min_{\ell\leq m}Y_{x,\ell}
\geq\max_{x\in\mathcal{X}}\min_{\ell\leq m}\mathbb{E}Y_{x,\ell}-\mathbb{E}\max_{x\in\mathcal{X},\ell\leq m}\left|Y_{x,\ell} - \mathbb{E}Y_{x,\ell} \right|
\geq\max_{x\in\mathcal{X}}\min_{\ell\leq m}G(r_{x,\ell}) - C\epsilon\sqrt{\log m|\mathcal{X}|}.
\end{equation*}
\end{proof}

\begin{remark}
The canonical representation in Equation~\eqref{equ:canonical-meta-MDP} can in fact express any Gaussian process, as we explain below.
Let $\phi_i(x,a)$ denote the $i^{\text{th}}$ entry of the feature vector corresponding to the pair $(x,a)$.
This framework also extends to the case $n = +\infty$, under the additional assumption that $\sum_{i \geq 1} \phi_i(x,a)^2 < \infty$ for any pair $(x,a)$.
Consider the general setting
\begin{equation*}
Q^*(x,a) = \langle \vartheta, \phi(x,a) \rangle,\quad \forall (x,a)\in\mathcal{X}\times\mathcal{A}_{\full},\;\textnormal{ where }\vartheta \sim \mathcal{N}(0,\tau),
\end{equation*}
where $\tau$ is a positive semi-definite matrix.
Let $\theta \in \mathbb{R}^n$ follow a multivariate normal distribution with zero mean and identity covariance matrix $I$.
Then the distribution of $\vartheta$ is equivalent to that of $\tau^{1/2}\theta$.
Let $\tau_j$ denote the $j$-th row of the matrix $\tau^{1/2}$.
We have \(\vartheta = (\langle \theta, \tau_j\rangle)_{j \leq n}\). Then
\[
\langle \vartheta, \phi(x,a)\rangle
= \sum_{j \leq n} \langle \theta,\tau_j\rangle \, \phi_j(x,a)
= \Big\langle \theta, \sum_{j \leq n} \tau_j \, \phi_j(x,a) \Big\rangle
= \langle \theta, \tau^{1/2} \phi(x,a) \rangle,
\]
showing that this setup is equivalent to the canonical Gaussian case, but on a transformed feature space given by \(\{\tau^{1/2} \phi(x,a)\}_{(x,a)\in\mathcal{X}\times\mathcal{A}_{\full}}\).
Consequently, the upper bound in Theorem~\ref{thm:geometric-bounds} scales with the operator norm of \(\tau^{1/2}\), giving
\[
\mathbb{E}_{\theta}\Big[\max_{x\in\mathcal{X},\ell\leq m}Y_{x,\ell} \Big] 
\le \|\tau\|^{1/2} \max_{x\in\mathcal{X}, \ell\leq m} G(r_{x,\ell}) 
+ C \|\tau\|^{1/2} \epsilon \sqrt{\log m |\mathcal{X}|}.
\]
Here, we used Property~5 in Appendix~\ref{app:gaussian-width}. 
Intuitively, \(\|\tau\|^{1/2}\) acts as a scaling factor that stretches the Gaussian fluctuations according to the covariance structure of the features.
\end{remark}

\subsection{Non-Centered Sub-Gaussian Process}

Now we consider the case where the random process \(\{Q^*(x,a)\}_{(x,a)\in\mathcal{X}\times\mathcal{A}_{\full}}\) is non-centered and sub-Gaussian. Here, the function 
$f$ need not be linear, but the process satisfies
\begin{equation}
\Pr[|Q^*(x,a)-Q^*(x,a')|>u]\leq L\exp\left(-\frac{u^2}{2\|\phi(x,a)-\phi(x,a')\|_2^2}\right)\; \forall a,a'\in\mathcal{A}_{\full}(x),x\in\mathcal{X},
\label{equ:random-process-define}
\end{equation}
for some $L\geq 1$.
We impose no assumptions on the expectations $\mathbb{E}_{\theta}[Q^*(x,a)]$, which may vary across state–action pairs, nor on the relationships among different states.

\begin{theorem}\label{thm:geometric-bounds-sub}
If $\{Q^*(x,a)\}_{(x,a)\in \mathcal{X}\times\mathcal{A}_{\full}}$ is a random process satisfying Equation~\eqref{equ:random-process-define}, then, for some absolute constants \( C',C''> 0 \), it holds that
\begin{equation*}
\mathbb{E}_{\theta}\left[\max_{x\in\mathcal{X},\ell\leq m}Y_{x,\ell} \right] \leq C'\max_{x\in\mathcal{X}, \ell\leq m}G(r_{x,\ell}) + C''\epsilon\sqrt{\log Lm|\mathcal{X}|}.
\end{equation*}
\end{theorem}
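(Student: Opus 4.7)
The plan is to mirror the structure of the proof of Theorem~\ref{thm:geometric-bounds} by writing
$$\max_{x,\ell} Y_{x,\ell} \;\le\; \max_{x,\ell} \mathbb{E} Y_{x,\ell} \;+\; \max_{x,\ell} \bigl|Y_{x,\ell} - \mathbb{E} Y_{x,\ell}\bigr|$$
and bounding the two right-hand pieces separately. The Gaussian proof of Theorem~\ref{thm:geometric-bounds} fails in exactly two places and both must be patched: (i) the identity $\mathbb{E} Y_{x,\ell} = G(r_{x,\ell})$ used in Equation~\eqref{equ:EY} relied on the process being centered \emph{and} canonically represented, neither of which is available here; and (ii) Borell--TIS (Lemma~\ref{lem:Borell-TIS}) is a Gaussian-only statement and does not transfer directly to a sub-Gaussian process. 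Both gaps can be closed by generic-chaining machinery, at the cost of replacing equalities by inequalities with universal constants and of allowing the factor $L$ from Equation~\eqref{equ:random-process-define} to enter the logarithm.

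To handle (i), I fix $(x,\ell)$ and observe that the zero-at-$a_{x,\ell}$ process $Z_{x,a}:=Q^*(x,a)-Q^*(x,a_{x,\ell})$ inherits sub-Gaussian increments with Euclidean-feature parameter $\|\phi(x,a)-\phi(x,a')\|_2$ directly from Equation~\eqref{equ:random-process-define}. Talagrand's majorizing measure theorem (generic chaining) then yields
$$\mathbb{E} Y_{x,\ell} \;=\; \mathbb{E} \sup_{a\in r_{x,\ell}} Z_{x,a} \;\le\; C_1\, \gamma_2\!\bigl(r_{x,\ell}\bigr),$$
where $\gamma_2$ is computed with respect to the Euclidean feature metric, and the Fernique--Talagrand equivalence $\gamma_2 \asymp G$ on Hilbert subsets converts this into $\mathbb{E} Y_{x,\ell} \le C' G(r_{x,\ell})$. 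Maximizing over $(x,\ell)$ provides the first term of the target bound.

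To handle (ii), I invoke the sub-Gaussian concentration of the supremum that accompanies the chaining bound: since the increments of $Z_{x,a}$ are sub-Gaussian with parameter at most $\epsilon$ on $r_{x,\ell}$, standard results give
$$\Pr\!\bigl[\,|Y_{x,\ell}-\mathbb{E} Y_{x,\ell}|\ge u\bigr] \;\le\; L_1\exp\!\bigl(-u^2/(2\epsilon^2)\bigr),$$
with $L_1$ a constant depending on $L$. A union bound over the $m|\mathcal{X}|$ clusters followed by Lemma~\ref{lem:integral-expectation} yields $\mathbb{E}\max_{x,\ell}|Y_{x,\ell}-\mathbb{E} Y_{x,\ell}| \le C''\epsilon\sqrt{\log Lm|\mathcal{X}|}$, which combines with the previous paragraph to finish the proof. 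The main obstacle is step (ii): there is no single-line analog of Borell--TIS for sub-Gaussian processes, so one either must execute a dedicated chaining-based concentration argument or appeal to a ready-made version in the Talagrand/Dirksen literature, all while being careful that the prefactor $L$ enters only through the logarithm and that $C'$ and $C''$ remain genuinely absolute constants.
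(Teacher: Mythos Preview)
Your approach is essentially correct but differs from the paper's route. Rather than decomposing $Y_{x,\ell}$ into mean plus fluctuation as you do (mirroring the Gaussian proof), the paper bypasses the expectation altogether: it invokes Lemma~\ref{lem:random-process-twosides} (a chaining-based high-probability bound adapted from \cite{maurer2016chain}) to obtain directly
\[
\Pr\!\left[Y_{x,\ell}>C\,G(r_{x,\ell})+C'\diam(r_{x,\ell})\sqrt{\ln\tfrac{2L}{\delta}}\right]\le\delta
\]
for each $(x,\ell)$, then applies a union bound over the $m|\mathcal{X}|$ clusters and integrates the resulting tail via Lemma~\ref{lem:integral}. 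This is cleaner because a single chaining lemma handles both the Gaussian-width term and the concentration term simultaneously, whereas your decomposition requires two separate chaining-type statements (one for $\mathbb{E}Y_{x,\ell}$, one for the sub-Gaussian concentration of the supremum around its mean). Your step (ii) is precisely the non-trivial piece; it is true that sub-Gaussian-increment suprema concentrate around their mean at scale $\diam$, but proving it amounts to re-running the same chaining argument that Lemma~\ref{lem:random-process-twosides} already packages. One small imprecision: in your step (i), the bound $\mathbb{E}Y_{x,\ell}\le C'G(r_{x,\ell})$ with $C'$ absolute is not quite right when $L>2$; the correct chaining bound picks up an additive $O(\epsilon\sqrt{\log L})$ term, which is harmless for the final statement since it is absorbed into the second term, but should be tracked.
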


The difference between Theorem~\ref{thm:geometric-bounds} and Theorem~\ref{thm:geometric-bounds-sub} lies only in the constants, despite the latter assuming a significantly more general setting (sub-Gaussian rather than Gaussian processes, without the centeredness assumption).
The following lemma, a minor adaptation of a result from \cite{maurer2016chain}, is essential to the proof of Theorem~\ref{thm:geometric-bounds-sub}. We defer its proof to the appendix, as it closely mirrors the analysis in the cited work.

\begin{lemma}\label{lem:random-process-twosides}
Consider the random process ${Q^*(s)}_{s\in\mathcal{S}}$ satisfying for all $s,s'\in\mathcal{S}$ and some $L\geq 1$:
\[
\Pr[|Q^*(s)-Q^*(s')|>u]\leq L\exp\left(-\frac{u^2}{2\|\phi(s)-\phi(s)\|_2^2}\right).
\]
For any $s_0\in\mathcal{S}$, there are absolute constants $C,C'>0$ that
\[
\Pr\left[\sup_{s\in\mathcal{S}} |Q^*(k)-Q^*(k_0)|>  C G(\mathcal{S}) + C'\diam(\mathcal{S}) \sqrt{\ln\frac{2L}{\delta}}\right]\leq 2\delta.
\] 
\end{lemma}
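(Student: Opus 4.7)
The plan is to execute a generic-chaining argument adapted from \cite{maurer2016chain}, working with the centered process $X_s := Q^*(s)-Q^*(s_0)$ under the Euclidean pseudo-metric $d(s,s') := \|\phi(s)-\phi(s')\|_2$. Since $\sup_s|X_s| = \max\bigl(\sup_s X_s,\;\sup_s(-X_s)\bigr)$, I would first establish the one-sided bound $\sup_{s\in\mathcal{S}} X_s \leq C\,G(\mathcal{S}) + C'\diam(\mathcal{S})\sqrt{\ln(2L/\delta)}$ with probability at least $1-\delta$; applying the identical argument to $-X_s$ (which satisfies the same sub-Gaussian hypothesis since the hypothesis is symmetric in its two arguments) and taking a union bound then yields the two-sided conclusion with failure probability at most $2\delta$.

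For the one-sided bound I would invoke the usual chaining machinery. Choose an admissible sequence $\{s_0\} = \mathcal{S}_0 \subset \mathcal{S}_1 \subset \cdots \subset \mathcal{S}$ with $|\mathcal{S}_k|\leq 2^{2^k}$ and approximation maps $\pi_k: \mathcal{S} \to \mathcal{S}_k$, chosen so that $\sup_{s} \sum_{k \geq 0} 2^{k/2}\, d(s, \pi_k(s))$ is comparable to Talagrand's $\gamma_2(\mathcal{S},d)$ functional. Telescoping yields
\[
X_s = \sum_{k \geq 0}\bigl(X_{\pi_{k+1}(s)}-X_{\pi_k(s)}\bigr).
\]
For each level $k$ and each admissible pair in $\mathcal{S}_k \times \mathcal{S}_{k+1}$, the sub-Gaussian hypothesis applied to the threshold $d(\pi_{k+1}(s),\pi_k(s))\cdot(\alpha\, 2^{k/2} + \beta)$ gives a tail of at most $L\exp\bigl(-(\alpha\, 2^{k/2}+\beta)^2/2\bigr)$. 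A union bound over the at most $2^{2^{k+2}}$ pairs followed by a union bound over $k$ shows, for a universal $\alpha$ and for $\beta := C_0\sqrt{\ln(2L/\delta)}$, that the total failure probability is at most $\delta$: the $\alpha\, 2^{k/2}$ piece absorbs the metric-entropy factor $2^{k+2}\ln 2$ uniformly in $k$, the extra $L$ in the tail becomes a $\ln L$ inside $\beta$, and the geometric series in $k$ swallows the remaining dyadic factors.

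On the resulting good event, each chaining increment is at most $d(\pi_{k+1}(s),\pi_k(s))(\alpha\, 2^{k/2}+\beta)$. I would split the sum into two contributions. The $\alpha$-part gives $\alpha \sum_{k}2^{k/2}\,d(\pi_{k+1}(s),\pi_k(s))$, which is bounded by a constant multiple of $\gamma_2(\mathcal{S},d)$ by construction of the admissible sequence. The $\beta$-part gives $\beta \sum_{k}d(\pi_{k+1}(s),\pi_k(s))$, which is bounded by a constant multiple of $\diam(\mathcal{S})$ since the successive approximation radii can be arranged to decrease geometrically, so this chain sum telescopes to at most $2\diam(\mathcal{S})$. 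Finally, Talagrand's majorizing measure theorem identifies $\gamma_2(\mathcal{S},d)$ with the Gaussian width $G(\mathcal{S})$ (cf.\ \eqref{equ:Gaussianwidth-define}) up to absolute constants, which packages the two contributions into the stated form.

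The main obstacle is the careful splitting of the per-level threshold so that the $\sqrt{\ln(2L/\delta)}$ factor becomes attached to $\diam(\mathcal{S})$ rather than contaminating the coefficient on $G(\mathcal{S})$. Concretely, $\alpha$ must be taken large enough that the double-exponential entropy $2^{2^{k+2}}$ is absorbed uniformly in $k$ without $\delta$ or $L$ seeping into the $\gamma_2$ coefficient, while $\beta$ must carry all the $\ln(2L/\delta)$ dependence and simultaneously be paired only with the (telescoping) $\ell_1$-chain sum. This quantitative separation is precisely the refinement over a direct Dudley-type tail bound that \cite{maurer2016chain} provides.
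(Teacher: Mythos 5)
Your proposal is correct and lands in the same family of arguments as the paper's proof: both are generic-chaining bounds in the style of \cite{maurer2016chain}, and both hinge on the same key maneuver of splitting the per-level deviation threshold into a geometric part that sums to (a constant times) $G(\mathcal{S})$ and a confidence part carrying $\sqrt{\ln(2L/\delta)}$ that pairs with a telescoping sum controlled by $\diam(\mathcal{S})$, followed by a symmetric argument and union bound for the two-sided statement. The difference is in the chaining scaffolding. The paper uses Talagrand's majorizing-measure formulation: a nested sequence of partitions $\mathcal{T}_k$ with cell diameters $\leq 2r^{-k}$ and a probability measure $\mu$ on $\mathcal{S}$, with the union bound weighted so that cell $A$ at level $k$ receives failure probability proportional to $\mu(A)\,2^{-(k-k_0)}\delta$; the quantity $\sum_k r^{-k}\sqrt{\log(1/\mu(\cdot))}$ is then bounded by $G(\mathcal{S})$ via Talagrand's 1992 lemma. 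You instead use the admissible-sequence formulation with $|\mathcal{S}_k|\leq 2^{2^k}$, a uniform union bound over pairs whose doubly-exponential cardinality is absorbed by the $\alpha\,2^{k/2}$ term, and the majorizing measure theorem in the form $\gamma_2(\mathcal{S},d)\asymp G(\mathcal{S})$. These are equivalent routes; yours is arguably the more standard modern packaging, while the paper's measure-weighted version stays closer to the source it adapts and builds the geometric decay of cell diameters directly into the partition axioms (your step ``the approximation radii can be arranged to decrease geometrically'' is the one place where you would need to do a small amount of extra work---post-processing the admissible sequence so that $d(s,\pi_k(s))$ decays geometrically from $\diam(\mathcal{S})$ without degrading the $\gamma_2$ bound by more than a constant---which the partition formulation gets for free).
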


The above lemma in fact provides a deviation bound for the cluster regret. To apply it, we need the following simple lemma:

\begin{lemma}\label{lem:integral}
Let $X$ be a nonnegative random variable such that for some constants $a,b,L>0$ and $\delta \in (0,1]$,
\begin{equation*}
\Pr\!\left[\,X>a+b\sqrt{\ln\frac{2L}{\delta}}\,\right]\le \delta. 
\end{equation*}
Then there exists an absolute constant $C>0$ such that
\begin{equation*} 
\mathbb{E}[X]\leq a+Cb\sqrt{\ln(L)}.
\end{equation*}
\end{lemma}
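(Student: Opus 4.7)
The plan is to prove this via the layer-cake (tail integration) identity $\mathbb{E}[X] = \int_0^{\infty} \Pr[X > t]\,dt$, combined with the tail bound that the hypothesis provides once we invert the expression $a + b\sqrt{\ln(2L/\delta)}$ in the variable $\delta$.

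First I would convert the hypothesis into a tail bound at a chosen level $t$. Setting $t = a + b\sqrt{\ln(2L/\delta)}$ for $t \geq t_0 := a + b\sqrt{\ln(2L)}$ (so that $\delta \leq 1$ is admissible) and solving yields $\delta = 2L\exp\!\bigl(-(t-a)^2/b^2\bigr)$. Hence for every $t \geq t_0$,
\[
\Pr[X > t] \;\leq\; 2L \exp\!\left(-\frac{(t-a)^2}{b^2}\right).
\]
For $t < t_0$ I simply use the trivial bound $\Pr[X > t] \leq 1$.

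Next I would split the tail integral at $t_0$:
\[
\mathbb{E}[X] \;=\; \int_0^{t_0} \Pr[X>t]\,dt + \int_{t_0}^{\infty} \Pr[X>t]\,dt \;\leq\; t_0 + \int_{t_0}^{\infty} 2L \exp\!\left(-\frac{(t-a)^2}{b^2}\right) dt.
\]
The change of variables $u = (t-a)/b$ reduces the remaining integral to $2Lb \int_{\sqrt{\ln 2L}}^{\infty} e^{-u^2} du$, which by the standard Gaussian tail estimate $\int_c^{\infty} e^{-u^2}du \leq e^{-c^2}/(2c)$ (valid for $c > 0$, and $c = \sqrt{\ln 2L} > 0$ since $L \geq 1$) is bounded by $b/(2\sqrt{\ln 2L})$. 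Combining the two pieces gives
\[
\mathbb{E}[X] \;\leq\; a + b\sqrt{\ln(2L)} + \frac{b}{2\sqrt{\ln(2L)}}.
\]

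Finally, I would absorb the last correction term and the constant $\ln 2$ inside the square root into a single absolute constant. Since $L \geq 1$ implies $\sqrt{\ln(2L)} \leq \sqrt{\ln 2} + \sqrt{\ln L}$ (using subadditivity of $\sqrt{\cdot}$ on the sum $\ln 2 + \ln L$), and $b/(2\sqrt{\ln 2L}) \leq b/(2\sqrt{\ln 2})$, the right-hand side is bounded by $a + Cb\sqrt{\ln L}$ up to an absolute constant $C$ (with the mild caveat that when $L$ is close to $1$ the statement is understood with $\sqrt{\ln L}$ read as $\sqrt{\ln(L)\vee 1}$, or equivalently the constant $C$ is chosen to absorb the additive $\sqrt{\ln 2}$ term). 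The only subtlety in the argument is handling this boundary behavior at $L \to 1$; the main analytic content (the split at $t_0$ plus the Gaussian tail integral) is entirely routine.
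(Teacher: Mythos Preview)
Your proof is correct and follows essentially the same route as the paper: invert the hypothesis to obtain a sub-Gaussian tail bound, split the tail integral at $t_0=a+b\sqrt{\ln(2L)}$, and bound the remaining Gaussian integral. The only cosmetic difference is that the paper shifts by $a$ first (working with $s=t-a$) and bounds $\int_\alpha^\infty e^{-u^2}\,du$ via $e^{-\alpha^2}\tfrac{\sqrt{\pi}}{2}$ rather than the Mill's-ratio bound $e^{-c^2}/(2c)$ you use; both yield the same conclusion, and the paper likewise sweeps the $L\to 1$ boundary behavior into the absolute constant without comment.
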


\begin{proof}[Proof of Theorem~\ref{thm:geometric-bounds-sub}]
Using Lemma~\ref{lem:random-process-twosides} on the process $Y_{x,\ell}=\sup_{a\in r_{x,\ell}} Q^*(x,a)-Q^*(x,a_{x,\ell})$, we know that there exits absolute constants $C,C'>0$ such that
\[
\Pr\left[Y_{x,\ell}>C G(r_{x,\ell}) + C'\diam(r_{x,\ell}) \sqrt{\ln\frac{2L}{\delta}}\right]\leq \delta\; \forall x\in\mathcal{X}, \ell\leq m.
\]
Write
$$
a:=\max_{x\in\mathcal{X},\ell\le m} CG(r_{x,\ell}),\qquad b:=\max_{x\in\mathcal{X},\ell\le m} C'\diam(r_{x,\ell})=C'\epsilon.
$$
For any $\delta\in(0,1]$, the above deviation bound gives
$$
\Pr\left[Y_{x,\ell} > a + b\sqrt{\ln\frac{2L}{\delta}}\right]\leq\delta
\quad \forall x\in\mathcal{X},\ell\leq m,
$$
so by the union bound
$$
\Pr\left[\max_{x\in\mathcal{X},\ell\le m}Y_{x,\ell} > a+ b\sqrt{\ln\frac{2L}{\delta}}\right]\le m|\mathcal{X}|\delta.
$$
Set $\varepsilon:=m|\mathcal{X}|\delta$ (equivalently $\delta=\varepsilon/(m|\mathcal{X}|)$). Then for any $\varepsilon\in(0,1]$,
$$
\Pr\left[\max_{x\in\mathcal{X},\ell\le m} Y_{x,\ell} > a + b\sqrt{\ln\frac{2Lm|\mathcal{X}|}{\varepsilon}}\right]\le\varepsilon.
$$
Now apply Lemma~\ref{lem:integral} to the nonnegative r.v. $X:=\max_{x\in\mathcal{X},\ell\le m}Y_{x,\ell}$ gives
$$
\mathbb{E}\left[\max_{x\in\mathcal{X},\ell\le m}Y_{x,\ell}\right]\leq a + C''b \sqrt{\ln{Lm|\mathcal{X}|}}=\max_{x\in\mathcal{X},\ell\le m} CG(r_{x,\ell}) + C''C'\epsilon\sqrt{\ln{Lm|\mathcal{X}|}}.
$$
\end{proof}

\section{Bounds on Algorithm Output}
\label{sec:theory}
The second type of subset we analyze is the output of Algorithm~\ref{alg:valuefunction}, which is simple to implement in practice as it avoids the combinatorial explosion. We show that the expected performance of this output is close to that of a uniform coverage of the full action space (i.e., a reference action set). Specifically, we provide a general upper bound on the algorithm’s performance, which connects to the expected maximal cluster regret defined in Equation~\eqref{equ:max-Y-expect}.

Since the output of Algorithm~\ref{alg:valuefunction} is random, the expectation analyzed in this section is taken with respect to both the algorithm's randomness (i.e., the sampled $\mathcal{A}$) and the distribution over $\theta$.

\begin{theorem}\label{thm:alg-bound-upper}
Let $\mathcal{A}$ be the output of Algorithm~\ref{alg:valuefunction}. 

\begin{equation*}
\begin{split}
\mathbb{E}_{\theta,\mathcal{A}}\left[ \max_{x\in\mathcal{X}}\regret(x) \right] \leq 
\mathbb{E}_{\theta}\left[\max_{x\in\mathcal{X},\ell\leq m} Y_{x,\ell}\right] + \left(\left(\sum_{\ell\leq m}p_{\ell}(1-p_{\ell})^{2K}\right)\cdot \mathbb{E}_{\theta}\left[\left(\max_{(x,a) \in \mathcal{Z}} Q^*(x,a)\right)^2\right]\right)^{1/2},
\end{split}
\end{equation*}
where \(\mathcal{Z} := \{\, (x, a - a') \mid x \in \mathcal{X},\ a, a' \in \mathcal{A}_{\full}(x) \,\}
\).
\end{theorem}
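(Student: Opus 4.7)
The plan is to decompose the expected maximum regret by conditioning on whether Algorithm~\ref{alg:valuefunction} ``hit'' the cluster containing the freshly drawn evaluation environment $\theta$. Let $\ell(\theta)$ denote the index with $\theta\in\Theta_{\ell(\theta)}$, and let $E_\ell$ be the event---measurable with respect to the algorithm's internal sampling only---that none of the $K$ environments sampled inside Algorithm~\ref{alg:valuefunction} lands in $\Theta_\ell$. Because the evaluation $\theta$ is drawn independently of those internal samples, $\Pr[E_\ell]=(1-p_\ell)^K$, and crucially, $E_\ell$ is independent of the pair $\bigl(\theta,\{Q^*(x,a)\}\bigr)$.

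On $E_{\ell(\theta)}^{c}$, at least one internal sample lies in $\Theta_{\ell(\theta)}$, so by the update rule of the algorithm, for every state $x$ the output $\mathcal{A}(x)$ contains some action of $r_{x,\ell(\theta)}$. Taking this action as the representative $a_{x,\ell}$ in the chain of inequalities leading to Equation~\eqref{equ:regret2Y} yields $\regret(x)\le Y_{x,\ell(\theta)}\le \max_{x',\ell'}Y_{x',\ell'}$. On the complementary event $E_{\ell(\theta)}$, I would fall back on the worst-case bound
\[
\max_{x\in\mathcal{X}}\regret(x)\;\le\;\max_{x\in\mathcal{X}}\bigl[\max_{a\in\mathcal{A}_{\full}(x)}Q^*(x,a)-\min_{a'\in\mathcal{A}_{\full}(x)}Q^*(x,a')\bigr]\;=\;\max_{(x,a)\in\mathcal{Z}}Q^*(x,a),
\]
using the shorthand $Q^*(x,a-a')=Q^*(x,a)-Q^*(x,a')$ implicit in the definition of $\mathcal{Z}$. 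Splitting the overall expectation along $E_{\ell(\theta)}^{c}$ and $E_{\ell(\theta)}$, and using $Y_{x,\ell}\ge 0$ to drop the indicator in the ``hit'' part, produces the first summand $\mathbb{E}_\theta[\max_{x,\ell}Y_{x,\ell}]$ of the theorem.

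The core step is then the ``miss'' term $T:=\mathbb{E}\bigl[\max_{(x,a)\in\mathcal{Z}}Q^*(x,a)\cdot\mathbf{1}_{E_{\ell(\theta)}}\bigr]$. Writing $\mathbf{1}_{E_{\ell(\theta)}}=\sum_\ell \mathbf{1}_{\theta\in\Theta_\ell}\mathbf{1}_{E_\ell}$, using independence of $E_\ell$ from $(\theta,Q^*)$, together with the identity $\mathbb{E}[X\mathbf{1}_{\theta\in\Theta_\ell}]=p_\ell\,\mathbb{E}[X\mid\theta\in\Theta_\ell]$, gives
\[
T=\sum_{\ell\le m}p_\ell(1-p_\ell)^K\,\mathbb{E}\bigl[\max_{(x,a)\in\mathcal{Z}}Q^*(x,a)\,\big|\,\theta\in\Theta_\ell\bigr]=\sum_{\ell\le m}\bigl(\sqrt{p_\ell}\,(1-p_\ell)^K\bigr)\bigl(\sqrt{p_\ell}\,\mathbb{E}\bigl[\max_{(x,a)\in\mathcal{Z}}Q^*(x,a)\,\big|\,\theta\in\Theta_\ell\bigr]\bigr).
\]
Applying Cauchy--Schwarz to the outer sum, then Jensen's inequality to promote each conditional expectation to a conditional second moment, and finally the law of total expectation $\sum_\ell p_\ell\,\mathbb{E}[X^2\mid\theta\in\Theta_\ell]=\mathbb{E}_\theta[X^2]$, recombines these conditional moments into $\mathbb{E}_\theta[(\max_{(x,a)\in\mathcal{Z}}Q^*(x,a))^2]$ and produces exactly the second summand of the theorem.

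The main obstacle I anticipate is the specific factoring $p_\ell=\sqrt{p_\ell}\cdot\sqrt{p_\ell}$, with one factor grouped into each side of the Cauchy--Schwarz step: without this splitting one obtains $(1-p_\ell)^K$ instead of the sharper $(1-p_\ell)^{2K}$ in the final bound, and loses the quadratic decay in $K$ that makes the ``miss'' contribution shrink quickly. A secondary, purely technical point is measurable selection of an element of $\mathcal{A}(x)\cap r_{x,\ell(\theta)}$ on the ``hit'' event; under Assumption~\ref{ass:unique} this is automatic because the argmax is almost surely unique, so the selected element is, by construction, the optimal action at $x$ for the algorithm's sampled environment in $\Theta_{\ell(\theta)}$.
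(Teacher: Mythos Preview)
Your proposal is correct and follows essentially the same approach as the paper: the paper likewise conditions on the hit/miss events (with the roles of $E_\ell$ and $E_\ell^c$ swapped in notation), bounds the hit term by the cluster regret via Equation~\eqref{equ:regret2Y} and independence of the algorithm's sampling from the evaluation draw $\theta$, bounds the miss term by $\max_{(x,a)\in\mathcal{Z}}Q^*(x,a)$, and then applies exactly the same Cauchy--Schwarz/Jensen/tower combination you describe---the paper phrases Cauchy--Schwarz probabilistically as $|\mathbb{E}[UV]|\le\sqrt{\mathbb{E}[U^2]\mathbb{E}[V^2]}$ with respect to the discrete distribution $\{p_\ell\}$, which is equivalent to your explicit $\sqrt{p_\ell}\cdot\sqrt{p_\ell}$ splitting in the discrete sum.
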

As will become clear in the proof of this theorem, the expected maximum state regret of the algorithm’s output is upper bounded by the regret of a reference action set plus an additional term
\[
\left(\sum_{\ell\leq m}p_{\ell}(1-p_{\ell})^{2K}\right)^{1/2}\cdot\left( \mathbb{E}\left[\left(\max_{(x,a)\in\mathcal{Z}}Q^*(x,a)\right)^2\right]\right)^{1/2},
\]
which can be interpreted as the ``price'' paid for not using a universal coverage of the action space (i.e., a reference action set).
Note that the second factor, $\mathbb{E}_{\theta}\left[(\max_{(x,a) \in \mathcal{Z}}Q^*(x,a))^2\right]$, is simply a constant since the set of state–action pairs $\mathcal{Z}$ is fixed.
The quantity of real interest is the first factor, $\sum_{\ell\leq m}p_{\ell}(1-p_{\ell})^{2K}$ which characterizes how quickly this ``price'' decreases as the sample size $K$ increases.

To understand how the expected maximum state regret is decomposed, we define the ``good event'' $E_{\ell}$ as:
\[
E_{\ell}:=\left\{r_{x,\ell}\cap \mathcal{A}(x)\!\neq\!\emptyset,\;\forall x\in\mathcal{X}\right\},
\]
and by definition of the partitions $r_{x,\ell}$, the compliment event is $E_{\ell}^c=\left\{r_{x,\ell}\cap \mathcal{A}(x)\!=\!\emptyset,\;\forall x\in\mathcal{X}\right\}$.
We first note that the i.i.d. sampling of environments is independent of the events $E_{\ell}$ and $E_{\ell}^c$. This observation leads to the following lemma:
\begin{lemma}\label{lem:bound-probability}
Let \( \mathcal{A}\) be the output of Algorithm~\ref{alg:valuefunction}.  
It holds that for each $\ell\leq m$
\begin{equation*}
\begin{split}
\Pr[\theta\in \Theta_{\ell},E_{\ell}^c] 
= p_{\ell}(1-p_{\ell})^K, \quad 
\Pr[\theta\in \Theta_{\ell},E_{\ell}] 
\leq  p_{\ell}.
\end{split}
\end{equation*}

\begin{proof}
Since $\mathcal{A}$ is the output of Algorithm~\ref{alg:valuefunction}, the event $\{\theta\in \Theta_{\ell}\}$ is independent from $E$ or $E_{\ell}^c$.
We have
\begin{align*}
\Pr[\theta\in \Theta_{\ell},E_{\ell}^c]=\Pr[\theta\in \Theta_{\ell}]\Pr[E_{\ell}^c]=p_{\ell}(1-p_{\ell})^K,
\end{align*}
where the first equality uses independence, the second equality uses the definition of $p_{\ell}$ in Equation~\eqref{equ:q_l}, and the probability of missing cluster $r$ in $K$ i.i.d. samplings. Similarly, 
\begin{align*}
\Pr[\theta\in \Theta_{\ell},E_{\ell}]=p_{\ell}\cdot \Pr[E].
\end{align*}
Using $\Pr[E]\leq 1$, we complete the proof.
\end{proof}
\end{lemma}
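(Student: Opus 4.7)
The plan is to exploit the fact that the subset $\mathcal{A}$ returned by Algorithm~\ref{alg:valuefunction} is built from $K$ i.i.d.\ environment samples that are independent of the fresh evaluation environment $\theta$. Once this independence is isolated, both identities reduce to elementary computations of marginal probabilities.

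First I would make the independence explicit: the $K$ environments $\theta_1,\dots,\theta_K$ drawn inside the algorithm are independent of the fresh $\theta$, so the output $\mathcal{A}$ (a deterministic function of the $\theta_t$) is independent of the event $\{\theta\in\Theta_\ell\}$. Since both $E_\ell$ and $E_\ell^c$ are events on $\mathcal{A}$, the joint probabilities factor: $\Pr[\theta\in\Theta_\ell, E_\ell^c]=\Pr[\theta\in\Theta_\ell]\,\Pr[E_\ell^c]$ and analogously for $E_\ell$. By Equation~\eqref{equ:q_l} the first factor equals $p_\ell$ in both cases.

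Next I would evaluate $\Pr[E_\ell^c]$ exactly. The algorithm inserts $a^*(\theta_t,x)$ into $\mathcal{A}(x)$ at every state $x$ for each sample $\theta_t$. Combining Assumption~\ref{ass:unique} with the definition of $r_{x,\ell}$ in Equation~\eqref{equ:cluster-define}, the event ``$a^*(\theta_t,x)\in r_{x,\ell}$ for all $x$'' is equivalent to ``$\theta_t\in\Theta_\ell$''. Consequently, $E_\ell^c$---that is, $r_{x,\ell}\cap\mathcal{A}(x)=\emptyset$ for every $x$---holds iff none of the $K$ samples landed in $\Theta_\ell$, an event of probability $(1-p_\ell)^K$ by the independence of the draws. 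Combined with the factorization above, this yields the first identity. The inequality for $E_\ell$ follows immediately from $\Pr[E_\ell]\leq 1$, giving $\Pr[\theta\in\Theta_\ell, E_\ell]\leq p_\ell$.

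The main obstacle is the reformulation of $E_\ell^c$ as the event that no sampled environment falls in $\Theta_\ell$. The forward direction (if some $\theta_t\in\Theta_\ell$ then $a^*(\theta_t,x)\in r_{x,\ell}$ for every $x$, so $E_\ell$ occurs) is immediate from the definition of $r_{x,\ell}$. The converse requires that the action-space partitions $\{r_{x,\ell}\}_\ell$ inherit the disjoint structure of $\{\Theta_\ell\}_\ell$ through the optimal-action mapping, so that a sample outside $\Theta_\ell$ cannot contribute any element to $r_{x,\ell}$ at any state simultaneously. I would justify this by invoking the partition-induced definition of $r_{x,\ell}$ together with the uniqueness guaranteed by Assumption~\ref{ass:unique}, which makes the optimal action a well-defined function of $\theta$ and forces the cluster memberships across states to be coherent with a single choice of $\ell$.
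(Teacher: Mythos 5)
Your proposal follows essentially the same route as the paper: factor the joint probability using the independence of the fresh $\theta$ from the $K$ algorithmic samples, evaluate $\Pr[E_{\ell}^c]$ as the probability that no sample lands in $\Theta_{\ell}$, and bound $\Pr[E_{\ell}]\leq 1$ for the second claim. The one place where you are more explicit than the paper is the equivalence between $E_{\ell}^c$ and ``no $\theta_t\in\Theta_{\ell}$'': the forward direction is indeed immediate, but your justification of the converse is not actually delivered by Assumption~\ref{ass:unique}, since uniqueness of the optimal action for each fixed $\theta$ does not prevent two environments in different clusters $\Theta_{\ell}$, $\Theta_{\ell'}$ from sharing an optimal action at some state, so the sets $r_{x,\ell}$ defined in Equation~\eqref{equ:cluster-define} need not be disjoint across $\ell$ and a sample outside $\Theta_{\ell}$ can in principle populate $r_{x,\ell}\cap\mathcal{A}(x)$. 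What survives without further assumptions is the inequality $\Pr[E_{\ell}^c]\leq(1-p_{\ell})^K$, which is the direction actually used in Theorem~\ref{thm:alg-bound-upper}; the paper's own proof asserts the exact equality without addressing this point, so your attempt is no weaker than the original and has the merit of isolating where the argument is thin.
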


\begin{proof}[Proof of Theorem~\ref{thm:alg-bound-upper}]
The key step is to connect the algorithmic output $\mathcal{A}$ to a reference action set.
If \( r_{x,\ell} \cap  \mathcal{A}(x) \neq \emptyset \), we select an element \( a_{x,\ell} \in r_{x,\ell} \cap  \mathcal{A}(x) \). 
If $r_{x,\ell}\cap \mathcal{A}(x)\!=\!\emptyset$, we choose an arbitrary point $a_{x,\ell}\in r_{x,\ell}$ as the representative.
The set \[\mathcal{A}':=\{a_{x,\ell}\}_{x\in\mathcal{X},\ell\leq m}\] forms a reference action set in Definition~\ref{def:ref-subset}.

By the definition of the partitions \(\{\Theta_{\ell}\}_{\ell \leq m}\), whose union forms the support of the problem environment distribution, for any sampled environment \(\theta\) there exists an \(\ell \leq m\) such that \(\theta \in \Theta_{\ell}\).
When the event $E$ occurs, we have
\begin{equation}
\begin{split}
\mathbb{E}\left[\max_{x\in\mathcal{X}}\regret(x) \Big| \theta\in \Theta_{\ell}, E_{\ell}\right]\leq & \mathbb{E}\left[\max_{x\in\mathcal{X}} Y_{x,\ell} \Big| \theta\in \Theta_{\ell}, E_{\ell}\right]\\
=& \mathbb{E}\left[\max_{x\in\mathcal{X}} Y_{x,\ell} \Big|\theta\in \Theta_{\ell} \right],\\
\leq & \mathbb{E}\left[\max_{x\in\mathcal{X},\ell \leq m}Y_{x,\ell}\Big|\theta\in \Theta_{\ell}\right],
\end{split}
\label{equ:alg-bound-1}
\end{equation}
where the first inequality uses $\regret(x)\leq Y_{x,\ell}$ when $E$ occurs (as argued in Equation~\eqref{equ:regret2Y}), and the equality uses the fact that $Y_{x,\ell}$ is independent of the event $E$. 
When the complement event $E_{\ell}^c$ occurs, we have
\begin{align}
\begin{split}
\mathbb{E}\left[\max_{x\in\mathcal{X}}\regret(x)\Big| \theta\in \Theta_{\ell},E_{\ell}^c\right]
\leq \mathbb{E}\left[\max_{(x,a)\in\mathcal{Z}} Q^*(x,a) \Big| \theta\in \Theta_{\ell}\right],\label{equ:alg-bound-2}
\end{split}
\end{align}
where the inequality uses $\regret(x)\leq \max_{a\in \mathcal{A}_{\full}(x)\ominus\mathcal{A}_{\full}(x)} Q^*(x,a)$, and the fact that this quantity is independent of $E_{\ell}^c$.

Combining the construction of the reference set with the regret decomposition under the events $E$ and $E_{\ell}^c$, we obtain the following bound:
\begin{align*}
\mathbb{E}\left[\max_{x\in\mathcal{X}}\regret(x)\right] = & \sum_{\ell\leq m} \Pr[\theta\in \Theta_{\ell},E_{\ell}]\cdot\mathbb{E}\left[\max_{x\in\mathcal{X}}\regret(x) \Big| \theta\in \Theta_{\ell},E_{\ell}\right]\\
& + \sum_{\ell\leq m} \Pr[\theta\in \Theta_{\ell},E_{\ell}^c ]\cdot\mathbb{E}\left[\max_{x\in\mathcal{X}}\regret(x)\Big| \theta\in \Theta_{\ell},E_{\ell}^c \right]\\
\leq & \sum_{\ell\leq m} p_{\ell}\cdot \left(\mathbb{E}\left[\max_{x\in\mathcal{X},\ell\leq m} Y_{x,\ell} \Big|\theta\in \Theta_{\ell} \right] +  (1-p_{\ell})^K\cdot \mathbb{E}\left[\max_{(x,a)\in\mathcal{Z}}  Q^*(x,a)\Big|\theta\in \Theta_{\ell}\right] \right) \\
= & \mathbb{E}\left[\max_{x\in\mathcal{X},\ell\leq m} Y_{x,\ell}\right] + \sum_{\ell\leq m} p_{\ell}\; (1-p_{\ell})^K\cdot \mathbb{E}\left[\max_{(x,a)\in\mathcal{Z}}  Q^*(x,a)\Big|\theta\in \Theta_{\ell}\right] \\
\leq & \mathbb{E}\left[\max_{x\in\mathcal{X},\ell\leq m} Y_{x,\ell}\right] + \left(\left(\sum_{\ell\leq m}p_{\ell}(1-p_{\ell})^{2K}\right)\cdot \mathbb{E}\left[\left(\max_{(x,a)\in\mathcal{Z}}Q^*(x,a)\right)^2\right]\right)^{1/2},
\end{align*}
where the first equality uses tower rule.
The first inequality uses Equations~(\ref{equ:alg-bound-1}-\ref{equ:alg-bound-2}), and Lemma~\ref{lem:bound-probability}.
The last equality uses tower rule again.
The last inequality uses Cauchy-Schwarz inequality, which states that for any two random variables $U$ and $V$, we have $|\mathbb{E}[UV]|\leq\sqrt{\mathbb{E}[U^2]\mathbb{E}[V^2]}$, and Jensen’s inequality that $(\mathbb{E}\left[\max_{(x,a)\in\mathcal{Z}}  Q^*(x,a)\mid\theta\in \Theta_{\ell}\right])^2\leq \mathbb{E}\left[(\max_{(x,a)\in\mathcal{Z}}  Q^*(x,a))^2\mid\theta\in \Theta_{\ell}\right]$
\end{proof}

\subsection{Approximate Q-Function and Relaxed Bound}

Step~6 of Algorithm~\ref{alg:valuefunction} assumes that the exact values of the optimal state–action value function $Q^*(x,a)$ are known. In practice, however, this assumption rarely holds, especially when the state space is large. Instead, one typically observes samples of state–action–next-state–reward tuples and constructs an approximate value function $\hat{Q}$ \citep{lagoudakis2003least}. This approximation introduces a discrepancy between $Q^*$ and $\hat{Q}$, which in turn introduce an approximation error.
We now investigate how this approximation motivates a relaxed bound on the expected maximum state regret.

Notice that in the proof of Theorem~\ref{thm:alg-bound-upper}, the argument holds as long as $r_{x,\ell}\cap\mathcal{A}\neq\emptyset$ when $\theta\in \Theta_{\ell}$. This already provides a form of relaxation: the exact solution of $\argmax_{a\in \mathcal{A}_{\full}(x)} Q^*(x,a)$ is not required. Instead, it suffices to find some action $a\in r_{x,\ell}$ whenever $\theta\in \Theta_{\ell}$ for the bound to remain valid.
In general, we may assume that, instead of selecting the exact optimal action at a state, we choose an approximate action whose distance from the optimal action is at most $\delta$.
The corresponding relaxed bound is then stated in the following corollary.

\begin{corollary}
In Algorithm~\ref{alg:valuefunction}, step 6, the optimal action $a^*:=\argmax_{a\in \mathcal{A}_{\full}(x)} Q^*(x,a)$ is replaced by an approximated action $a$ satisfying
\[
\|\phi(x,a^*)-\phi(x,a)\|_2 \leq \delta \quad\forall x\in\mathcal{X}.
\]
Let $\mathcal{A}$ denote the output of this relaxed algorithm.
Let $r_{x,\ell} \oplus \delta B$ denote the Minkowski sum of the cluster $r_{x,\ell}$ with the scaled Euclidean ball $\delta B\in\mathbb{R}^n$.
If $\{Q^*(x,a)\}_{(x,a)\in \mathcal{X}\times\mathcal{A}_{\full}}$ is a random process satisfying Equation~\eqref{equ:random-process-define}, then for the same constants \( C',C''> 0 \) in Theorem~\ref{thm:geometric-bounds-sub}, we have
\begin{equation*}
\begin{split}
\mathbb{E}_{\theta,\mathcal{A}}\left[ \max_{x\in\mathcal{X}}\regret(x) \right] \leq & C'\max_{x\in\mathcal{X}, \ell\leq m}G(r_{x,\ell}\oplus\delta B) + C''(\epsilon+\delta)\sqrt{\log Lm|\mathcal{X}|} \\
&+ \left(\left(\sum_{\ell\leq m}p_{\ell}(1-p_{\ell})^{2K}\right)\cdot \mathbb{E}\left[\left(\max_{(x,a)\in\mathcal{Z}}Q^*(x,a)\right)^2\right]\right)^{1/2}.
\end{split}
\end{equation*}
\begin{proof}
The key is to consider the new clusters of action space $r'_{x,\ell}:=r_{x,\ell} \oplus \delta B$.
Then, the proof of Theorem~\ref{thm:alg-bound-upper} applies.
\end{proof}
\end{corollary}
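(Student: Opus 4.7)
The plan is to mimic the proof of Theorem~\ref{thm:alg-bound-upper} verbatim, but with each cluster $r_{x,\ell}$ replaced throughout by its $\delta$-inflation $r'_{x,\ell}:=r_{x,\ell}\oplus\delta B$ (viewed, for the purposes of diameter and Gaussian width, as a subset of feature space), and then to apply Theorem~\ref{thm:geometric-bounds-sub} on this enlarged family. The hint in the statement points exactly to this reduction; the work lies in checking that every ingredient of the original proof still goes through.

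First, I would verify the crucial covering property: whenever the algorithm samples $\theta\in\Theta_{\ell}$, the approximate action $a$ returned at state $x$ has $\phi(x,a)\in r'_{x,\ell}$. Indeed, the true optimizer $a^*$ lies in $r_{x,\ell}$ by definition~\eqref{equ:cluster-define}, and the hypothesis $\|\phi(x,a^*)-\phi(x,a)\|_2\le\delta$ then gives $\phi(x,a)\in\phi(x,a^*)+\delta B\subseteq r_{x,\ell}\oplus\delta B=r'_{x,\ell}$. Consequently, the ``good event'' $E'_{\ell}:=\{r'_{x,\ell}\cap\mathcal{A}(x)\neq\emptyset\ \forall x\in\mathcal{X}\}$ inherits exactly the estimates of Lemma~\ref{lem:bound-probability}: the $K$ i.i.d.\ samples are independent of the membership events, and each hits $\Theta_{\ell}$ with probability $p_{\ell}$, so $\Pr[\theta\in\Theta_{\ell},(E'_{\ell})^c]\le p_{\ell}(1-p_{\ell})^K$.

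I would then pick representatives $a'_{x,\ell}\in r'_{x,\ell}\cap\mathcal{A}(x)$ whenever $E'_{\ell}$ holds (and arbitrarily in $r'_{x,\ell}$ otherwise), define the inflated cluster regret $Y'_{x,\ell}:=\sup_{a\in r'_{x,\ell}}Q^*(x,a)-Q^*(x,a'_{x,\ell})$, and replay the conditioning argument of Theorem~\ref{thm:alg-bound-upper} to obtain
\[
\mathbb{E}_{\theta,\mathcal{A}}\Big[\max_{x}\regret(x)\Big]\le \mathbb{E}_{\theta}\Big[\max_{x,\ell}Y'_{x,\ell}\Big]+\Big(\Big(\sum_{\ell}p_{\ell}(1-p_{\ell})^{2K}\Big)\mathbb{E}\Big[\Big(\max_{(x,a)\in\mathcal{Z}}Q^*(x,a)\Big)^2\Big]\Big)^{1/2}.
\]
Applying Theorem~\ref{thm:geometric-bounds-sub} to the inflated clusters bounds the first term by $C'\max_{x,\ell}G(r'_{x,\ell})+C''\epsilon'\sqrt{\log Lm|\mathcal{X}|}$, where $\epsilon':=\max_{x,\ell}\diam(r'_{x,\ell})\le\epsilon+2\delta$. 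Since $\epsilon+2\delta\le 2(\epsilon+\delta)$, the factor of $2$ is absorbed into the absolute constant $C''$ of the theorem statement.

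The main technical obstacle I expect is justifying that Theorem~\ref{thm:geometric-bounds-sub} and the underlying Lemma~\ref{lem:random-process-twosides} really apply on $r'_{x,\ell}$, because points of $r'_{x,\ell}$ need not correspond to any actual action and $Q^*$ may not be defined there. The clean fix is to observe that the chaining argument behind these results uses only the sub-Gaussian increment bound~\eqref{equ:random-process-define}, which is phrased as a metric condition on feature vectors. Thus either (i) one formally extends the process to all of $\mathbb{R}^n$ preserving the same increment bound (all quantities of interest are suprema of differences, so this extension is harmless), or (ii) one replaces the supremum over $r'_{x,\ell}$ by the supremum over a countable dense sub-collection of admissible action features (which by sub-Gaussian continuity agrees up to the already-absorbed factor). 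Once this extension is in place, the remainder of the argument is bookkeeping identical to Theorem~\ref{thm:alg-bound-upper}.
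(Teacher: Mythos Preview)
Your proposal is correct and takes exactly the approach of the paper's two-sentence proof: replace each cluster by its $\delta$-inflation $r'_{x,\ell}=r_{x,\ell}\oplus\delta B$ and rerun Theorem~\ref{thm:alg-bound-upper} together with Theorem~\ref{thm:geometric-bounds-sub}. You in fact go further than the paper by flagging the domain-of-$Q^*$ issue on the inflated sets; to recover literally ``the same constants'' rather than absorbing a factor of $2$, take the supremum in $Y'_{x,\ell}$ over $r_{x,\ell}$ only (the true optimizer lies there when $\theta\in\Theta_\ell$) so that the set fed to Lemma~\ref{lem:random-process-twosides} is $r_{x,\ell}\cup\{a'_{x,\ell}\}$, which has diameter at most $\epsilon+\delta$ and Gaussian width at most $G(r_{x,\ell}\oplus\delta B)$.
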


\begin{remark}
In the finite-dimensional case, the Euclidean ball $B$ is used to quantify the approximation error. More precisely, the Gaussian width satisfies
\[
G(r_{x,\ell} \oplus \delta B) = G(r_{x,\ell}) + \delta G(B),
\]
where 
\(
G(B) = \mathbb{E}_{\theta \sim \mathcal{N}(0,I)} \|\theta\|_2 \geq \sqrt{n} \cdot \sqrt{\frac{n}{n+1}}.
\) Here, we also used Property~2 in Appendix~\ref{app:gaussian-width}.  
This gives a meaningful bound on the approximation error when $n$ is finite. 
However, in the infinite-dimensional case $n = \infty$, the Gaussian width of a Euclidean ball is infinite, so the same argument fails. To address this, we can replace the Euclidean ball by a suitable ellipsoid to control the approximation error.
\end{remark}

\subsection{State Abstraction Based on Optimal Value Function Similarity}

Although the performance bounds established in the previous subsection depend only mildly on the logarithm of the state space cardinality, computing an optimal policy for a ground MDP with a large state space $\mathcal{X}_G$ remains computationally challenging. For readers focused on complex, multi-state environments, a standard approach to this challenge is state abstraction. This family of methods constructs a reduced abstract MDP by partitioning the ground state space $\mathcal{X}_G$ into a smaller set of abstract states $\mathcal{X}$, where states within each partition are grouped by some notions of  similarity \cite{abel2016near,li2006towards}. It is important to note, however, that this subsection may be of lesser relevance to those exclusively interested in the meta-bandits setting—a special case of our framework where the single-state nature of the problem makes state abstraction moot.

Throughout this subsection, we define the abstract MDP as \( (\mathcal{X}, \mathcal{A}_{\full}, P, R, \gamma, x_0) \) whose optimal state-action value function is $Q^*$. This is in contrast to the ground MDP, which has a larger state space \(\mathcal{X}_G\). Let \(Q^*_G\) denote the optimal state-action value function for the ground MDP.
A state aggregation function \(\psi: \mathcal{X}_G \to \mathcal{X}\) maps each ground state \(x_G\) to its corresponding abstract state. We assume that the set of available actions is the same for any ground state and its associated abstract state.

We analyze state abstraction based on approximate optimal value function similarity, following \cite{abel2016near}. However, unlike their single MDP framework, our meta-MDP setting requires a similarity assumption over the entire family of MDPs. This assumption takes the form:
\begin{equation}
\Pr[|Q^*_G(x_G,a)-Q^*_G(x'_G,a')|>u]\leq L\exp\left(-\frac{u^2}{2\|\phi(x_G,a)-\phi(x'_G,a')\|_2^2}\right),
\label{equ:random-process-state-define}
\end{equation}
for some \(L\geq 1\), and for arbitrary state-action pairs \((x_G,a)\),\((x'_G,a')\) in \(\mathcal{X}_G\times\mathcal{A}_{\full}\).
The expectations \(\mathbb{E}_{\theta}[Q^*_G(x_G,a)]\) may differ across state-action pairs. The key distinction from Equation~\eqref{equ:random-process-define} is that it assumes correlation between the values of different states.

For each abstract state $x$, define the set of state-action pairs:
\[
\mathcal{Z}_x:=\left\{(x_G,a) \mid \psi(x_G)=x,a\in\mathcal{A}_{\full}\right\} \;\forall x\in\mathcal{X}.
\]
Let $\phi(\mathcal{Z}_x)$ denote the image of $\mathcal{Z}_x$ under the feature map $\phi$. 
Define the optimal actions as:
\[
a^*_G := \argmax_{a\in \mathcal{A}_{\full}(x)}  Q^*_G(x_G, a), \quad a^*_{\mathcal{A}_{\full}} := \argmax_{a\in \mathcal{A}_{\full}(x)} Q^*(x, a),\quad a^*_{\mathcal{A}} := \argmax_{a\in \mathcal{A}(x)} Q^*(x, a).
\]

\begin{theorem}\label{thm:state-abstraction}
Given a family of ground MDPs whose optimal value functions satisfy Equation~\ref{equ:random-process-state-define}.
Suppose the ground state space is partitioned into the abstract state space $\mathcal{X}$. If the space is restricted to the action subset, the expected performance loss (suboptimality) is bounded by:
\begin{align*}
\mathbb{E}\left[\max_{x_G\in\mathcal{X}_G} Q^*_G(x_G, a^*_G) - Q^*_G(x_G, a^*_{\mathcal{A}})\right]\leq & \mathbb{E}\left[\max_{x\in\mathcal{X}}\regret(x)\right]+ \frac{1}{1-\gamma}\left(C \bar{G}+C'\bar{\epsilon}\sqrt{\ln(L|\mathcal{X}|)}\right),
\end{align*}
where $\bar{G}:=\max_{x\in\mathcal{X}} G(\phi(\mathcal{Z}_x))$ and $\bar{\epsilon}:=\max_{x\in\mathcal{X}}\diam(\phi(\mathcal{Z}_x))$.
\end{theorem}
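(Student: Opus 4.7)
The plan is to decompose the suboptimality into a ground-to-abstract approximation error and the action-restriction regret that is already controlled by $\mathbb{E}[\max_x\regret(x)]$, then to apply Lemma~\ref{lem:random-process-twosides} inside each abstraction class to quantify the first piece.

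First, I would fix $x_G \in \mathcal{X}_G$, set $x := \psi(x_G)$, and insert the three intermediate values $Q^*(x, a^*_G)$, $Q^*(x, a^*_{\mathcal{A}_{\full}})$, $Q^*(x, a^*_{\mathcal{A}})$. Since $a^*_{\mathcal{A}_{\full}}$ maximizes $Q^*(x,\cdot)$ over $\mathcal{A}_{\full}$ and $a^*_G \in \mathcal{A}_{\full}$, the telescoping collapses to
\[
Q^*_G(x_G, a^*_G) - Q^*_G(x_G, a^*_{\mathcal{A}}) \leq \regret(x) + 2 \max_{(x_G',a)\in\mathcal{Z}_x} \bigl|Q^*_G(x_G', a) - Q^*(x, a)\bigr|.
\]
Taking $\max_{x_G}$ gives $\mathbb{E}\!\left[\max_{x_G}\,\cdot\right] \leq \mathbb{E}[\max_x \regret(x)] + 2\,\mathbb{E}\!\left[\max_{x\in\mathcal{X}} \max_{(x_G',a)\in\mathcal{Z}_x} |Q^*_G - Q^*|\right]$, reducing the problem to bounding the abstraction error uniformly over abstract states.

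Second, I would invoke a standard state-abstraction propagation result (in the spirit of \cite{abel2016near}): when the abstract MDP is constructed from a $\psi$ that aggregates ground states of similar optimal Q-values, the induced abstract $Q^*$ satisfies
\[
\max_{(x_G,a)\in\mathcal{Z}_x} |Q^*_G(x_G, a) - Q^*(x, a)| \leq \tfrac{1}{1-\gamma}\,\sup_{(x_G,a),(x_G',a')\in\mathcal{Z}_x}\! |Q^*_G(x_G, a) - Q^*_G(x_G', a')|,
\]
the $1/(1-\gamma)$ arising from the Bellman recursion absorbing the per-class spread at every time step. Under Equation~\eqref{equ:random-process-state-define} the right-hand supremum is governed by the sub-Gaussian increment bound on the feature set $\phi(\mathcal{Z}_x)$, so Lemma~\ref{lem:random-process-twosides} yields, with probability at least $1-2\delta$,
\[
\sup_{(x_G,a),(x_G',a')\in\mathcal{Z}_x} |Q^*_G(x_G, a) - Q^*_G(x_G', a')| \leq C\,G(\phi(\mathcal{Z}_x)) + C'\,\diam(\phi(\mathcal{Z}_x))\,\sqrt{\ln(2L/\delta)}.
\]

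Third, mirroring the end of the proof of Theorem~\ref{thm:geometric-bounds-sub}, I would union-bound over the $|\mathcal{X}|$ abstract states (this introduces the $\log|\mathcal{X}|$ term since the spread bound depends only on $x$, not on $x_G$), pass to $\bar{G}$ and $\bar{\epsilon}$ as uniform maxima, and apply Lemma~\ref{lem:integral} to convert the tail bound into an expectation of the form $C\bar{G} + C'\bar{\epsilon}\sqrt{\ln(L|\mathcal{X}|)}$. Dividing by $1-\gamma$ per the propagation step and inserting into the decomposition of step one yields the stated inequality.

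The main obstacle is step two: importing the $1/(1-\gamma)$ propagation cleanly into the meta-MDP setting. The Abel--Hershkowitz--Littman result is stated for a single fixed MDP and presumes a deterministic class-wise $\epsilon$, whereas here the class spread is a random variable depending on $\theta$ and the abstraction $\psi$ is applied uniformly across the whole family. I would need to verify (likely as a short lemma) that their Bellman-contraction argument still applies pointwise in $\theta$, so that the random spread can be substituted for $\epsilon$ before taking expectations. The remaining concentration and union-bound machinery then parallels the Gaussian-width arguments already developed in Section~\ref{sec:reference}.
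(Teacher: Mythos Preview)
Your proposal is correct and matches the paper's proof: your step-one decomposition is exactly Lemma~\ref{lem:ground-MDP-suboptimality}, your step-two propagation is Lemma~\ref{lem:ground-MDP}, and step three reproduces the paper's application of Lemma~\ref{lem:random-process-twosides}, a union bound over $|\mathcal{X}|$, and Lemma~\ref{lem:integral} to bound $\mathbb{E}[\nu]$. One small correction: the right-hand side of your step-two inequality should carry the \emph{global} dispersion $\nu=\max_{x'}\sup_{z,z'\in\mathcal{Z}_{x'}}|Q^*_G(z)-Q^*_G(z')|$ rather than the local $\nu_x$, since the Bellman recursion visits all abstraction classes---but this is immaterial once you pass to $\bar G,\bar\epsilon$ in step three, and your flagged concern about applying the Abel et al.\ bound pointwise in $\theta$ is precisely how the paper proceeds.
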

In this bound, the term $\mathbb{E}\left[\max_{x\in\mathcal{X}}\regret(x)\right]$ is analyzed as in previous sections.
The terms $\bar{G}$ and $\bar{\epsilon}$, which are related to the sets of state-action pairs $\mathcal{Z}_x$ for each abstract state $x$, capture the effect of state abstraction. A coarser state abstraction, which results in a smaller $|\mathcal{X}|$, causes both $\bar{G}$ and $\bar{\epsilon}$ to increase.
To establish the proof of Theorem~\ref{thm:state-abstraction}, we begin by introducing a measure for the distortion that the state abstraction $\psi$ introduces into the optimal Q-values.
For each abstract state $x \in \mathcal{X}$, we define its local Q-value dispersion as:
\[\nu_x:=\sup_{z,z'\in\mathcal{Z}_x} |Q^*_G(z) - Q^*_G(z')|.\]
This quantity $\nu_x$ captures the worst-case difference between the optimal Q-values of any two ground states aggregated into the same abstract state $x$. The global Q-value dispersion is then defined as the supremum over all abstract states $\nu:=\max_{x\in\mathcal{X}}\nu_x$.
The core of the proof lies in connecting the suboptimality to this dispersion $\nu$ and the maximal regret. This connection is formalized through the following supporting lemmas.
\begin{lemma}[\cite{abel2016near}]\label{lem:ground-MDP}
For an arbitrary ground state $x_G$ and its abstract state $x = \psi(x_G)$, 
\[
|Q^*_G(x_G, a) - Q^*(x, a)| \leq \frac{\nu}{1-\gamma} \;\forall a\in\mathcal{A}_{\full}(x).
\]
\end{lemma}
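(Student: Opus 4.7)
The plan is to establish this bound by a Bellman-operator contraction argument, which is the standard technique for comparing value functions across abstractions. I would introduce the lifted abstract Q-function $\tilde{Q}(x_G, a) := Q^{*}(\psi(x_G), a)$, which extends the abstract Q-function to the ground state space by copying. The target inequality then becomes $\|\tilde{Q} - Q_G^{*}\|_\infty \leq \nu/(1-\gamma)$, where the sup-norm is taken over pairs $(x_G, a) \in \mathcal{X}_G \times \mathcal{A}_{\full}$.

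The central step is to show that $\tilde{Q}$ is an approximate fixed point of the ground Bellman optimality operator $T_G$, i.e., $\|T_G \tilde{Q} - \tilde{Q}\|_\infty \leq \nu$. For this I would use the Bellman equation for $Q^{*}$ in the abstract MDP together with the standard construction of the abstract reward and transition kernel as convex combinations of their ground counterparts across each aggregate $\psi^{-1}(x)$. This rewrites $\tilde{Q}(x_G, a) = Q^{*}(\psi(x_G), a)$ as a weighted average of ground Bellman backups over ground states $x_G' \in \psi^{-1}(\psi(x_G))$. The dispersion bound $\nu$ then controls the deviation between such a weighted average and the single backup at $(x_G, a)$, since by definition $\nu$ upper bounds the variation of $Q_G^{*}$ across all pairs in $\mathcal{Z}_{\psi(x_G)}$, i.e., across both ground states and actions aggregated together.

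Finally, since $T_G$ is a $\gamma$-contraction in the supremum norm with unique fixed point $Q_G^{*}$, a standard telescoping argument gives
\[
\|\tilde{Q} - Q_G^{*}\|_\infty \leq \sum_{k=0}^{\infty} \|T_G^{k+1}\tilde{Q} - T_G^{k}\tilde{Q}\|_\infty \leq \sum_{k=0}^{\infty} \gamma^{k} \|T_G \tilde{Q} - \tilde{Q}\|_\infty \leq \frac{\nu}{1-\gamma},
\]
which is exactly the bound required at every pair $(x_G, a)$.

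The main obstacle will be making precise the middle step, namely that $\tilde{Q}$ is an approximate fixed point of $T_G$ under the weighted-average construction of the abstract MDP. The dispersion $\nu$ must be used simultaneously to absorb (i) the mismatch between $R_G(x_G, a)$ and its aggregate-weighted average, and (ii) the mismatch introduced by the transition kernel, including the downstream $V^{*}$ values entering the Bellman backup. Because $\nu$ here is defined with the supremum taken over both ground states \emph{and} actions within each aggregate, both slacks can be merged into a single $\nu$ term, which is what ultimately enables the clean geometric-series bound.
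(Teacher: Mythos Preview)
The paper does not actually prove this lemma; it is quoted from \cite{abel2016near} and used as a black box. So there is no in-paper argument to compare against, and your proposal should be judged on its own.

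Your overall scheme---lift $Q^*$ to $\tilde Q:=Q^*\circ\psi$ and run a contraction argument against $Q_G^*$---is the right one, and it is essentially how the cited reference proceeds. But the specific step you single out, namely showing that $\tilde Q$ is a $\nu$-approximate fixed point of $T_G$, does not go through as you describe it. You correctly observe that under the weighted-aggregate construction one has $\tilde Q(x_G,a)=\sum_{x_G'\in\psi^{-1}(x)}w(x_G')\,(T_G\tilde Q)(x_G',a)$, so that $\lvert(T_G\tilde Q)(x_G,a)-\tilde Q(x_G,a)\rvert$ is controlled by the variation of $(T_G\tilde Q)(\cdot,a)$ across the aggregate. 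The problem is that $\nu$ bounds the variation of $Q_G^*$, not of $T_G\tilde Q$. These two quantities differ exactly by the downstream mismatch $\gamma\sum_{y_G}[P_G(y_G\mid x_G,a)-P_G(y_G\mid x_G',a)]\,(V^*(\psi(y_G))-V_G^*(y_G))$, which is of order $\gamma\Delta$ with $\Delta=\|\tilde Q-Q_G^*\|_\infty$. Feeding $\|T_G\tilde Q-\tilde Q\|_\infty\le \nu+c\gamma\Delta$ back into the telescoping bound produces a recursion that either fails to close or yields a strictly worse constant than $\nu/(1-\gamma)$.

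The clean fix is to avoid bounding $\|T_G\tilde Q-\tilde Q\|_\infty$ altogether and instead set up the recursion on $\Delta$ directly. Write $\tilde Q(x_G,a)=\sum_{x_G'}w(x_G')(T_G\tilde Q)(x_G',a)$, use the contraction $\lvert(T_G\tilde Q)(x_G',a)-(T_GQ_G^*)(x_G',a)\rvert\le\gamma\Delta$ together with $T_GQ_G^*=Q_G^*$, and then apply $\nu$ to the remaining term $\lvert Q_G^*(x_G,a)-\sum_{x_G'}w(x_G')Q_G^*(x_G',a)\rvert\le\nu$. This gives $\Delta\le\nu+\gamma\Delta$ immediately, hence $\Delta\le\nu/(1-\gamma)$. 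In other words, insert $T_GQ_G^*$ rather than $T_G\tilde Q$ as the intermediate comparison point; that is the one-line change that makes your argument close.
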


\begin{lemma}\label{lem:ground-MDP-suboptimality}
The suboptimality in the ground MDP is bounded by:
\[
Q^*_G(x_G, a^*_G) - Q^*_G(x_G, a^*_{\mathcal{A}}) \leq \frac{2\nu}{1-\gamma} + \regret(x).
\]
\begin{proof}
\begin{align*}
Q^*_G(x_G, a^*_G)\leq & Q^*(x, a^*_G)  + \frac{\nu}{1-\gamma}\\
\leq& Q^*(x, a^*_{\mathcal{A}_{\full}}) +   \frac{\nu}{1-\gamma}\\
=& Q^*(x, a^*_{\mathcal{A}}) + \regret(x)+   \frac{\nu}{1-\gamma}\\
\leq & Q^*_G(x_G, a^*_{\mathcal{A}}) + \regret(x)+   \frac{2\nu}{1-\gamma},
\end{align*}
where the first and last inequalities use Lemma~\ref{lem:ground-MDP}, the second inequality uses the definition of $a^*_{\mathcal{A}_{\full}}$ and the equality uses the definition of regret.
\end{proof}
\end{lemma}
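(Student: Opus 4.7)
The plan is to chain Lemma~\ref{lem:ground-MDP-suboptimality} (which is a per-state inequality) with a high-probability tail bound on the global Q-dispersion $\nu$, and then convert the tail bound into an expectation bound by means of Lemma~\ref{lem:integral}.

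First I would take the maximum over $x_G\in\mathcal{X}_G$ of the inequality in Lemma~\ref{lem:ground-MDP-suboptimality}. Since $\nu$ does not depend on $x_G$ and $\regret(\psi(x_G))\le \max_{x\in\mathcal{X}}\regret(x)$, this gives
\[
\max_{x_G\in\mathcal{X}_G}\bigl[Q^*_G(x_G,a^*_G)-Q^*_G(x_G,a^*_{\mathcal{A}})\bigr]\le \frac{2\nu}{1-\gamma}+\max_{x\in\mathcal{X}}\regret(x).
\]
Taking expectations, the theorem reduces to bounding $\mathbb{E}[\nu]=\mathbb{E}[\max_{x\in\mathcal{X}}\nu_x]$ by $\tfrac{1}{2}(C\bar G+C'\bar\epsilon\sqrt{\ln(L|\mathcal{X}|)})$ up to absolute constants.

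Second, I would fix an abstract state $x\in\mathcal{X}$ and control $\nu_x$ via Lemma~\ref{lem:random-process-twosides}. By the triangle inequality, $\nu_x\le 2\sup_{z\in\mathcal{Z}_x}|Q^*_G(z)-Q^*_G(z_0)|$ for any reference $z_0\in\mathcal{Z}_x$. The sub-Gaussian increment hypothesis Equation~\eqref{equ:random-process-state-define} is exactly what Lemma~\ref{lem:random-process-twosides} needs, applied to the feature image $\phi(\mathcal{Z}_x)\subset\mathbb{R}^n$. So there exist absolute constants $C,C'>0$ with, for every $\delta\in(0,1]$,
\[
\Pr\!\left[\nu_x>2CG(\phi(\mathcal{Z}_x))+2C'\diam(\phi(\mathcal{Z}_x))\sqrt{\ln(2L/\delta)}\right]\le 2\delta.
\]
Upper-bounding $G(\phi(\mathcal{Z}_x))\le\bar G$ and $\diam(\phi(\mathcal{Z}_x))\le\bar\epsilon$ and taking a union bound over the $|\mathcal{X}|$ abstract states yields
\[
\Pr\!\left[\nu>2C\bar G+2C'\bar\epsilon\sqrt{\ln(2L/\delta)}\right]\le 2|\mathcal{X}|\delta.
\]
Reparametrising by $\varepsilon:=2|\mathcal{X}|\delta$ and applying Lemma~\ref{lem:integral} to the nonnegative random variable $\nu$ converts the tail bound into
\[
\mathbb{E}[\nu]\le 2C\bar G+\widetilde C\,\bar\epsilon\sqrt{\ln(L|\mathcal{X}|)}
\]
for some absolute constant $\widetilde C>0$.

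Finally, substituting this bound on $\mathbb{E}[\nu]$ into the inequality from the first paragraph, and absorbing the factor $2/(1-\gamma)$ into the constants, produces the claimed estimate. The only delicate step is the third one: keeping careful track of the doubling from $\nu_x\le 2\sup|Q^*_G(\cdot)-Q^*_G(z_0)|$, the union-bound factor $|\mathcal{X}|$, and the $\ln(2L/\delta)\mapsto\ln(L|\mathcal{X}|)$ collapse performed by Lemma~\ref{lem:integral}; everything else is a bookkeeping exercise. The main conceptual obstacle is recognising that the per-abstract-state dispersion $\nu_x$ is exactly the kind of one-sided supremum that Lemma~\ref{lem:random-process-twosides} was designed to bound, once one passes to feature-image sets $\phi(\mathcal{Z}_x)$ and uses the triangle inequality to absorb the absence of a distinguished anchor point.
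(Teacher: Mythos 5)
There is a fundamental mismatch here: the statement you were asked to prove is Lemma~\ref{lem:ground-MDP-suboptimality} itself, but your proposal \emph{cites that very lemma as an ingredient} in its first sentence and then goes on to prove the downstream result, Theorem~\ref{thm:state-abstraction} (the expectation bound involving $\bar G$, $\bar\epsilon$, and $\ln(L|\mathcal{X}|)$). Your steps two and three --- bounding $\nu_x$ via Lemma~\ref{lem:random-process-twosides} applied to $\phi(\mathcal{Z}_x)$, union-bounding over abstract states, and converting the tail bound to $\mathbb{E}[\nu]$ with Lemma~\ref{lem:integral} --- faithfully reproduce the paper's proof of that theorem, but none of it addresses the deterministic, pathwise inequality $Q^*_G(x_G, a^*_G) - Q^*_G(x_G, a^*_{\mathcal{A}}) \leq \frac{2\nu}{1-\gamma} + \regret(x)$ that is actually at issue. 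As written, the argument is circular with respect to the target statement.

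What is missing is a short chain of four elementary steps, none of which involves probability: (i) by Lemma~\ref{lem:ground-MDP}, $Q^*_G(x_G, a^*_G)\le Q^*(x,a^*_G)+\frac{\nu}{1-\gamma}$, transferring from the ground to the abstract value function at the cost of the dispersion; (ii) since $a^*_{\mathcal{A}_{\full}}$ maximizes $Q^*(x,\cdot)$ over the full action set, $Q^*(x,a^*_G)\le Q^*(x,a^*_{\mathcal{A}_{\full}})$; (iii) by the definition of $\regret(x)$ in Equation~\eqref{equ:regret-x-define}, $Q^*(x,a^*_{\mathcal{A}_{\full}})=Q^*(x,a^*_{\mathcal{A}})+\regret(x)$; and (iv) a second application of Lemma~\ref{lem:ground-MDP} gives $Q^*(x,a^*_{\mathcal{A}})\le Q^*_G(x_G,a^*_{\mathcal{A}})+\frac{\nu}{1-\gamma}$, returning to the ground MDP and accumulating the second $\frac{\nu}{1-\gamma}$ term. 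You should supply this chain; the material you did write belongs to the proof of the theorem that consumes this lemma, not to the lemma itself.
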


Therefore, it suffices to find an upper bound for the expectation $\mathbb{E}_{\theta}[\nu]$.

\begin{proof}[Proof of Theorem~\ref{thm:state-abstraction}]

Choose an arbitrary state-action pair $z_0\in\mathcal{Z}_x$. 
By Lemma~\ref{lem:random-process-twosides}, there are absolute constants $C,C'>0$ that
\[
\Pr\left[\max_{z\in\mathcal{Z}_x} |Q^*_G(z)-Q^*_G(z_0)|>  C G(\phi(\mathcal{Z}_x)) + C'\diam(\phi(\mathcal{Z}_x)) \sqrt{\ln\frac{2L}{\delta}}\right]\leq 2\delta.
\] 
Recall the definition of $\nu_x$.
By triangle inequality, we have 
\[
\nu_x=\max_{z,z'\in\mathcal{Z}_x}|Q^*_G(z)-Q^*_G(z')|\leq 2\max_{z\in\mathcal{Z}_x} |Q^*_G(z)-Q^*_G(z_0)|.
\]
Then
\[
\Pr\left[\frac{\nu_x}{2}>  C G(\phi(\mathcal{Z}_x)) + C'\diam(\phi(\mathcal{Z}_x)) \sqrt{\ln\frac{2L}{\delta}}\right]\leq 2\delta.
\]
By union bound and $\nu=\max_{x\in\mathcal{X}}\nu_x$
\[
\Pr\left[\frac{\nu}{2}>  C \bar{G} + C'\bar{\epsilon}\sqrt{\ln\frac{2L}{\delta}}\right]\leq 2|\mathcal{X}|\delta.
\]
By Lemma~\ref{lem:integral}, there exists an absolute constant $C''>0$ such that
\begin{equation*} 
\mathbb{E}\left[\nu\right]\leq 2C \bar{G}+2C'C''\bar{\epsilon}\sqrt{\ln(2L|\mathcal{X}|)}.
\end{equation*}
By Lemma~\ref{lem:ground-MDP-suboptimality}, the expected suboptimality is bounded by 
\begin{align*}
\mathbb{E}\left[\max_{x_G\in\mathcal{X}_G} Q^*_G(x_G, a^*_G) - Q^*_G(x_G, a^*_{\mathcal{A}})\right]\leq & \frac{2}{1-\gamma}\mathbb{E}[\nu] + \mathbb{E}\left[\max_{x\in\mathcal{X}}\regret(x)\right]
\end{align*}
Plug in the upper bound of $\mathbb{E}\left[\nu\right]$, we complete the proof.
\end{proof}

\section{Examples}
\label{sec:examples}

To ground our theoretical discussion, we examine three concrete examples of increasing complexity. We begin with a meta-bandit problem with i.i.d. actions to demonstrate that our framework, while specifically designed for correlated action settings, naturally reduces to and handles the independent case. We then progress to structured meta-MDPs consisting of two distinct tabular MDPs. In both of these initial examples, each state-action pair $(x,a)$ corresponds to a unit vector $\mathbf{1}_{(x,a)}$, providing a clear, interpretable structure for analyzing representations.
We conclude with practical cartpole environments where the physical parameters (like pole length and mass) form natural groupings, or ``clusters,'' creating a more realistic and challenging meta-RL benchmark where action correlations emerge from the underlying physics.
All code for reproducing our experiments can be found at \url{https://github.com/Quan-Zhou/Meta-RL-Representative-Action-Selection}.

\subsection{Meta-Bandit with IID Actions}
\label{app:orthogonal-basis}

Consider a meta-bandit example where $\mathcal{X}$ is a singleton. We can then drop the subscript $x$, and the objective reduces to Equation~\eqref{equ:regret-define}.
Let the action space be the orthonormal basis of $\mathbb{R}^n$, i.e.,
\(
\mathcal{A}_{\full} = \{ \mathbf{1}_i : i = 1, \dots, n \}.
\) 
Each action $a$ is associated with a unit vector $\mathbf{1}_{a}$ that has value $1$ at coordinate $a$ and $0$ elsewhere. We define the canonical Gaussian process as
\[
Q^*(a):= \langle \mathbf{1}_{a}, \theta \rangle ,\quad \theta\sim\mathcal{N}(0,I).
\]
Under this construction, the collection $\{Q^*(a)\}_{a \in \mathcal{A}_{\full}}$ consists of i.i.d. standard normal variables.



\paragraph{Expected Regret for Algorithm Output.}  
Since the actions are independent, for any subset \(\mathcal{A}\) of fixed cardinality, the performance \(\mathbb{E}\max_{a\in\mathcal{A}} Q^*(a)\) depends only on the size of \(\mathcal{A}\), not on its specific elements.  
Therefore, the expected regret of Algorithm~\ref{alg:valuefunction} is determined solely by the number of distinct elements in its output.

Because \(\max_{a \in \mathcal{A}_{\full}} Q^*(a)\) is simply the maximum among the \(n\) coordinates of \(\theta\), and by symmetry each coordinate has the same probability of achieving the maximum, the algorithm effectively reduces to sampling with replacement. The probability that \(|\mathcal{A}| = N\) is thus the probability of obtaining \(N\) distinct actions in \(K\) independent draws. By the tower rule, the exact expected regret can be computed as
\begin{equation}
\begin{split}
\mathbb{E}_{\theta,\mathcal{A}}[\regret]
&=\sum_{N\leq\min\{n,K\}} \Pr\left[|\mathcal{A}| = N\right]\;\mathbb{E}_{\theta,\mathcal{A}}[\regret\mid |\mathcal{A}| = N]\\
&= \sum_{N\leq\min\{n,K\}} \frac{\binom{n}{N} \sum_{i=0}^{N} (-1)^i \binom{N}{i} (N-i)^K}{n^K} \; \left(\mathbb{E} \max_{i=1,\dots,n} \theta_i -\mathbb{E} \max_{i=1,\dots,N} \vartheta_i  \right),
\end{split}
\label{equ:regret-iid}
\end{equation}
where each \(\theta_i\) and \(\vartheta_i\) is an i.i.d.\ \(\mathcal{N}(0,1)\) sample.

\paragraph{Upper Bounds for Algorithm Output.}  
Theorem~\ref{thm:alg-bound-upper} provides an upper bound that depends on a partition of the action set into clusters. Consider partitioning the \(n\) unit vectors into \(m\) clusters, each containing \(n/m\) actions (assuming \(n/m\) is an integer).  
By construction, any reference action set based on this partition has the same expected regret, which can be computed as
\[
\mathbb{E}_{\theta}[\regret] 
= \mathbb{E} \max_{i=1,\dots,n} \theta_i - \mathbb{E} \max_{i=1,\dots,n/m} \vartheta_i,
\]
for any reference action set. Since each cluster has the same probability of containing the optimal action, \(p_\ell = 1/m\) for \(\ell \le m\).

Furthermore, the additional term in Theorem~\ref{thm:alg-bound-upper} simplifies because \(\{Q^*(a)\}_{a \in \mathcal{A}_{\full}}\) is a canonical Gaussian process:
\begin{align*}
\mathbb{E}\left[\left(\max_{(x,a)\in\mathcal{A}_{\full}\ominus\mathcal{A}_{\full}}Q^*(x,a)\right)^2\right]
&\leq (G(\mathcal{A}_{\full}\ominus\mathcal{A}_{\full}))^2 + 4\diam(\mathcal{A}_{\full})\\
&= 4(G(\mathcal{A}_{\full}))^2 + 4\sqrt{2}\\
&\leq 8\log n + 4\sqrt{2},
\end{align*}
where the first inequality uses Appendix~\ref{app:squared-Gaussian-width}, the equality follows from Property~2 in Appendix~\ref{app:gaussian-width} and \(\diam(\mathcal{A}_{\full})=\sqrt{2}\), and the last inequality uses \(G(\mathcal{A}_{\full})=\mathbb{E}\|\theta\|_{\infty}=\mathbb{E} \max_{i=1,\dots,n} \theta_i\leq \sqrt{2\log n}\) (Appendix~\ref{app:bound-maxgaussian}).

Finally, as shown in the proof of Theorem~\ref{thm:alg-bound-upper}, the algorithm’s expected regret is upper-bounded by the regret of the reference action set plus this additional term:
\begin{align}
\mathbb{E}_{\theta,\mathcal{A}}[\regret] 
\leq  \left(\mathbb{E} \max_{i=1,\dots,n} \theta_i -\mathbb{E} \max_{i=1,\dots,n/m} \vartheta_i\right)
+ \left(1-\frac{1}{m}\right)^K\cdot \left(8\log n + 4\sqrt{2}\right)^{1/2}.
\label{equ:regret-iid-upper}
\end{align}

\paragraph{Numerical Comparison.}  
Figure~\ref{fig:bounds-iid} compares the exact expected regret with the upper bounds in Equation~\eqref{equ:regret-iid-upper}. We consider \(n = 10\) actions and vary the number of clusters as \(m \in \{2, 5, 10\}\). The dashed curves show the upper bounds for \(K = 1, 2, \ldots, 50\), while the solid curves show the exact expected regret computed from Equation~\eqref{equ:regret-iid}.  
For both Equations~\eqref{equ:regret-iid} and~\eqref{equ:regret-iid-upper}, the term \(\mathbb{E}\max_{i=1,\dots,n} \theta_i - \mathbb{E}\max_{i=1,\dots,L} \vartheta_i\) (for an integer \(L\)) is estimated using \(10^5\) Monte Carlo samples.

Since all three upper bounds hold simultaneously, the tightest bound at each $K$ is obtained by taking their minimum. In the early stage ($K < 10$), the bound corresponding to $m = 2$ clusters is the tightest. However, for larger sample sizes ($K > 20$), the bound with $m = 10$ clusters becomes the tightest. This figure illustrates that as $K$ increases, using finer-grained partitions yields tighter bounds.

\begin{figure}[htp]
    \centering
    \includegraphics[width=0.7\linewidth]{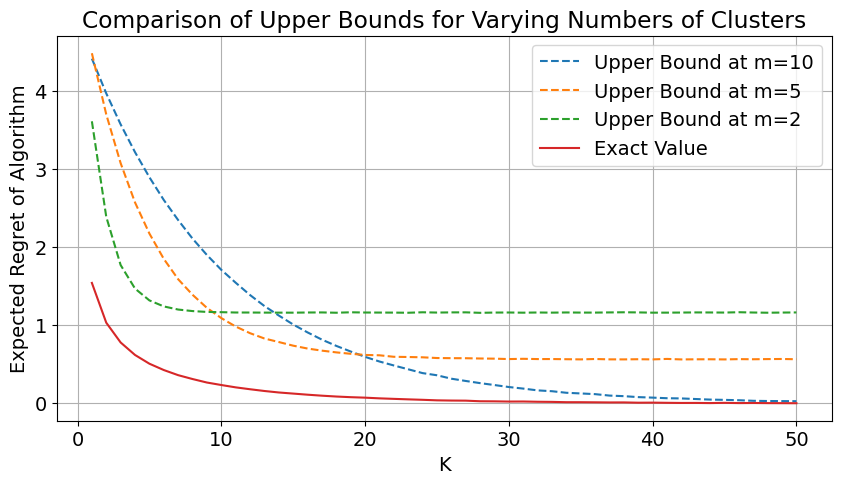}
    \caption{Comparison between the exact expected regret of Algorithm~\ref{alg:valuefunction} (the solid curve) and the theoretical upper bounds (dashed curves) from Theorem~\ref{thm:alg-bound-upper}, for $n = 10$ actions (iid from $\mathcal{N}(0,1)$) and different numbers of clusters $m \in {2, 5, 10}$. The bounds and exact values are plotted as functions of the sample sizes $K = 1, 2, \ldots, 50$.}
    \label{fig:bounds-iid}
\end{figure}

\subsection{A Family of Tabular MDPs}

In a tabular MDP, the transition probabilities $P(x' \mid x,a)$ and rewards $R(x,a)$ are stored explicitly for each $(x,a)$.
It is in fact a special case of a linear MDP in which each state-action pair $(x,a)$ is represented by a unit feature vector:
\[
\phi(x,a) = \mathbf{1}_{(x,a)} \in \mathbb{R}^{|\mathcal{X} \times \mathcal{A}_{\full}|}.
\]
We consider a family of MDP environments $\theta$ sampled from two tabular MDPs: $\theta_1$ with probability $\rho$ and $\theta_2$ with probability $1-\rho$. Each MDP has two states $\mathcal{X} = \{x_1,x_2\}$, two actions $\mathcal{A}_{\full}(x) = \{a_1,a_2\}$, and a discount factor $\gamma = 0.5$. 
In this setup, the action space corresponds to a 4-dimensional orthogonal basis.

\paragraph{The First MDP}

The transition probabilities and rewards of $\theta_1$ are summarized in Table~\ref{tab:mdp-example-1}.

\begin{table}[h!]
\centering
\begin{subtable}[t]{0.45\textwidth}
\centering
\begin{tabular}{|c|c|c|c|}
\hline
State & Action & Next State & Reward \\
\hline
$x_1$ & $a_1$ & $x_1$   & 2 \\
$x_1$ & $a_2$ & $x_2$   & 0 \\
$x_2$ & $a_1$ & $x_1$   & 1 \\
$x_2$ & $a_2$ & $x_2$   & 3 \\
\hline
\end{tabular}
\caption{First MDP ($\theta_1$).}
\label{tab:mdp-example-1}
\end{subtable}
\hfill
\begin{subtable}[t]{0.45\textwidth}
\centering
\begin{tabular}{|c|c|c|c|}
\hline
State & Action & Next State & Reward \\
\hline
$x_1$ & $a_1$ & $x_2$   & 0 \\
$x_1$ & $a_2$ & $x_1$   & 2 \\
$x_2$ & $a_1$ & $x_2$   & 3 \\
$x_2$ & $a_2$ & $x_1$   & 1 \\
\hline
\end{tabular}
\caption{Second MDP ($\theta_2$).}
\label{tab:mdp-example-2}
\end{subtable}
\caption{Transition probabilities and rewards for the two MDPs in the family.}
\end{table}

The Bellman optimality equations for the first MDP are
\[
V^*(x_1) = \max(2 + 0.5 V^*(x_1),\, 0 + 0.5 V^*(x_2)), \quad
V^*(x_2) = \max(1 + 0.5 V^*(x_1),\, 3 + 0.5 V^*(x_2)),
\]
which yield $V^*(x_1)=4$ and $V^*(x_2)=6$. The corresponding parameter vector defined in Equation~\eqref{equ:linear-Q-define} is $\theta_1 = [4,3,3,6]^\top$, so that $Q^*(x_1,a_1)=4$, $Q^*(x_1,a_2)=3$, $Q^*(x_2,a_1)=3$, and $Q^*(x_2,a_2)=6$, with optimal policy $\pi^*(x_1)=a_1$, $\pi^*(x_2)=a_2$. Under the suboptimal policy $\pi(x_1)=a_2$, $\pi(x_2)=a_1$, the value function becomes $V(x_1)=2/3$, $V(x_2)=4/3$, giving a maximum performance loss from $x_2$ of $V^*(x_2)-V(x_2)=6-4/3$.

\paragraph{The Second MDP} 
The second MDP $\theta_2$ is obtained by flipping the optimal actions in each state relative to $\theta_1$ (Table~\ref{tab:mdp-example-2}). The optimal value function remains $V^*(x_1)=4$, $V^*(x_2)=6$, with corresponding parameter vector $\theta_2=[3,4,6,3]^\top$ and optimal policy $\pi^*(x_1)=a_2$, $\pi^*(x_2)=a_1$. Under the opposite (suboptimal) policy, the value function becomes $V(x_1)=2/3$, $V(x_2)=4/3$, so the maximum performance loss from $x_2$ is unchanged: $V^*(x_2)-V(x_2)=6-4/3$.

\paragraph{Expected Performance Difference of Algorithm Output} 
Running Algorithm~\ref{alg:valuefunction} with sample size $K$ gives rise to three cases: 
\begin{itemize}
    \item Only $\theta_1$ is sampled in Algorithm~\ref{alg:valuefunction} in $K$ iterations (probability $\rho^K$), producing $\mathcal{A}(x_1)=a_1$ and $\mathcal{A}(x_2)=a_2$. A performance loss occurs if a newly drawn MDP is $\theta_2$ (probability $1-\rho$).  
    \item Only $\theta_2$ is sampled in Algorithm~\ref{alg:valuefunction} in $K$ iterations (probability $(1-\rho)^K$), producing $\mathcal{A}(x_1)=a_2$ and $\mathcal{A}(x_2)=a_1$. A performance loss occurs if a newly drawn MDP is $\theta_1$ (probability $\rho$).  
    \item Both $\theta_1$ and $\theta_2$ are sampled at least once in Algorithm~\ref{alg:valuefunction} in $K$ iterations (probability $1 - \rho^K - (1-\rho)^K$), so all actions are included, resulting in zero performance loss.
\end{itemize}

The performance loss, defined as the difference between the optimal policy achievable by the algorithm's output and the optimal policy over the full action space, can be quantified as follows. Its expectation over the family of MDP environments, starting from the initial state $x_2$ (where the loss is maximal), is
\begin{equation}
\mathbb{E}_{\theta,\mathcal{A}}\big[ V^*(x_2) - V^{\pi}(x_2) \big] 
= \left(6 - \frac{4}{3}\right) \Big( \rho^K (1-\rho) + (1-\rho)^K \rho \Big).
\label{equ:performance-tabluar}
\end{equation}

\paragraph{Upper Bound for Algorithm Output} 
For each state, the full action set consists of two actions. We set $m=2$ and treat each action as a separate cluster ($Y_{x,\ell}=0$). Then, for any reference action set, $\mathbb{E}_{\theta}[\max_{x,\ell} Y_{x,\ell}] = 0$. Furthermore, 
\begin{align*}
\mathbb{E}\Big[\Big(\max_{(x,a)\in\mathcal{Z}} Q^*(x,a)\Big)^2\Big] 
&= \rho \, \mathbb{E}\Big[\Big(\max_{(x,a)\in\mathcal{Z}} Q^*(x,a)\Big)^2 \,\Big|\, \theta_1\Big] 
+ (1-\rho) \, \mathbb{E}\Big[\Big(\max_{(x,a)\in\mathcal{Z}} Q^*(x,a)\Big)^2 \,\Big|\, \theta_2\Big] \\
&= \rho \,(Q^*(x_2,a_2)-Q^*(x_2,a_1))^2 + (1-\rho) \,(Q^*(x_2,a_1)-Q^*(x_2,a_2))^2 = 9.
\end{align*}
Hence, the upper bound in Theorem~\ref{thm:alg-bound-upper} reduces to
\begin{equation}
\mathbb{E}_{\theta \sim p}\Big[ \max_{x\in\mathcal{X}} \regret(x) \Big] 
\le 3 \, \big( \rho (1-\rho)^{2K} + \rho^{2K} (1-\rho) \big)^{1/2}.
\label{equ:bound-tabluar}
\end{equation}

\paragraph{Numerical Comparison} 
Figure~\ref{fig:tabluar} compares the exact expected performance loss, given by Equation~\eqref{equ:performance-tabluar}, with the upper bound from Theorem~\ref{thm:alg-bound-upper} (Equation~\eqref{equ:bound-tabluar}). By Equation~\eqref{equ:performance-bound-connect}, these quantities satisfy
\[
\text{Expected Performance Difference} \le \frac{1}{1-\gamma} \mathbb{E}_{\theta \sim p}\Big[ \max_{x\in\mathcal{X}} \regret(x) \Big].
\]
We compute both sides for $K=1,\dots,20$ and $\rho \in [0.01,0.99]$ and plot them in Figure~\ref{fig:tabluar}, where the left plot corresponds to the left-hand side of the inequality and the right plot to the right-hand side; both plots share the same $x$, $y$, and $z$ limits for direct comparison.  

Figure~\ref{fig:tabluar} illustrates how the bound and performance loss evolve with the concentration parameter $\rho$. When either $\rho$ or $1-\rho$ is very small, the expected performance loss decreases rapidly in the first few draws but approaches zero more slowly. Conversely, when $\rho$ and $1-\rho$ are nearly uniform, the decrease is slower initially but reaches zero more quickly.

\begin{figure}[htp]
    \centering
    \includegraphics[width=0.9\linewidth]{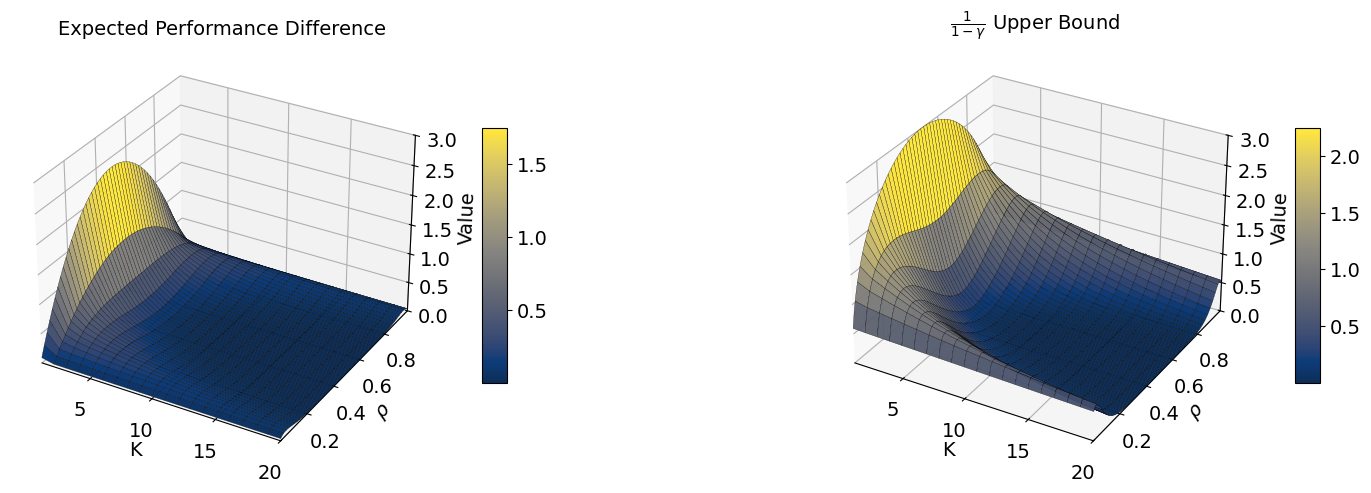}
    \caption{Comparison of the exact expected performance loss (left) with the upper bound from Theorem~\ref{thm:alg-bound-upper} (right) for different sample sizes $K$ and concentration parameters $\rho$. The figure illustrates how the performance loss and bound evolve as $\rho$ varies: when either $\rho$ or $1-\rho$ is small, the loss decreases quickly initially but approaches zero more slowly, whereas for nearly uniform $\rho$, the decrease is slower at first but reaches zero faster.}
    \label{fig:tabluar}
\end{figure}

\subsection{Case Study: The CartPole Environment}

The CartPole environment \citep{towers2024gymnasium} is a standard benchmark for control algorithms. In the continuous variant studied here, the agent applies continuous force to a cart moving along a track to balance an attached pole. The state space comprises the cart's position and velocity, along with the pole's angle and angular velocity, with the objective of maintaining stability within defined bounds. The agent receives a reward of +1 for every time step the pole remains balanced, with episodes terminating when the pole falls or position limits are exceeded.

We employ a parameterizable environment generator that creates environments with randomized physical properties across two difficulty categories. The 'easy' configuration features lower gravity (8.0), lighter pole mass (0.08), and shorter length (0.4) with ±5\% variation, while the 'medium' configuration uses standard gravity (9.8), heavier mass (0.1), longer pole (0.5) with ±10\% variation. For each experiment, environments are generated by sampling parameters uniformly within these variation ranges.

To enable tabular reinforcement learning, we discretize the continuous 4D state space into 125 discrete states. This discretization ignores cart position (first dimension) and applies non-uniform binning to the remaining three dimensions, concentrating resolution around critical central values. The continuous action space (force applied to cart) is discretized into 501 actions uniformly spaced between -100 and 100.

Action sets are generated using Algorithm~\ref{alg:valuefunction} for varying values of $K$. The training process employs Q-learning over 10000 episodes, with episodes terminating either when the pole falls or when the temporal difference error remains below $10^{-3}$ for five total steps (not necessarily consecutive), using a discount factor $\gamma = 0.99$.

Figure~\ref{fig:cartpole} reports means and standard deviations for both training runtime and total reward across 30 repetitions for $K = [3,5,8,10,12,15,20]$, comparing performance when the agent uses the full action space (blue) versus the compressed action set (orange). The training runtime measures the time required to train a policy in a newly sampled environment using the given available actions, following the same Q-learning procedure used during action set generation (10000 episodes, $\gamma=0.99$, with termination when the pole falls or TD error drops below $10^{-3}$ for five total steps). The total reward represents the cumulative reward obtained by deploying the learned policy from the fixed initial state $[0,0,0,0]$ for a single episode, as both the environment and starting state are deterministic once sampled.

As shown in Figure~\ref{fig:cartpole}, the experimental results align with theoretical expectations. Since the full action space remains constant across different values of $K$, the blue curves for the action space remain horizontal in both plots, serving as a consistent baseline.
In the runtime analysis (left plot), the orange curve for the action set demonstrates a gradual increase in training time as $K$ grows. This reflects the computational cost of expanding the action set, though even at larger $K$ values, the action set maintains a significant efficiency advantage over the full action space.
The performance comparison (right plot) reveals that both the action space and action set achieve nearly identical cumulative rewards across all $K$ values. This indicates that our action sets effectively capture the essential control capabilities of the full action space, despite their substantially reduced size.
Notably, the action set curves exhibit much tighter error bars in both plots, demonstrating significantly more stable and predictable performance in both training efficiency and policy effectiveness. This enhanced stability underscores the practical advantage of using carefully constructed action sets over the full action space for reinforcement learning in continuous control tasks.

\begin{figure}
\centering
\includegraphics[width=\linewidth]{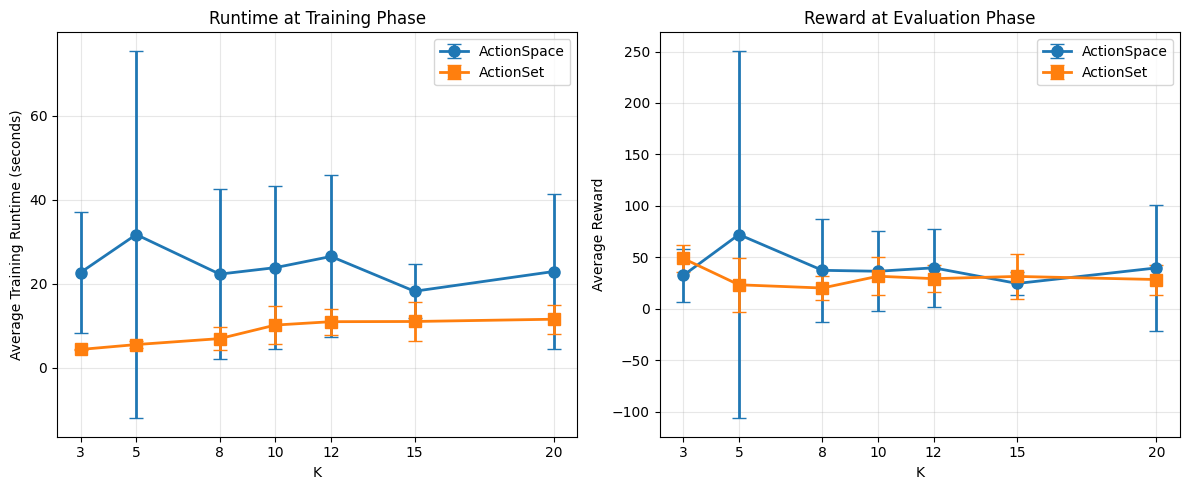}
\caption{Performance comparison between full action space (501 actions) and action sets (constructed via Algorithm~\ref{alg:valuefunction}) on a family of CartPole environments.
Left: Training runtime for policies using available actions.
Right: Total reward earned by deployed policies from fixed initial state. Results show mean $\pm$ one standard deviation across 30 repetitions for $K = [3,5,8,10,12,15,20]$.}
\label{fig:cartpole}
\end{figure}

\section*{Acknowledgements}

We extend our sincere gratitude to colleagues who enriched this work: Dr. Mark Kozdoba for incisive questions that shaped our theoretical framework; Dr. Zhou Feng for pivotal discussions on epsilon net construction that advanced our algorithm; and Mengjia Niu, Jikai Yan for meticulous review and constructive feedback on the writing.

\section*{}
\bibliographystyle{abbrvnat}
\bibliography{ref}

\appendix

\section{Proofs}

For completeness, we include here the proofs of the lemmas used in the main body of the paper.

\subsection{Proof of Lemma~\ref{lem:integral-expectation}}
\begin{proof}
By the tail-integral formula for a nonnegative random variable $X$,
\begin{align*}
\mathbb{E} [X] 
=& \int_0^{\infty}\Pr[X\geq u]\; du \\
=& \int_0^{u_0}\Pr[X\geq u]\; du + \int_{u_0}^{\infty}\Pr[X\geq u]\; du\\
\leq & u_0 + \frac{1}{u_0} \int_{u_0}^{\infty}u\cdot\Pr[X\geq u]\; du\\
\leq & u_0 + \frac{C}{u_0} \int_{u_0}^{\infty}u\cdot \exp{\left(-\frac{u^2}{\epsilon^2}\right)}\; du\\
= & u_0 + \exp\left(-\frac{u_0^2}{\epsilon^2}\right)\cdot\frac{c\epsilon^2}{2u_0}\\
= & \epsilon\sqrt{\log C} + \frac{\epsilon}{2\sqrt{\log C}} \leq C^{\dag} \epsilon\sqrt{\log C},
\end{align*}
where the first equality uses integrated tail formula of expectation (cf. Lemma~1.6.1 of \cite{vershynin2018high}). The last equality set $u_0:=\epsilon\sqrt{\log C}$.
\end{proof}

\subsection{Proof of Lemma~\ref{lem:integral}}
\begin{proof}
By the tail-integral formula for a nonnegative random variable,
$$
\mathbb{E}[X]=\int_{0}^{\infty}\Pr[X>t]\,dt
\le a+\int_{0}^{\infty}\Pr(X>a+s)\,ds.
$$
For any fixed $s\ge0$, set $\delta_s:=\min\{1,\,2L e^{-(s/b)^2}\}$.
Then $b\sqrt{\ln\frac{2L}{\delta_s}}\le s$, so by monotonicity of the tail,
$$
\Pr[X>a+s]\le \delta_s=\min\{1,\,2L e^{-(s/b)^2}\}.
$$
Hence,
$$
\mathbb{E}[X]\le a+\int_{0}^{\infty}\min\{1,\,2L e^{-(s/b)^2}\}\,ds.
$$
Split the integral at $s_0:=b\sqrt{\ln(2L)}$:
$$
\mathbb{E}[X]\le a+\underbrace{\int_{0}^{s_0}1\,ds}_{=\,b\sqrt{\ln(2L)}}
+\underbrace{\int_{s_0}^{\infty}2L e^{-(s/b)^2}\,ds}_{(\dagger)}.
$$
For $(\dagger)$, substitute $u=s/b$:
$$
(\dagger)=2Lb\int_{\sqrt{\ln(2L)}}^{\infty}e^{-u^2}\,du.
$$
Using the inequality (shift-and-bound),
$$
\int_{\alpha}^{\infty} e^{-u^2}\,du
=\int_{0}^{\infty}e^{-(x+\alpha)^2}\,dx
\le e^{-\alpha^2}\int_{0}^{\infty}e^{-x^2}\,dx
= e^{-\alpha^2}\,\frac{\sqrt{\pi}}{2},
$$
with $\alpha=\sqrt{\ln(2L)}$, we obtain
$$
(\dagger)\le 2Lb\cdot e^{-\ln(2L)}\cdot \frac{\sqrt{\pi}}{2}
= b\cdot \frac{\sqrt{\pi}}{2}.
$$
Putting everything together,
$$
\mathbb{E}[X]\le a+b\Big(\sqrt{\ln(2L)}+\frac{\sqrt{\pi}}{2}\Big)\leq a+Cb\sqrt{\ln(L)}.
$$
\end{proof}

\subsection{Proof of Lemma~\ref{lem:random-process-twosides}}


To simplify the notation, consider the random process $\{Q(s)\}_{s\in\mathcal{S}}$ satisfying for all $s,s'\in\mathcal{S}$ and some $L\geq 1$:
\begin{equation}
\Pr[|Q(s)-Q(s')|>u]\leq L\exp\left(-\frac{u^2}{2\|\phi(s)-\phi(s)\|_2^2}\right).
\label{equ:majorizing-ass}
\end{equation}

The proof use Talagrand's generic chaining and majorizing measure \cite{talagrand1992simple}:

\begin{definition}\label{def:chaining}
Fix $r \geq 2$ and let $k_0$ be the largest integer that $2r^{-k_0} \geq \mathrm{diam}(\mathcal{S})$. For $k \geq k_0$, take finite $T_k \subset \mathcal{S}$ and maps $\tau_k: \mathcal{S} \to T_k$ and the corresponding inverse $\tau^{\dag}_k: T_k \to \mathcal{S}$ that maps each $t$ to its preimage:
\[
\tau^{\dag}_k(t) = \{s \in \mathcal{S} : \tau_k(s) = t\}.
\] 
We assume they satisfy:
\begin{enumerate}
    \item $\tau_k(t) = t$ for $t \in T_k$
    \item $\tau_k(s) = \tau_k(s') \Rightarrow \tau_{k-1}(s) = \tau_{k-1}(s')$
    \item $\mathrm{diam}(\tau_k^{\dag}(t)) \leq 2r^{-k}$ for $t \in T_k$.
\end{enumerate}

Fix $s_0 \in \mathcal{S}$ and set $\tau_{k_0}(s) = s_0$ for all $s$. Then for any $s\in\mathcal{S}$, we have the \textbf{chaining} identity:
\begin{equation}
Q(s)-Q(s_0) = \sum_{k>k_0} \left(Q(\tau_k(s))-Q(\tau_{k-1}(s)) \right),\label{equ:chaining}
\end{equation}

For $k\geq k_0$, set the partition $\mathcal{T}_k = \{\tau_k^{\dag}(t)\}_{t\in T_k}$, which form an increasing sequence with $\mathcal{T}_{k_0} = \{\mathcal{S}\}$.
\end{definition}

\begin{lemma}[\cite{talagrand1992simple}]\label{lem:majorizing}
There is an universal constant $C$ such that for every finite $\mathcal{S}\subset\mathbb{R}^n$ there is an increasing sequence of partitions $\mathcal{T}_k$ of $\mathcal{S}$ (as defined in Definition~\ref{def:chaining}) and a probability measure $\mu$ on $\mathcal{S}$, such that for all $t\in T_k$,  
\[
\sup_{s\in\mathcal{S}} \sum_{k>k_0}^{\infty} r^{-k}\sqrt{\log\frac{1}{\mu(\tau^{\dag}_k\circ\tau_k (s))}} \leq C G(\mathcal{S}).
\]
\end{lemma}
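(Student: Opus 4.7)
The plan is to follow Talagrand's classical construction of majorizing measures \cite{talagrand1992simple}, proceeding by an iterated greedy-net refinement of the index set controlled at each scale by Sudakov's minoration inequality, and then assembling $\mu$ hierarchically from the resulting partition tree.

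First, I would build the partitions $\mathcal{T}_k$ inductively. Start with $\mathcal{T}_{k_0} = \{\mathcal{S}\}$. Given a cell $A \in \mathcal{T}_{k-1}$, pick a maximal $r^{-k}$-separated subset $\{t_1,\dots,t_{N(A)}\} \subset A$, insert these points into $T_k$, and refine $A$ into Voronoi-style cells around them. Maximality forces each child to have diameter at most $2r^{-k}$, yielding all three properties of Definition~\ref{def:chaining}. Sudakov's minoration inequality, applied to the Gaussian process whose canonical $L^2$-metric coincides with Euclidean distance under Equation~\eqref{equ:canonical-meta-MDP}, then gives the per-scale cardinality bound
\[
r^{-k}\sqrt{\log N(A)} \;\leq\; C\, G(A),
\]
which is the primary engine of the argument.

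Second, I would construct $\mu$ by distributing a unit mass hierarchically through the partition tree: allocate a geometric weight $w_k \propto 2^{-(k-k_0)}$ across levels and, within each cell at level $k{-}1$, split the level-$k$ mass uniformly over its children. With this choice,
\[
\mu(\tau_k^\dag \circ \tau_k(s)) \;\geq\; c\cdot 2^{-(k-k_0)} \prod_{j=k_0+1}^{k} N_j(s)^{-1},
\]
where $N_j(s)$ counts the children produced in the level-$j$ refinement of the cell containing $s$. Using $\sqrt{u+v}\le \sqrt{u}+\sqrt{v}$, the quantity $\sqrt{\log(1/\mu(\tau_k^\dag\circ\tau_k(s)))}$ splits into an $O(\sqrt{k-k_0})$ term, whose $r^{-k}$-weighted sum is a geometric series controlled by $\diam(\mathcal{S}) \leq 2r^{-k_0}$, itself $O(G(\mathcal{S}))$, plus a chaining-type double sum $\sum_{k>k_0} r^{-k}\sum_{j\leq k}\sqrt{\log N_j(s)}$.

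The main obstacle is bounding this chaining sum by $C\,G(\mathcal{S})$ uniformly in $s$. Naively combining the per-scale Sudakov bound shows only that the $j$-th inner term is dominated by the Gaussian width of the parent cell $\tau_{j-1}^\dag\circ\tau_{j-1}(s)$, but these widths must telescope rather than accumulate as one descends the tree. Talagrand's resolution is a growth-functional induction: one defines a functional $F$ on subsets satisfying (i) $F(A) \leq C\,G(A)$ and (ii) a growth condition asserting that whenever $A$ is partitioned into many $r^{-k}$-separated sub-cells $A_1,\dots,A_N$, the value $F(A)$ must exceed $c\,r^{-k}\sqrt{\log N} + \max_i F(A_i)$. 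Gaussian concentration, via the Borel--TIS inequality (Lemma~\ref{lem:Borell-TIS}), is exactly what establishes this growth condition for $G$: fluctuations of $\sup_{a\in A} Q^*(\cdot,a)$ are controlled by the cell's diameter, so well-separated clusters force genuinely additive contributions to the supremum. Iterating the growth inequality along any root-to-leaf path in the partition tree converts the chaining sum into a telescoping one bounded by $C\,G(\mathcal{S})$, which yields the claimed estimate.
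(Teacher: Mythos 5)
First, note that the paper does not prove this lemma at all: it is imported verbatim from \cite{talagrand1992simple} as Talagrand's majorizing measure theorem, and the surrounding text only uses it as a black box inside the proof of Lemma~\ref{lem:random-process-twosides}. So you are attempting to reconstruct a deep external theorem rather than a proof the paper actually supplies. Your sketch correctly names the ingredients (Sudakov minoration, Borell--TIS concentration as in Lemma~\ref{lem:Borell-TIS}, a growth functional, a hierarchical measure), but as written it has two genuine gaps.

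The first is structural: building the partitions $\mathcal{T}_k$ by plain maximal $r^{-k}$-separated nets and then splitting mass uniformly among children is exactly the construction behind Dudley's entropy bound. The per-scale estimate $r^{-k}\sqrt{\log N(A)} \leq C\,G(A)$ controls each level separately, and summing over levels yields $\sum_k r^{-k}\sqrt{\log N_k}$, which is \emph{not} $O(G(\mathcal{S}))$ in general (for ellipsoids the gap is a factor of order $\sqrt{\log n}$). The whole content of Talagrand's theorem is that the partition must be \emph{built using the functional} --- at each scale one repeatedly extracts the piece on which a suitable functional (essentially $G$ restricted to a small ball) is largest, and the resulting ordering of children is what makes the telescoping work. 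A growth-functional induction applied post hoc to a net tree that was constructed without reference to the functional cannot rescue the Dudley-type sum. The second gap is that your growth condition is stated with $\max_i F(A_i)$, which is false: take $N=2$ with one child of enormous Gaussian width and one a singleton; the width of the union is then essentially the larger width, with no additive $c\,r^{-k}\sqrt{\log N}$ gain. The correct superadditivity inequality, which Sudakov plus Borell--TIS does deliver, reads $\mathbb{E}\sup_{t\in\cup_i H_i}X_t \geq c\,a\sqrt{\log N} + \min_i \mathbb{E}\sup_{t\in H_i}X_t$ for sets $H_i$ contained in small balls around $a$-separated points, and handling the $\min$ rather than the $\max$ along every root-to-leaf branch is precisely the difficulty that forces Talagrand's ordered extraction. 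Since the paper treats this lemma as a citation, the appropriate fix is simply to do the same; a self-contained proof would require reproducing the full argument of \cite{talagrand1992simple}.
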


Let $\mu$ and $\mathcal{T}_k,T_k$ be as determined for $\mathcal{S}$ by Definition~\ref{def:chaining} and Lemma~\ref{lem:majorizing}.
For $k\geq k_0$, we define a function $\xi_k:\mathcal{T}_k\to\mathbb{R}$ as follows:
\[
\xi_k(A):=r^{-k+1}\sqrt{8\log\left(\frac{2^{k-k_0}L}{\mu(A)\delta}\right)}.
\]
\begin{claim}[\cite{maurer2016chain}]\label{lem:majorizing-aux1}
Let $s_0$ be an arbitrary point in $\mathcal{S}$.
For any $\delta\in(0,1)$
\begin{equation*}
\Pr\left[\exists s\in\mathcal{S}: Q(s)-Q(s_0)\geq \sum_{k>k_0} \xi_k(\tau^{\dag}_k\circ\tau_k(s))\right]<\delta.
\end{equation*}
\end{claim}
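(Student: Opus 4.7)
The plan is to apply a standard generic chaining union bound, level by level, using the sub-Gaussian tail \eqref{equ:majorizing-ass} together with the partition structure of Definition~\ref{def:chaining}. The parameter $\xi_k(A)$ is chosen precisely so that the sub-Gaussian deviation at level $k$ cell $A$ fails with probability proportional to $\mu(A)$, which is what makes the union bound over cells telescope.

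First I would control the feature-distance of consecutive links in the chain. By property~(2) applied with $s':=\tau_k(s)$, we have $\tau_{k-1}(\tau_k(s))=\tau_{k-1}(s)$, so both $\tau_k(s)$ and $\tau_{k-1}(s)$ lie in the level-$(k-1)$ cell $\tau_{k-1}^{\dag}(\tau_{k-1}(s))$. Property~(3) then yields
\[
\|\phi(\tau_k(s))-\phi(\tau_{k-1}(s))\|_2 \;\leq\; 2r^{-(k-1)} \;=\; 2r^{-k+1}.
\]
Since $\mathcal{T}_k$ is a partition of $\mathcal{S}$, the pair $(\tau_k(s),\tau_{k-1}(s))$ is constant on every cell $A\in\mathcal{T}_k$, so there is a single random variable to control per cell. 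Applying \eqref{equ:majorizing-ass} with $u=\xi_k(A)$ and $\sigma_k=2r^{-k+1}$ gives
\[
\Pr\!\left[|Q(\tau_k(s))-Q(\tau_{k-1}(s))|>\xi_k(A)\right]
\leq L\exp\!\left(-\frac{r^{-2k+2}\cdot 8\log\!\frac{2^{k-k_0}L}{\mu(A)\delta}}{2\cdot 4r^{-2k+2}}\right)
= \frac{\mu(A)\delta}{2^{k-k_0}},
\]
where the exponent simplifies exactly to $-\log(2^{k-k_0}L/(\mu(A)\delta))$.

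Second, I would take a union bound within level $k$ over the cells $A\in\mathcal{T}_k$: since $\sum_{A\in\mathcal{T}_k}\mu(A)=1$, the total failure probability at level $k$ is at most $\delta/2^{k-k_0}$. Summing the geometric series over $k>k_0$ yields total failure probability at most $\delta\sum_{j\geq 1}2^{-j}=\delta$. On the complement event, for every $s\in\mathcal{S}$ and every $k>k_0$ one has $|Q(\tau_k(s))-Q(\tau_{k-1}(s))|\leq \xi_k(\tau_k^{\dag}\circ\tau_k(s))$; combining this with the chaining identity \eqref{equ:chaining} and the triangle inequality gives
\[
Q(s)-Q(s_0) \;\leq\; \sum_{k>k_0}\xi_k\!\left(\tau_k^{\dag}\circ\tau_k(s)\right),
\]
which is the negation of the event in the claim.

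The main subtle point is less the probabilistic calculation than the combinatorial bookkeeping: one must define the bad event per cell (not per $s\in\mathcal{S}$) so that the union bound at level $k$ pays only $\sum_A\mu(A)=1$, which is the whole purpose of encoding $\mu(A)$ inside $\xi_k(A)$ via the majorizing measure from Lemma~\ref{lem:majorizing}. A minor secondary point is the diameter argument underlying the bound on $\|\phi(\tau_k(s))-\phi(\tau_{k-1}(s))\|_2$: one must use the level-$(k-1)$ cell diameter, not the level-$k$ one, which is what costs the factor $r^{-k+1}$ rather than $r^{-k}$ in $\xi_k$.
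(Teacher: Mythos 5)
Your proposal is correct and follows essentially the same route as the paper's proof: the chaining identity reduces the event to a per-link failure, the $2r^{-k+1}$ bound on consecutive links comes from the level-$(k-1)$ cell diameter via properties (2) and (3), the choice of $\xi_k$ makes the per-cell failure probability $\mu(A)\delta/2^{k-k_0}$, and the union bound telescopes over cells ($\sum_A\mu(A)=1$) and levels (geometric series). The only cosmetic difference is that you phrase the argument via the complement ("good event") rather than bounding the bad event directly.
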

\begin{proof}[Proof of Claim~\ref{lem:majorizing-aux1}]

\begin{align*}
\Pr\left[\exists s: Q(s)-Q(s_0)> \sum_{k>k_0} \xi_k(\tau^{\dag}_k\circ\tau_k (s))\right] = & \Pr\left[\exists s: \sum_{k>k_0} \left(Q(\tau_k(s))-Q(\tau_{k-1}(s))-\xi_k(\tau^{\dag}_k\circ\tau_k (s)) \right)> 0\right]\\
\leq & \Pr\left[\exists s,k>k_0: Q(\tau_k(s))-Q(\tau_{k-1}(s))-\xi_k(\tau^{\dag}_k\circ\tau_k (s))>  0\right]\\
\leq & \Pr\left[\exists k>k_0, t\in T_k: Q(\tau_k(t))-Q(\tau_{k-1}(t))>\xi_k(\tau_k^{\dag}(t))\right]\\
\leq & \sum_{k>k_0}\sum_{t\in T_k} \Pr\left[Q(\tau_k(t))-Q(\tau_{k-1}(t))>\xi_k(\tau_k^{\dag}(t)) \right]\\
\leq & \sum_{k>k_0}\sum_{t\in T_k} L \exp\left(-\frac{\xi_k(\tau_k^{\dag}(t))^2}{2\|\tau_k(t)-\tau_{k-1}(t)\|_2^2}\right) \\
\leq & \sum_{k>k_0}\sum_{t\in T_k} L \exp\left(-\frac{\xi_k(\tau_k^{\dag}(t))^2}{2(2r^{-k+1})^2}\right)\\
= & \delta\sum_{k>k_0}\frac{1}{2^{k-k_0}}\sum_{t\in T_k}\mu(\xi_k(\tau_k^{\dag}(t)))\\
= & \delta\sum_{k>k_0}\frac{1}{2^{k-k_0}}=\delta.
\end{align*}
Here, the first equation uses the chaining in Equation~\eqref{equ:chaining}. The first inequality uses that if the sum is positive, at least one of the term has to be positive. The second inequality make equivalent change of notation.
The third inequality uses the union bound.
The fourth inequality uses the sub-Gaussian assumption in Equation~\eqref{equ:majorizing-ass}.
The last inequality uses that for $t\in T_k$ both $\tau_k(t)$ and $\tau_{k-1}(t)$ are members of $\tau^{\dag}_{k-1}(t)$, we must have $\|\tau_k(t)-\tau_{k-1}(t)\|_2\leq 2r^{-k+1}$.
The second equality uses the definition of $\xi_k$ function.
The last equality uses the $\mu$ is a probability distribution (cf. Lemma~\ref{lem:majorizing}).
\end{proof}

Following the same procedures, we can prove another claim (because the chaining identity still holds, and that the deviation bound in Equation~\eqref{equ:majorizing-ass} holds for both sides.)
\begin{claim}\label{lem:majorizing-aux2}
Let $s_0$ be an arbitrary point in $\mathcal{S}$.
For any $\delta\in(0,1)$
\begin{equation*}
\Pr\left[\exists s\in\mathcal{S}: Q(s_0)-Q(s)\geq \sum_{k>k_0} \xi_k(\tau^{\dag}_k\circ\tau_k (s))\right]<\delta.
\end{equation*}
\end{claim}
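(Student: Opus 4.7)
The plan is to mirror the proof of Claim~\ref{lem:majorizing-aux1} verbatim, exploiting the fact that the hypothesis in Equation~\eqref{equ:majorizing-ass} bounds $|Q(s)-Q(s')|$ rather than a signed difference and is therefore two-sided. I would begin by negating the chaining identity Equation~\eqref{equ:chaining} to obtain
\[
Q(s_0)-Q(s)=\sum_{k>k_0}\bigl(Q(\tau_{k-1}(s))-Q(\tau_k(s))\bigr),
\]
so that the event in the claim forces at least one single-link increment $Q(\tau_{k-1}(s))-Q(\tau_k(s))$ to exceed $\xi_k(\tau_k^{\dag}\circ\tau_k(s))$ for some $k>k_0$. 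After re-indexing this event over $t\in T_k$ and applying a union bound over $k$ and $t$, the task reduces to controlling $\Pr[\,Q(\tau_{k-1}(t))-Q(\tau_k(t))>\xi_k(\tau_k^{\dag}(t))\,]$ for each pair $(k,t)$.

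The only spot where the sign flip could matter is precisely here, and this is exactly where Equation~\eqref{equ:majorizing-ass} being two-sided saves us: the same sub-Gaussian tail $L\exp\bigl(-\xi_k(\tau_k^{\dag}(t))^2/(2\|\tau_k(t)-\tau_{k-1}(t)\|_2^2)\bigr)$ applies to either signed difference. From this point on, the derivation is identical to the one already spelled out for Claim~\ref{lem:majorizing-aux1}: invoke the diameter estimate $\|\tau_k(t)-\tau_{k-1}(t)\|_2\leq 2r^{-k+1}$ (both approximants lie in $\tau_{k-1}^{\dag}(t)$), substitute the definition of $\xi_k$ to extract the factor $\delta\,\mu(\tau_k^{\dag}(t))/2^{k-k_0}$, sum over $t\in T_k$ using that $\mu$ is a probability measure on $\mathcal{S}$, and sum the resulting geometric series $\sum_{k>k_0}2^{-(k-k_0)}=1$ to conclude.

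Since the proof is essentially a verbatim repetition, I do not expect any genuine obstacle. The only worthwhile sanity check is that no intermediate step secretly used the sign of the chained increment; inspection of the chaining telescoping, the union bound over $(k,t)$, the cancellation of the exponent via the definition of $\xi_k$, and the geometric summation shows that each ingredient is sign-agnostic. Consequently the argument closes with the identical overall bound of $\delta$, which is what the claim asserts.
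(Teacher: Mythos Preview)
Your proposal is correct and follows exactly the approach the paper takes: the paper simply remarks that the same procedure as in Claim~\ref{lem:majorizing-aux1} applies because the chaining identity still holds and the deviation bound in Equation~\eqref{equ:majorizing-ass} is two-sided. Your write-up spells this out in the expected way---negate the telescoping sum, invoke the symmetric tail, and rerun the union bound and geometric summation unchanged---so there is nothing to add.
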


The main proof continues here using these claims.

\begin{proof}[Proof of Lemma~\ref{lem:random-process-twosides}]
Using Claim~\ref{lem:majorizing-aux1}\&\ref{lem:majorizing-aux2}, and the union bound, we have
\begin{align*}
\Pr\left[\sup_{s\in\mathcal{S}} |Q(k)-Q(k_0)|> \sum_{k>k_0} \xi_k(\tau^{\dag}_k\circ\tau_k (s))\right] \leq \Pr\left[\exists s: |Q(k)-Q(k_0)|> \sum_{k>k_0} \xi_k(\tau^{\dag}_k\circ\tau_k (s))\right]\leq 2\delta.
\end{align*}
Further, note that
\begin{align*}
\sum_{k>k_0} \xi_k(\tau^{\dag}_k\circ\tau_k (s)) =& r\sum_{k>k_0} r^{-k}\sqrt{8\log\left(\frac{2^{k-k_0}L}{\mu(\tau^{\dag}_k\circ\tau_k (s))\delta}\right)} \\
\leq & r\sum_{k>k_0} r^{-k}\sqrt{8\log\left(\frac{2^{k-k_0}L}{\delta}\right)} + r\sum_{k>k_0} r^{-k}\sqrt{8\log\left(\frac{1}{\mu(\tau^{\dag}_k\circ\tau_k (s))}\right)} \\
= & r^{-k_0+1}\sum_{k>0} r^{-k}\sqrt{8\log\left(\frac{2^{k}L}{\delta}\right)} + r\sum_{k>k_0} r^{-k}\sqrt{8\log\left(\frac{1}{\mu(\tau^{\dag}_k\circ\tau_k (s))}\right)}\\
\leq & \sqrt{8}r^{-k_0+1}\sum_{k>0} \sqrt{k}r^{-k} \sqrt{\log\left(\frac{2L}{\delta}\right)} +\sqrt{8}rC G(\mathcal{S})\\
\leq & \sqrt{8}(r^2/2)\sum_{k>0} \sqrt{k}r^{-k+1} \diam(\mathcal{S})\sqrt{\log\left(\frac{2L}{\delta}\right)} + \sqrt{8}rC G(\mathcal{S}).
\end{align*}
Here, the first equality uses the definition of $\xi_k$ function. 
The first inequality uses $\sqrt{a+b}\leq \sqrt{a}+\sqrt{b}$ for $a,b\geq 0$.
The second inequality uses Lemma~\ref{lem:majorizing}.
The last inequality uses that by the definition of $k_0$ (in Definition~\ref{def:chaining}), $2r^{-k_0-1}<\diam(\mathcal{S})$, such that $r^{-k_0+1}<(r^2/2)\diam(\mathcal{S})$.
Let $C'=\sqrt{8}rC$ and $C''=\sqrt{8}(r^2/2)\sum_{k>0} \sqrt{k}r^{-k+1}$.
Since $s_0$ is an arbitrary point, we have shown that for any $s_0\in\mathcal{S}$,
\[
\Pr\left[\sup_{s\in\mathcal{S}} |Q(k)-Q(k_0)|>  C G(\mathcal{S}) + C'\diam(\mathcal{S}) \sqrt{\ln\frac{2L}{\delta}}\right]\leq 2\delta.
\] 

\end{proof}

\section{Properties of Gaussian width}
\label{app:gaussian-width}
\noindent
\paragraph{Property 1: $G(\mathcal{S})=G(\ominus\mathcal{S})$.}
\begin{align*}
G(\mathcal{S})=\mathbb{E}\left[\max_{a\in\mathcal{S}}\langle a,-\theta\rangle\right]=\mathbb{E}\left[\max_{a\in\mathcal{S}}\langle -a,\theta\rangle\right] =\mathbb{E}\left[\max_{a'\in\ominus\mathcal{S}}\langle a',\theta\rangle\right]=G(\ominus\mathcal{S}),
\end{align*}
where the first equality uses $-\theta$ and $\theta$ are identically distributed. The third equality uses for any $a\in\mathcal{S}$, it holds that $-a\in\ominus\mathcal{S}$.
\paragraph{Property 2: $G(\mathcal{S}_1\oplus\mathcal{S}_2)=G(\mathcal{S}_1)+G(\mathcal{S}_2)$.} Let $a^*(\mathcal{S},\theta):=\argmax_{a\in\mathcal{S}}\langle a,\theta\rangle$ denote the optimal action in $\mathcal{S}$ given environment $\theta$.
By definition, we have
\[
\langle a^*(\mathcal{S}_1\oplus\mathcal{S}_2,\theta),\theta\rangle \geq \langle a+a',\theta\rangle\quad\forall a\in\mathcal{S}_1,a'\in\mathcal{S}_2. 
\]
Since $a^*(\mathcal{S}_1,\theta)\in\mathcal{S}_1$ and $a^*(\mathcal{S}_2,\theta)\in\mathcal{S}_2$, we have
\[
\langle a^*(\mathcal{S}_1\oplus\mathcal{S}_2,\theta),\theta\rangle \geq \langle a^*(\mathcal{S}_1,\theta) + a^*(\mathcal{S}_2,\theta),\theta\rangle. 
\]

Also, since $a^*(\mathcal{S}_1\oplus\mathcal{S}_2,\theta)\in\mathcal{S}_1\oplus\mathcal{S}_2$, there exists $a_1\in\mathcal{S}_1$ and $a_2\in\mathcal{S}_2$ such that $a^*(\mathcal{S}_1\oplus\mathcal{S}_2,\theta)=a_1+a_2$. By definition, we have 
\[
\langle a^*(\mathcal{S}_1,\theta),\theta\rangle \geq \langle a_1,\theta\rangle,\quad\langle a^*(\mathcal{S}_2,\theta),\theta\rangle \geq \langle a_2,\theta\rangle,
\]
such that
\[
\langle a^*(\mathcal{S}_1\oplus\mathcal{S}_2,\theta),\theta\rangle = \langle a_1+a_2,\theta\rangle\leq \langle a^*(\mathcal{S}_1,\theta),\theta\rangle + \langle a^*(\mathcal{S}_2,\theta),\theta\rangle.
\]

Combining both, we have 
\[
\langle a^*(\mathcal{S}_1\oplus\mathcal{S}_2,\theta),\theta\rangle = \langle a^*(\mathcal{S}_1,\theta) + \langle a^*(\mathcal{S}_2,\theta),\theta\rangle.
\]

Therefore, we have
\begin{align*}
G(\mathcal{S}_1\oplus\mathcal{S}_2)=&\mathbb{E}\left[\langle a^*(\mathcal{S}_1\oplus\mathcal{S}_2,\theta),\theta\rangle\right]
= \mathbb{E}\left[\langle a^*(\mathcal{S}_1,\theta)+a^*(\mathcal{S}_2,\theta),\theta\rangle\right]
= G(\mathcal{S}_1)+G(\mathcal{S}_2).
\end{align*}

\paragraph{Property 3: $G(\mathcal{S})\leq \frac{\sqrt{n}}{2}\diam(\mathcal{S})$.}
\begin{align*}
G(\mathcal{S})= &\frac{1}{2}\left(G(\mathcal{S})+G(\ominus\mathcal{S}) \right) \\
= &\frac{1}{2}\left(G(\mathcal{S}\ominus\mathcal{S}) \right) \\
\leq& \frac{1}{2}\mathbb{E}\max_{a,a'\in\mathcal{S}}\|\theta\|_2\|a-a'\|_2 \\
\leq&\frac{1}{2}\mathbb{E}\diam(\mathcal{S}) \|\theta\|_2 \leq \frac{\sqrt{n}}{2}\diam(\mathcal{S}),
\end{align*}
where the first equality follows from Property 1, and the second from Property 2.
The first inequality uses the Cauchy–Schwarz inequality, the second uses the definition of the diameter, and the last uses $\mathbb{E}\|\theta\|_2\leq \sqrt{n}$.

\paragraph{Property 4: $G(\mathcal{S})\geq\frac{1}{\sqrt{2\pi}}\diam(\mathcal{S})$.}
\begin{align*}
G(\mathcal{S})=& \frac{1}{2}G(\mathcal{S}\ominus\mathcal{S})\\
=& \frac{1}{2} \mathbb{E}\max_{a,b\in\mathcal{S}}\langle a-b,\theta\rangle\\
\geq & \frac{1}{2} \mathbb{E}\max\left\{\langle a-b,\theta\rangle,\langle b-a,\theta\rangle\right\}\\
= & \frac{1}{2} \mathbb{E}\max\left|\langle a-b,\theta\rangle\right|\\
= & \frac{1}{2}\sqrt{\frac{2}{\pi}}\|a-b\|_2,
\end{align*}
where the first equality uses Property~1\&2. The inequality fix arbitrary $a,b\in\mathcal{S}$ and uses that $a-b,b-a\in\mathcal{S}\ominus\mathcal{S}$. The last equality uses that $\langle a-b,\theta\rangle\sim\mathcal{N}(0,\|a-b\|_2^2)$, and that $E|\theta|=\sqrt{\frac{2}{\pi}}$ for $\theta\sim\mathcal{N}(0,1)$.
Taking the supremum over all $a,b\in\mathcal{S}$ gives the result.

\paragraph{Property 5: $G(A\mathcal{S})\leq \|A\|G(\mathcal{S})$, where $A\in\mathbb{R}^{n\times m}$.}

To prove this, we need the well-known Sudakov-Fernique's inequality:
\begin{lemma}[Sudakov-Fernique's inequality]\label{lem:Sudakov-Fernique}
Let $\{X_s\}_{s\in\mathcal{S}}$ and $\{Y_s\}_{s\in\mathcal{S}}$ be two mean zero Gaussian processes. If for all $t,s\in\mathcal{S}$, we have 
\(
\mathbb{E}(X_t-X_s)^2\leq \mathbb{E}(Y_t-Y_s)^2.
\)
Then,
\(
\mathbb{E}\sup_{s\in\mathcal{S}}X_s\leq \mathbb{E}\sup_{t\in\mathcal{S}}Y_t.
\)
\end{lemma}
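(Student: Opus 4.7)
The plan is to prove Sudakov--Fernique by the standard Gaussian interpolation (``smart path'') argument combined with a smooth approximation of the maximum. First I would reduce to the case where $\mathcal{S}$ is finite: a general $\mathcal{S}$ can be exhausted by an increasing sequence of finite subsets $\mathcal{S}_k \uparrow \mathcal{S}$, and the conclusion for $\mathcal{S}$ follows from the uniform bound on the finite subsets by monotone convergence. On the finite side, index $\mathcal{S}=\{s_1,\dots,s_N\}$ and assume that $(X_{s_i})$ and $(Y_{s_i})$ are realised on a common probability space as independent Gaussian vectors; replacing them by independent copies leaves the marginal laws, and hence the expected suprema, unchanged.

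Next, since $z\mapsto\max_i z_i$ is not smooth, I would approximate it by the log-sum-exp softmax $F_\beta(z):=\beta^{-1}\log\sum_{i=1}^N e^{\beta z_i}$, which sandwiches the maximum as $\max_i z_i\le F_\beta(z)\le \max_i z_i+\beta^{-1}\log N$. Its first derivatives are the Gibbs weights $p_i(z)=e^{\beta z_i}/\sum_j e^{\beta z_j}$, and its Hessian is $\partial^2_{ij}F_\beta(z)=\beta\bigl(p_i\delta_{ij}-p_ip_j\bigr)$; note in particular that $\sum_j\partial^2_{ij}F_\beta(z)=0$. Define the interpolated Gaussian vector $Z_i(t):=\sqrt{t}\,X_{s_i}+\sqrt{1-t}\,Y_{s_i}$ for $t\in[0,1]$ and set $\Phi(t):=\mathbb{E}[F_\beta(Z(t))]$, so that $\Phi(0)=\mathbb{E}[F_\beta(Y)]$ and $\Phi(1)=\mathbb{E}[F_\beta(X)]$. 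The problem reduces to proving $\Phi'(t)\le 0$ on $(0,1)$.

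The heart of the proof is a computation of $\Phi'(t)$ via Gaussian integration by parts (Stein's lemma: for a centered Gaussian vector $G$ and smooth $\psi$ of polynomial growth, $\mathbb{E}[G_j\psi(G)]=\sum_k\mathbb{E}[G_jG_k]\,\mathbb{E}[\partial_k\psi(G)]$). Differentiating $\Phi$ under the expectation gives $\Phi'(t)=\tfrac12\sum_i\mathbb{E}\bigl[(X_{s_i}/\sqrt{t}-Y_{s_i}/\sqrt{1-t})\,\partial_iF_\beta(Z(t))\bigr]$; applying Stein's lemma to each term, invoking independence of $X$ and $Y$, and then using the covariance-to-increment identity $\mathbb{E}X_{s_i}X_{s_j}=-\tfrac12\mathbb{E}(X_{s_i}-X_{s_j})^2+\tfrac12(\mathbb{E}X_{s_i}^2+\mathbb{E}X_{s_j}^2)$ (and likewise for $Y$) yields
\begin{equation*}
\Phi'(t)=-\tfrac14\sum_{i,j}\bigl(\mathbb{E}(X_{s_i}-X_{s_j})^2-\mathbb{E}(Y_{s_i}-Y_{s_j})^2\bigr)\,\mathbb{E}\bigl[\partial^2_{ij}F_\beta(Z(t))\bigr],
\end{equation*}
where the diagonal contributions $\mathbb{E}X_{s_i}^2-\mathbb{E}Y_{s_i}^2$ are killed by the row-sum identity $\sum_j\partial^2_{ij}F_\beta=0$. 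For $i=j$ the increment bracket vanishes; for $i\neq j$, $\partial^2_{ij}F_\beta=-\beta p_ip_j\le 0$ pointwise and, by hypothesis, $\mathbb{E}(X_{s_i}-X_{s_j})^2-\mathbb{E}(Y_{s_i}-Y_{s_j})^2\le 0$, so each summand on the right is nonpositive. Hence $\Phi'(t)\le 0$, which gives $\mathbb{E}[F_\beta(X)]\le\mathbb{E}[F_\beta(Y)]$; sending $\beta\to\infty$ and using the sandwich for $F_\beta$ erases the $\beta^{-1}\log N$ correction and yields $\mathbb{E}\max_i X_{s_i}\le\mathbb{E}\max_i Y_{s_i}$, completing the proof after the monotone passage to general $\mathcal{S}$. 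The main obstacle is the bookkeeping in the integration-by-parts step and the symmetrisation that converts covariances into pairwise increments; once the interpolation and smoothing are in place, the final sign analysis is essentially forced by the hypothesis and the sign pattern of the softmax Hessian.
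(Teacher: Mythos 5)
Your proof is correct. Note that the paper itself does not prove this lemma at all---it is stated as a ``well-known'' result and used as a black box in the proof of Property~5 of the Gaussian width---so there is no in-paper argument to compare against. What you have written is the standard Gaussian-interpolation (smart path) proof, as in \citet[Theorem~7.2.11]{vershynin2018high}: the reduction to finite index sets, the log-sum-exp smoothing, the interpolation $Z(t)=\sqrt{t}\,X+\sqrt{1-t}\,Y$, Gaussian integration by parts, the cancellation of the diagonal variance terms via the row-sum identity $\sum_j\partial^2_{ij}F_\beta=0$, and the sign analysis using $\partial^2_{ij}F_\beta=-\beta p_ip_j\le 0$ for $i\neq j$ are all carried out correctly, and the $\sqrt{t}$, $\sqrt{1-t}$ factors indeed cancel after Stein's lemma so that $\Phi'$ is bounded on $(0,1)$. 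The only points you would need to make explicit in a fully rigorous write-up are routine: justification of differentiation under the expectation, continuity of $\Phi$ at the endpoints $t=0,1$, and (for genuinely infinite $\mathcal{S}$) the convention that $\mathbb{E}\sup$ is the supremum over finite subsets, which is what licenses the monotone passage at the end. None of these is a gap in the argument.
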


Let $\{X_s\}_{s\in\mathcal{S}}$ be a canonical Gaussian process.
We consider two processes $\{X_{As}\}_{s\in\mathcal{S}}$ and $\{\|A\|X_{s}\}_{s\in\mathcal{S}}$. Note that by the definition of operator norm $\|A\|$, we have
\[
\left(\mathbb{E}(X_{At}-X_{As})^2\right)^{1/2}=\|At-As\|_2 \leq \|A\|\|t-s\|_2=\left(\mathbb{E}(\|A\|X_t-\|A\|X_s)^2\right)^{1/2}.
\]
By Lemma~\ref{lem:Sudakov-Fernique}, we have that $G(A\mathcal{S})\leq \|A\|G(\mathcal{S})$.

\section{Squared Version of Gaussian Width}
\label{app:squared-Gaussian-width}

Let \(\theta \sim \mathcal{N}(0, I)\). Given a set \(\mathcal{S}\), the \textbf{squared version of the Gaussian width} satisfies
\[
\mathbb{E}\left[\left(\max_{a\in\mathcal{S}\ominus\mathcal{S}}\langle a, \theta\rangle\right)^2\right]
\leq \bigl(G(\mathcal{S}\ominus\mathcal{S})\bigr)^2 + 4\,\diam(\mathcal{S}).
\]

\begin{proof}
Let $X:=\max_{a\in\mathcal{S}}\langle a, \theta\rangle-\mathbb{E}\max_{a\in\mathcal{S}}\langle a, \theta\rangle$.
By Lemma~\ref{lem:Borell-TIS}:
Then for $u>0$ and $\varepsilon^2:=\sup_{a\in\mathcal{S}}\mathbb{E}(\langle a, \theta\rangle)^2$ we have 
\begin{equation*}
\Pr\left[\left|X \right|\geq u\right]\leq 2\exp{\left(-\frac{u^2}{2\epsilon^2}\right)}.
\end{equation*}
Then, uses integrated tail formula of expectation
\begin{align}
\mathbb{E}|X|^2=&\int^{\infty}_{0} \Pr\left[\left|X \right|^2\geq u\right] du=\int^{\infty}_{0} \Pr\left[\left|X \right|\geq u\right] 2t\;dt \leq 4\int^{\infty}_{0} t\exp{\left(-\frac{t^2}{2\varepsilon^2}\right)} dt = 4 \varepsilon^2.
\label{equ:p-norm-bound}
\end{align}

Therefore, 
\begin{align*}
\mathbb{E}\left[\left(\max_{a\in\mathcal{S}}\langle a, \theta\rangle\right)^2\right] =& \left(\mathbb{E}\left[\max_{a\in\mathcal{S}}\langle a, \theta\rangle\right]\right)^2 + \text{Var}\left(\max_{a\in\mathcal{S}}\langle a, \theta\rangle\right) \\
=& (G(\mathcal{S}))^2+ \text{Var}\left(\max_{a\in\mathcal{S}}\langle a, \theta\rangle\right)\\
\leq & (G(\mathcal{S}))^2 + 4\sup_{a\in\mathcal{S}}\mathbb{E}(\langle a, \theta\rangle)^2,
\end{align*}
where the first equality uses \(\mathbb{E}[U^2] = (\mathbb{E}[U])^2 + \text{Var}(U)\). The second uses the definition of Gaussian width. The inequality uses the identity $\text{Var}\left(\max_{a\in\mathcal{S}}\langle a, \theta\rangle\right) = \mathbb{E}\left[|X|^2 \right]$ and Equation~\eqref{equ:p-norm-bound}.
Note that for the canonical Gaussian process, $\mathbb{E}(\langle a-a', \theta\rangle)^2=\|a-a'\|_2$, such that 
\[
\sup_{a\in\mathcal{S}\ominus\mathcal{S}}\mathbb{E}(\langle a, \theta\rangle)^2 = \sup_{a\in\mathcal{S},a'\in\mathcal{S}}\mathbb{E}(\langle a-a', \theta\rangle)^2 = \sup_{a\in\mathcal{S},a'\in\mathcal{S}}\|a-a'\|_2\leq \diam(\mathcal{S}).
\]
\end{proof}

\section{Bounds of Expected Maximum of Gaussian}
\label{app:bound-maxgaussian}
Let $X_1,\dots,X_N$ be $N$ random Gaussian variables (no necessarily independent) with zero mean and variance of marginals smaller than $\tau^2$, then
\begin{equation*}
\mathbb{E}\left[\max_{i=1,\dots,N} X_i\right]\leq \tau\sqrt{2\log N}.
\end{equation*}
\begin{proof}
for any $\delta>0$,
\begin{align*}
\mathbb{E}\left[\max_{i=1,\dots,N} X_i\right] =& \frac{1}{\delta} \mathbb{E}\left[\log \exp(\delta\max_{i=1,\dots,N} X_i)\right]
\leq \frac{1}{\delta}\log \mathbb{E}\left[ \exp(\delta\max_{i=1,\dots,N} X_i)\right]\\
= & \frac{1}{\delta}\log \mathbb{E}\left[ \max_{i=1,\dots,N}  \exp(\delta X_i)\right]
\leq  \frac{1}{\delta}\log \sum_{i=1}^N \mathbb{E}\left[\exp(\delta X_i)\right]\\
\leq & \frac{1}{\delta}\log \sum_{i=1}^N \exp(\tau^2\delta^2/2) = \frac{\log N}{\delta}+\frac{\tau^2\delta}{2},
\end{align*}
where the first inequality uses Jensen's inequality. Taking $\delta:=\sqrt{2(\log N)/\tau^2}$ yields the results. 
\end{proof}

\end{document}